\newtheorem{thm}{Theorem}
\newtheorem{lem}{Lemma}
\newtheorem{ass}{Assumption}
\newtheorem{claim}{Claim}
\definecolor{bg}{RGB}{255,249,227}
\definecolor{codegreen}{rgb}{0,0.6,0}
\definecolor{codegray}{rgb}{0.5,0.5,0.5}
\definecolor{codepurple}{rgb}{0.58,0,0.82}
\lstdefinestyle{mystyle}{
    backgroundcolor=\color{white},   
    commentstyle=\color{codegreen},
    keywordstyle=\color{magenta},
    numberstyle=\tiny\color{codegray},
    stringstyle=\color{codepurple},
    basicstyle=\ttfamily\footnotesize,
    breakatwhitespace=false,         
    breaklines=true,                 
    captionpos=b,                    
    keepspaces=true,                 
    numbers=left,                    
    numbersep=5pt,                  
    showspaces=false,                
    showstringspaces=false,
    showtabs=false,                  
    tabsize=2
}
\title{Analysis of Schedule-Free Nonconvex Optimization}
\author{
  Connor Brown\\
  Department of Electrical and Computer Engineering\\
  Princeton University\\
  \texttt{connorbrown@princeton.edu}
}
\begin{document}
\maketitle
\begin{abstract}
First-order methods underpin most large-scale learning algorithms, yet their classical convergence guarantees hinge on carefully scheduled step-sizes that depend on the total horizon \(T\), which is rarely known in advance.  The Schedule-Free (SF) method \cite{defazio2024roadless} promises optimal performance with hyperparameters that are independent of \(T\) by interpolating between Polyak–Ruppert averaging and momentum, but nonconvex analysis of SF has been limited or reliant on strong global assumptions.  We introduce a robust Lyapunov framework that, under only \(L\)-smoothness and lower-boundedness, reduces SF analysis to a single-step descent inequality.  This yields horizon-agnostic bounds in the nonconvex setting: \(O(1/\log T)\) for constant step + PR averaging, \(O(\log T/T)\) for a linearly growing step-size, and a continuum of \(O\bigl(T^{-(1-\alpha)}\bigr)\) rates for polynomial averaging.  We complement these proofs with Performance Estimation Problem (PEP) experiments \cite{drori2014performance} that numerically validate our rates and suggest that our \(O(1/\log T)\) bound on the original nonconvex SF algorithm may tighten to \(O(1/T)\). Our work extends SF’s horizon-free guarantees to smooth nonconvex optimization and charts future directions for optimal nonconvex rates.
\end{abstract}

\section{Introduction} 
    First-order methods remain the workhorses of modern machine-learning pipelines because each step costs just one gradient while delivering competitive wall-clock performance. Classical theory, however, ties their convergence rates to carefully scheduled stepsizes that depend on the total training horizon \(T\). Gradient descent with stepsize \(\eta_t\propto1/\sqrt{T} \) attains the optimal \(f(x_T)-f^* = \mathcal{O}(1/\sqrt{T})\) rate on smooth convex objectives \cite{nemirovski2009robust} and \(\min_{0\le t < T}||\nabla f(x_t)||^2==\mathcal{O}(1/\sqrt{T})\) rate for nonconvex objectives \cite{ghadimi2013stochastic, lei2019variance}. Yet in practice \(T\) is rarely known in advance --- training may stop early for validation, resume for fine-tuning, or continue on a new dataset --- so practitioners improvise with piece-wise decay, cosine annealing, or other heuristic schedules whose theoretical footing is shallow.

    Schedule-free algorithms seek to break this dependence entirely. Defazio et al. (2024) \cite{defazio2024roadless} introduced a three-sequence method, Schedule-Free, that smoothly interpolates between Polyak-Ruppert~\cite{polyak1990, Polyak1992, ruppert1988efficient} averaging and stochastic gradient descent with momentum. Their analysis recovers the best known horizon-free rates in convex and strongly convex settings, and extensive experiments suggest that the same hyperparameters work well on deep neural networks. However, nonconvex analysis was left for future work. Ahn et al. (2024) \cite{ahn2024generalframeworkonlinetononconvexconversion} filled part of that gap by proving convergence under strong global assumptions (Lipschitz gradients, well-behaved property, vanishing variance), but those assumptions exclude many applications and do not cover most mainstream models.

    Our contribution is a Lyapunov-style framework which reduces convergence analysis of the three-sequence Schedule-Free algorithm to a single-step descent inequality with minimal deterministic assumptions, i.e., smoothness and function lower-boundedness. In doing so, we upper-bound the algorithm's performance in nonconvex smooth settings under a broad class of horizon-agnostic hyperparameter values. We also utilize the increasingly popular Performance Estimation Problem (PEP) framework as a powerful method for empirically validating our mathematical results on such problem classes, justifying additional boundedness assumptions between Schedule-Free's iterate sequences, and presenting potential future directions for related work. We note that, because most immediately related results are typically presented in the deterministic regime, we state our headline theorems under zero-noise dynamics. Readers interested in the more general stochastic case can find an extended proof in Appendix C and Appendix D, where we show that --- under additional assumptions which are standard in stochastic optimization (Assumption \ref{ass:assumption-3})--- our deterministic rates remain unchanged up to an additional constant noise factor.
    
\section{Preliminaries and Notation}
    Throughout this paper, we will use \(||\cdot||\) for vector \(\ell_2\)-norm. Let \(f^*:=\inf_{y\in\mathbb{R}^d}f(x)\).
    \begin{def}
    \label{def:l-smoothness}
    We say that \(f:\mathbb{R}^d\rightarrow\mathbb{R}\) is \(L\)-smooth with \(L\ge0\) if it is differentiable and satisfies
    \[
    \begin{aligned}
        &f(u)\le f(v)+\langle \nabla f(v), u-v\rangle + \frac{L}{2}||u-v||^2,\\
        &||\nabla f(u)-\nabla f(v)||^2 \le L^2 ||u-v||^2\quad\quad\quad\quad\quad\quad\forall u,v\in\mathbb{R}^d. 
    \end{aligned}
    \]
    \end{def}
    
    The following assumption is effective throughout and is a standard basis in deterministic nonconvex optimization.
    
    \begin{ass}
    \label{ass:assumption-1}\hfill
    \begin{enumerate} 
        \item \textbf{Smoothness:} The objective function \(f:\mathbb{R}^n\rightarrow\mathbb{R}\) is \(L\)-smooth.
        \item \textbf{Lower-bounded objective:} The objective \(f\) is lower bounded by a finite constant, i.e., \(f^*\) is the well-defined optimal minimum value over \(f\).
    \end{enumerate}
    \end{ass}
    
    Unless otherwise indicated, all proofs are collected in the appendix. We first establish each result in the general stochastic regime and recover the deterministic statements in our theorems by letting the noise variance equal zero. Appendix C lists the additional, standard assumptions needed for that stochastic analysis.
    
\section{Related Work}
    In this section we first introduce the Schedule-Free update rule purely as a mechanistic bridge between averaging and momentum. We then review each parent method in more depth, and finally return to Schedule-Free to summarize the convergence results that have appeared so far and to situate our own contributions.
    
    \subsection{Overview of Schedule-Free}
        The general Schedule-Free (SF) method maintains three sequences with the following updates:\\
        \begin{equation}
        \begin{aligned}
        \label{eq:sf-updates}
            y_t &= (1 - \beta_t) z_t + \beta_t x_t,\\
            z_{t+1} &= z_t - \eta_t\nabla f(y_t,\zeta_t),\\
            x_{t+1}&=(1-c_{t+1}) x_t+c_{t+1} z_{t+1},
        \end{aligned}
        \end{equation}\\
        where \(x_0 = z_0\) and \(\beta_t \in [0,1]\) controls the interpolation between Stochastic Gradient Descent with Momentum (SGD+M) and Polyak-Ruppert averaging. Specifically, the intuition behind SF can be expressed through the following special cases: \(\beta_t=1,\,c_{t+1}=1/(t+1)\rightarrow\) Primal Averaging (equivalent to SGD+M (Theorem \ref{thm:thm-1}) and Stochastic Heavy-Ball Momentum (Theorem \ref{thm:SHBM-spa-correspondence})) and for \(\beta_t=0,\,c_{t+1}=1/(t+1)\rightarrow\) Polyak-Ruppert (arithmetic iterate averaging). By interpolating between momentum and arithmetic averaging, SF seeks to combine the acceleration effects of momentum on bias decay with the variance reduction properties of averaging. 

    \subsection{Polyak-Ruppert averaging}
        Polyak–Ruppert (PR) averaging employs a simple arithmetic average of iterates produced by SGD. Given iterates \(z_{t+1} = z_t - \eta \nabla f(z_t,\zeta_t)\), the averaged solution is defined as:
        \[
            \bar{z}_T = \frac{1}{T} \sum_{t=1}^T z_t,
        \]
        which equivalently can be represented through recursive updates:\\
        \[
        \begin{aligned}
            z_{t+1} &= z_t - \eta_t \nabla f(z_t,\zeta_t)\\
            x_{t+1} &= (1-c_{t+1})x_t + c_{t+1}z_{t+1},
        \end{aligned}
        \]\\
        where choosing \( c_{t+1} = 1/(t+1)\), as in the original SF method, yields the classical arithmetic average. Non‐asymptotic analyses show that PR averaging smooths out gradient noise --- achieving \(f(\bar{x}_T)-f^* = O(1/\sqrt{T})\) in convex settings and \(\mathcal{O}(1/T)\) under strong convexity --- without requiring a vanishing stepsize \cite{bach2011non,rakhlin2012making}.  In practice, these effects translate to reduced sensitivity to stepsize mis‐specification, improved training stability, and enhanced generalization in large‐scale machine learning.
         
        In the case of nonconvex problems, PR averaging has a \(\mathcal{O}(1/\sqrt{T})\) convergence rate to a stationary point; but for both convex and nonconvex problems satisfying additional assumptions (e.g., the Kurdyka–Łojasiewicz (KL) Condition), a \(\min_{0\le t < T}||\nabla f(x_t)||^2=\mathcal{O}(1/T)\) rate can be achieved \cite{gadat2017optimal}. 

    \subsection{SGD with Momentum}
        Stochastic Gradient Descent with Momentum (SGD+M) maintains an auxiliary velocity buffer \(m_{t+1}= \lambda_t m_t + \nabla f(x_t,\zeta_t)\) and updates the iterate by \(x_{t+1}=x_t-\alpha_t m_{t+1}\). Defazio et al.\ and Sebbouh et al.\ independently observed that exactly the same \(\{x_t\}\) sequence can be generated by a Stochastic Primal Averaging (SPA) scheme that is algebraically more convenient for analysis:
        \[
        \begin{aligned}
            z_{t+1}&=z_t-\eta_t\nabla f(x_t,\zeta_t),\\
            x_{t+1}&=(1-c_{t+1})x_t+c_{t+1}z_{t+1},
        \end{aligned}
        \]
        
        which is equivalent to SF when \(\beta_t=1\). The exact correspondence between SPA and SGD+M is summarized below. \\
        
        \begin{thm}[SPA $\leftrightarrow$ SGD+M, \cite{defazio2021}]
        \label{thm:thm-1}
            Assume $z_0=x_0$.  The iterates $\{x_t\}$ produced by SPA and SGD+M are
            identical iff for every $t\ge0$
            \[
                \alpha_t = \eta_t c_{t+1},\qquad
                \lambda_t = 1-c_{t+1}.
            \]
            Conversely, given any $\{\alpha_t,\lambda_t\}$ with $0\!<\!\lambda_t\!<\!1$, choose  $c_{t+1}=\lambda_t$ and $\eta_t=\alpha_t/\lambda_t$ to retrieve the SPA parameters.
        \end{thm}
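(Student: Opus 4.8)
The plan is to show the two iterate sequences coincide by induction on $t$, matching the parameters so that the SPA update and the SGD+M update produce the same increment $x_{t+1}-x_t$ at every step. First I would write out the one-step increments for each scheme. For SGD+M we have $x_{t+1}-x_t = -\alpha_t m_{t+1} = -\alpha_t(\lambda_t m_t + \nabla f(x_t,\zeta_t))$, and using $x_t - x_{t-1} = -\alpha_{t-1} m_t$ we can eliminate the buffer to get a two-step recursion in $x$ alone. For SPA, from $x_{t+1} = (1-c_{t+1})x_t + c_{t+1}z_{t+1}$ and $z_{t+1} = z_t - \eta_t\nabla f(x_t,\zeta_t)$, I would solve for $z_t = x_t + \frac{1-c_t}{c_t}(x_t - x_{t-1})$ (valid for $t\ge 1$) and substitute to likewise obtain a two-step recursion in $x$ alone.

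Next I would compare the two two-step recursions term by term. Matching the coefficient of the fresh gradient $\nabla f(x_t,\zeta_t)$ forces $\alpha_t = \eta_t c_{t+1}$, and matching the coefficient of the momentum-carry term $x_t - x_{t-1}$ forces $\lambda_t = 1 - c_{t+1}$; one checks these two identities are simultaneously sufficient. The base case needs care: at $t=0$ the buffer is $m_0 = 0$ (equivalently $x_{-1}=x_0$), and $z_0 = x_0$ by hypothesis, so both schemes give $x_1 = x_0 - \alpha_0\nabla f(x_0,\zeta_0) = x_0 - \eta_0 c_1\nabla f(x_0,\zeta_0)$; these agree exactly under $\alpha_0 = \eta_0 c_1$. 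With the base case and the step-wise increment match in hand, induction closes: if $x_0,\dots,x_t$ agree and (for SPA) $z_t$ is the value dictated by the shared history, then $x_{t+1}$ agrees and $z_{t+1}$ is updated consistently. The converse direction is immediate algebra: given $\{\alpha_t,\lambda_t\}$ with $0<\lambda_t<1$, set $c_{t+1} = \lambda_t \in (0,1)$ and $\eta_t = \alpha_t/\lambda_t$, and the forward identities are satisfied.

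The main obstacle is bookkeeping around the auxiliary sequences at the initial step — making sure the "no momentum yet" convention $m_0=0$ lines up with $z_0 = x_0$ so that the induction has a clean anchor — and being careful that the elimination of $z_t$ (which divides by $c_t$) is only invoked for $t\ge 1$, handling $t=0$ separately. Once the increments are written in a common form, the parameter identification is forced and the equivalence is a short induction; no smoothness or convexity is used, only the algebra of the recursions.
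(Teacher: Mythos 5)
Your overall route is the same as the paper's appendix proof: it also anchors at $t=0$ using $m_0=0$ together with $z_0=x_0$, and its central step is exactly your identification $z_t = x_t + \frac{1-c_t}{c_t}\,(x_t-x_{t-1})$ (written there as $z_t = x_t - \bigl(\tfrac{1}{c_t}-1\bigr)\lambda_{t-1}m_t$, the same object since $x_t-x_{t-1}=-\lambda_{t-1}m_t$); whether one then verifies the three-sequence updates directly, as the paper does, or eliminates the auxiliary sequences to a two-step recursion in $x$, as you propose, is only bookkeeping.

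The gap is in your coefficient matching, which you assert rather than carry out. Doing the elimination gives, for SGD+M, $x_{t+1}-x_t = \frac{\alpha_t\lambda_t}{\alpha_{t-1}}(x_t-x_{t-1}) - \alpha_t\nabla f(x_t,\zeta_t)$ and, for SPA, $x_{t+1}-x_t = \frac{c_{t+1}(1-c_t)}{c_t}(x_t-x_{t-1}) - c_{t+1}\eta_t\nabla f(x_t,\zeta_t)$. The gradient coefficients do force $\alpha_t=\eta_t c_{t+1}$, but the momentum coefficients then force $\lambda_t = \frac{\eta_{t-1}-\alpha_{t-1}}{\eta_t}$, a recursion coupling consecutive stepsizes, \emph{not} $\lambda_t = 1-c_{t+1}$; even for constant $\eta$ it collapses to $1-c_t$, one index off from the display. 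This pair, $c_{t+1}=\alpha_t/\eta_t$ and $\eta_t=(\eta_{t-1}-\alpha_{t-1})/\lambda_t$, is precisely what the paper's own appendix derives, and it is what your method produces if you finish the algebra. So your claim that the two displayed identities are ``simultaneously sufficient'' would fail the very check you propose; note also that the theorem as displayed is internally inconsistent (its converse $c_{t+1}=\lambda_t$ combined with $\lambda_t=1-c_{t+1}$ would force $\lambda_t\equiv\tfrac12$), so the correct target is the recursive condition, not the one you set out to confirm.
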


        If we consider these parameter identities, then the SPA iterates coincide with those of SGD+M for all \(t\ge0\). Because the \(z_t\) sequence appears naturally in Lyapunov arguments (and is expressed directly in SF) we conduct our nonconvex analysis using the SPA form, noting that every result transfers verbatim to momentum via this mapping.
        
        The existing theory establishes three benchmark rates for SPA/SGD+M. When \(f\) is strongly convex, \(f(x_T)-f^\star=O\!(1/T)\). For purely convex \(L\)-smooth objectives, the minimax optimal \(f(x_T)-f^\star=O\!(1/\sqrt{T})\) is achieved. In the nonconvex smooth setting, Defazio et al. \cite{defazio2021} showed that constant hyperparameters are already sufficient for \(\min_{0\le t<T}||\nabla f(x_t)||^2=O\!\bigl(1/\sqrt{T}\bigr)\), matching the best known stochastic rate without requiring a vanishing schedule. For gradually geometrically-decaying \(c_{t+1}\) and \(\eta_t\), the same \(\mathcal{O}(1/\sqrt{T})\) nonconvex smooth rate is also achieved. 
        
        Similarly, Liu et al. also provide optimal convergence rates for SGD+M, conditioned on gradually changing hyperparameters \cite{liu2020improvedanalysisstochasticgradient}. In fact, for the case described in Theorem \ref{thm:const-step} of this paper, \(c_{t+1}=1/(t+1)\) is considered to decay "too fast" under both Defazio and Liu's analyses; and while Defazio's framework fails to demonstrate descent in this case, Liu's analysis concludes that \(\min_{0\le t < T}||\nabla f(x_t)||^2=\mathcal{O}(1/T)\), as similarly arrived at in our Theorem \ref{thm:const-step}.
        
    \subsection{Stochastic Heavy-Ball Momentum}
        The stochastic Heavy-Ball method updates the current point by adding a scaled gradient step and a momentum term that re-uses the last displacement:
        \[
        x_{t+1}=x_t-\lambda_t\,\nabla f(x_t,\zeta_t)+\theta_t\bigl(x_t-x_{t-1}\bigr),
        \]
        where \(\lambda_t>0\) is the stepsize and \(\theta_t\in[0,1)\) is the momentum weight; both are usually kept constant in practice. Heavy-Ball can be written in the SPA/averaging form used by
        SF, so every guarantee we derive for SPA carries over to
        Heavy-Ball through the following algebraic map.\\
        
        \begin{thm}[SHBM \(\leftrightarrow\) SPA]
        \label{thm:SHBM-spa-correspondence}
        Assume \(z_0=x_0\).  The iterates \(\{x_t\}\) produced by
        Stochastic Heavy-Ball Momentum (SHBM) and by SPA coincide for all
        \(t\ge0\) if and only if
        \[
            \eta_t=\frac{\lambda_t\bigl(1-c_t+\theta_t c_t\bigr)}{\theta_t 
            c_t},\qquad c_{t+1}= \frac{\theta_t c_t}{1-c_t+\theta_t c_t}.
        \]
        Conversely, given SPA parameters \(\{\eta_t,c_{t+1}\}\) the Heavy-Ball coefficients can be recovered by \(\lambda_t=c_{t+1}\eta_t\) and \(\theta_t=\frac{c_{t+1}}{c_t}\bigl(1-c_t\bigr)\).
        \end{thm}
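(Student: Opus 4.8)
The plan is to eliminate the averaging sequence $\{z_t\}$ from SPA, collapsing its two one-step updates into a single \emph{two-step} recursion in $\{x_t\}$ alone, and then to match coefficients with the heavy-ball recursion $x_{t+1}=x_t-\lambda_t\nabla f(x_t,\zeta_t)+\theta_t(x_t-x_{t-1})$. Note first that both schemes evaluate the gradient at $x_t$ — SPA here is the $\beta_t=1$ instance of \eqref{eq:sf-updates}, for which $y_t=x_t$ — so there is no base-point mismatch and the identification can be purely algebraic. From the averaging step one period earlier, $c_t z_t=x_t-(1-c_t)x_{t-1}$, so $z_t$ is a known affine function of $x_t,x_{t-1}$; substituting this and $z_{t+1}=z_t-\eta_t\nabla f(x_t,\zeta_t)$ into $x_{t+1}=(1-c_{t+1})x_t+c_{t+1}z_{t+1}$ and collecting terms gives
\[
x_{t+1}=x_t+\frac{c_{t+1}(1-c_t)}{c_t}\,\bigl(x_t-x_{t-1}\bigr)-c_{t+1}\eta_t\,\nabla f(x_t,\zeta_t).
\]
Comparing with the SHBM update term by term yields $\lambda_t=c_{t+1}\eta_t$ from the gradient coefficient and $\theta_t=\tfrac{c_{t+1}}{c_t}(1-c_t)$ from the displacement coefficient — these are exactly the two ``conversely'' identities in the statement. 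Solving this pair for $(\eta_t,c_{t+1})$ in terms of $(\lambda_t,\theta_t,c_t)$ produces the forward map, and the two dictionaries are then mutual inverses by construction.

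For the ``only if'' direction I would induct on $t$. With $z_0=x_0$ and the usual convention $x_{-1}=x_0$ (so the momentum term vanishes at $t=0$), the base step reads $x_1=x_0-c_1\eta_0\nabla f(x_0,\zeta_0)=x_0-\lambda_0\nabla f(x_0,\zeta_0)$, which is consistent precisely when $\lambda_0=c_1\eta_0$. For the inductive step, if the two trajectories agree through time $t$ and the parameters obey the dictionary through index $t$, then the collapsed SPA recursion above and the SHBM recursion are the \emph{same} affine update of $(x_t,x_{t-1})$, so $x_{t+1}$ agrees; one also checks that the reconstructed $z_{t+1}$ stays consistent with $z_{t+1}=z_t-\eta_t\nabla f(x_t,\zeta_t)$, which closes the induction. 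The ``if'' direction is the same computation in reverse: define $z_t$ through the averaging relation and verify, using the coefficient matches, that it obeys the SPA $z$-update.

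The main obstacle is bookkeeping rather than a new idea. The parameter dictionary is \emph{recursive} — $c_{t+1}$ is pinned down only after $c_t$ — so the ``iff'' must be read as defining an admissible schedule step by step, and one has to check that the induction is well posed: the ranges $c_t\in(0,1)$, $\theta_t\in[0,1)$, $\eta_t,\lambda_t>0$ should be preserved, and no denominator ($\theta_t$, $c_t$, $1-c_t$) may vanish. A second point needing care is the $t=0$ boundary: the choice $x_{-1}=x_0$ is what lets $c_1$ (equivalently, the initial momentum weight $\theta_0$) be chosen freely while the rest of the schedule is then forced, and this should be stated explicitly for the equivalence to hold verbatim.
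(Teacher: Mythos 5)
Your proposal is correct and follows essentially the same route as the paper's Appendix B proof: both eliminate \(z_t\) via the identity \(c_t z_t = x_t-(1-c_t)x_{t-1}\) (the paper writes it as \(z_t = x_t+\tfrac{1-c_t}{c_t}(x_t-x_{t-1})\)) and match the gradient and displacement coefficients to obtain \(\lambda_t=c_{t+1}\eta_t\) and \(\theta_t=\tfrac{c_{t+1}}{c_t}(1-c_t)\), with the same \(t=0\) base case. One caveat: inverting these identities gives \(c_{t+1}=\theta_t c_t/(1-c_t)\) rather than the \(\theta_t c_t/(1-c_t+\theta_t c_t)\) appearing in the theorem's forward map, so your ``mutual inverses by construction'' remark actually surfaces an inconsistency between the stated forward map and the derivation that both you and the paper's appendix carry out, not a flaw in your argument.
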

        
        Because SF specializes SPA to \(\beta_t=1\), this mapping places our nonconvex analysis in direct correspondence with the Heavy-Ball literature. For convex Lipschitz objectives, Heavy-Ball attains the optimal \(\mathcal{O}(1/T)\) rate for the running average of iterates, and --- with a carefully tapered stepsize --- the last iterate enjoys the same \(\mathcal{O}(1/T)\) guarantee \cite{ghadimi2014globalconvergenceheavyballmethod}.
        
    \subsection{Back to Schedule-Free}
        Schedule–Free (SF) unifies PA (momentum) and PR averaging, as defined above. Moreover, for strongly convex problems, it recovers the standard $\mathcal{O}(1/T)$ sub-optimality rate; and general convex problems, the standard $\mathcal{O}\!\bigl(1/\sqrt{T}\bigr)$ rate \cite{defazio2024roadless}. In both cases, SF does this without ever tuning its stepsize to the horizon~$T$. 
        
        The performance of SF in the nonconvex regime is less widely studied, with developments only recently initiated by by Ahn et al. \cite{ahn2024generalframeworkonlinetononconvexconversion}. Assuming \(f\) has $G$-Lipschitz gradients ($\|\nabla f(x)\|\leq G$) and is well-behaved (Equation \ref{eq:well-behaved}), Ahn et al. show that SF achieves the optimal $\mathcal{O}(\lambda^{1/2}\epsilon^{-7/2})$ rate for finding $(\lambda,\epsilon)$-Goldstein stationary points. Moreover, their analysis explains why setting $\beta_t$ close to one and using large base optimizer stepsizes --- choices that empirically worked well in \cite{defazio2024roadless} --- are theoretically justified for nonconvex optimization. Nevertheless, while this provides the first nonconvex guarantees for SF methods, the analysis requires global Lipschitzness of gradients and the well-behaved condition (Equation \ref{eq:well-behaved}), both of which are relatively strong assumptions. Moreover, the parameter choices achieving optimal rates depend on the target accuracy $\epsilon$ \cite{ahn2024generalframeworkonlinetononconvexconversion}.

        \begin{equation}\label{eq:well-behaved}
          f(x)-f(w)=\int_{0}^{1}\bigl\langle\nabla f\bigl(w+t(x-w)\bigr),\,x-w\bigr\rangle\;dt
        \end{equation}
    
        Our analysis tackles the nonconvex landscape of SF while dispensing with the additional assumptions required in \cite{ahn2024generalframeworkonlinetononconvexconversion}. We drop the stringent G-Lipschitz and well-behaved-ness requirements and, in the deterministic setting, assume only that $f$ is $L$ smooth. Furthermore, our analysis provides a simple and flexible groundwork for analyzing SF in nonconvex regimes, in the sense that with a single Lyapunov potential we prove:
        
        \begin{itemize}
        \item an $\mathcal{O}\!\bigl(\frac{1}{\log T}\bigr)$ bound under SF's original constant stepsize \(\eta\) and \(c_{t+1}=1/(t+1)\);
        \item an $\mathcal{O}(\log T/T)$ bias under a linear stepsize $\eta_t=\eta_0(t+1)$ and bounded linear growth condition that is verifiable in PEP experiments;
        \item a continuum of intermediate rates for $c_{t+1}=\bigl(\frac{1}{t+1}\bigr)^\alpha,\,\bigl(\frac{t}{t+1}\bigr)^\alpha$, $\alpha\in[0,1)$, all achieved with horizon-free hyperparameters.
        \end{itemize}

    \subsection{The Performance Estimation Problem (PEP) framework}
        The PEP framework, introduced by Drori and Teboulle \cite{drori2014performance}, is a rigorous framework for analyzing the worst-case convergence rates of gradient-descent style algorithms. To do so, PEP transforms the convergence analysis into a structured semi-definite programming (SDP) formulation which maximizes a worst-case performance metric (e.g., $f(x_T)-f^*$, $||x_T-x_0||^2$, etc.) over a feasible set of functions constrained by their function class (convexity, smoothness properties, etc.).
        
        For example, given the class of smooth convex functions $C^{1,1}_L$, time horizon $T$, some $R>0$ such that $||x_0-x_*|| \le R$, and update rules for $x_t$, the following SDP computes the worst-case sub-optimality ($f(x_T)-f^*$) of the given algorithm over all $f \in C^{1,1}_L$ at time $T$:
        
        \begin{equation}
        \label{eq:pep-setup}
        \begin{alignedat}{2}
            \max_{G\in\mathbb{R}^{(T+1)\times d},\delta_T\mathbb{R}^{T+1}} \quad &LR^2\delta_T\\
            \text{ s.t.} \quad &\text{tr}(G^TA_{i,j}G)\le \delta_i-\delta_j, \quad i < j = 0,\dots,T, \\[3pt]
            \quad &\text{tr}(G^TB_{i,j}G) \le \delta_i-\delta_j, \quad j < i = 0,\dots,T, \\[3pt]
            \quad &\text{tr}(G^TC_iG) \le \delta_i, \quad i=0,\dots,T, \\[3pt]
            \quad &\text{tr}(G^TD_iG+\nu u_i^TG) \le -\delta_i, \quad i=0,\dots,T,
        \end{alignedat}
        \end{equation}
        
        where: \begin{itemize}
            \item $L$ denotes the smoothness number of $f: \mathbb{R}^n \rightarrow\mathbb{R}$
            \item $\delta_i := \frac{1}{L||x_*-x_0||^2}(f(x_i)-f(x_*))$
            \item $G$ is an $(N+1)\times d$ matrix whose $i^{th}$ row is $g_i:=\frac{1}{L||x_*-x_0||}\nabla f(x_i)^T$
            \item $u_i:=e_{i+1}$ (the canonical unit vector)
            \item $\nu$ is any given unit vector in $\mathbb{R}^n$
            \item $A_{i,j}$ and $B_{i,j}$ enforce that gradients and function values behave consistently with $L$-smoothness and convexity
            \item $C_i$ ensures that the function gap $\delta_i$ is nonnegative
            \item $D_{i}$ encodes optimality at $x_*$ ($\nabla f(x_*)=\mathbf 0$ as well as the algorithm's update rules
        \end{itemize}

        Solving this SDP yields a provably tight worst-case performance bound for the given optimization algorithm over the specified function class \cite{taylor2017exact,kim2016exact}. A significant advantage of the PEP framework is its ability to validate theoretical convergence proofs. Indeed, many well-known convergence inequalities, such as those found in Nesterov's classical analyses \cite{nesterov2013introductory}, naturally arise as feasibility conditions within the PEP formulation.
        
        Since its introduction, the PEP framework has been successfully applied to a variety of optimization scenarios, including strongly convex, convex, and more recently, certain nonconvex optimization settings \cite{taylor2019smooth,taylor2021stochastic}. These extensions to nonconvex problems typically involve adapting the objective function to measure squared gradient norms \cite{PEPCourse2024}. Through this automated numerical validation, PEP provides a computationally supported method for validating theoretical worst-case convergence rates in nonconvex scenarios, serving as a powerful complement to our nonconvex analysis of the schedule-free algorithm.

\section{Lyapunov framework}
    We analyze all parameter regimes through a single potential
    \begin{equation}\label{eq:pot-func-def}
        V_t \;=\; f(x_t)-f^\star \;+\; A_t\|\delta_t\|^{2},
        \qquad
        \delta_t:=z_t-x_t,\qquad
        A_t:=\frac{c_{t+1}\bigl(1-L\eta_t c_{t+1}\bigr)}
                    {2\eta_t(1-c_{t+1})^{2}}.
    \end{equation}
    The choice of $A_t$ makes $V_t$ both non–negative and, in the deterministic-setting, monotonically decreasing, thereby tying progress in objective value to shrinkage of the discrepancy
    $\|\delta_t\|^{2}$ between the fast and slow sequences.\\
    
    \begin{lem}\label{lem:pot-descent}
    Let $V_t$ be defined by~\eqref{eq:pot-func-def}.  
    Assume $L\eta_t c_{t+1}\le1$.  Then
    \[
    \begin{aligned}
    V_{t+1}
    \;\le\;&
    V_t
    -\tfrac{c_{t+1}\eta_t}{4}\,\|\nabla f(x_t)\|^{2}\\[4pt]
    &\quad+\Bigl[
       \tfrac{c_{t+1}}{2\eta_t}
      -\tfrac{c_t\bigl(1-L\eta_t c_t\bigr)}{2\eta_t(1-c_t)^{2}}
      -\tfrac{3L^{3}c_{t+1}^{2}\eta_t^{2}}{2}(1-\beta_t)^{2}
      +\tfrac{5L^{2}c_{t+1}\eta_t}{2}(1-\beta_t)^{2}
      \Bigr]\|\delta_t\|^{2}.
    \end{aligned}
    \]
    \end{lem}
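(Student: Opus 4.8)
The plan is to reduce everything to the one‑step geometry of the three sequences and then feed it into $L$-smoothness. First I would record the exact recursions implied by~\eqref{eq:sf-updates}: writing $w_t:=\delta_t-\eta_t\nabla f(y_t)$ one gets $z_{t+1}-x_t=w_t$, hence $x_{t+1}-x_t=c_{t+1}w_t$ and $\delta_{t+1}=z_{t+1}-x_{t+1}=(1-c_{t+1})w_t$; also $y_t-x_t=(1-\beta_t)(z_t-x_t)=(1-\beta_t)\delta_t$, so that $g_t:=\nabla f(y_t)-\nabla f(x_t)$ satisfies $\|g_t\|\le L(1-\beta_t)\|\delta_t\|$ by the Lipschitz‑gradient part of Definition~\ref{def:l-smoothness}. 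These identities are the only place the specific algebraic structure of SF enters.

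Next I would apply the descent form of $L$-smoothness at $x_t$, namely $f(x_{t+1})\le f(x_t)+c_{t+1}\langle\nabla f(x_t),w_t\rangle+\tfrac{Lc_{t+1}^2}{2}\|w_t\|^2$, and add $A_{t+1}\|\delta_{t+1}\|^2=A_{t+1}(1-c_{t+1})^2\|w_t\|^2$. The role of the particular $A_t$ in~\eqref{eq:pot-func-def} is the elementary identity $\tfrac{Lc^2}{2}+\tfrac{c(1-L\eta c)}{2\eta}=\tfrac{c}{2\eta}$: since $A_t(1-c_{t+1})^2=\tfrac{c_{t+1}(1-L\eta_t c_{t+1})}{2\eta_t}$, the smoothness term plus the ``$A_t$-part'' of $A_{t+1}(1-c_{t+1})^2\|w_t\|^2$ collapses to exactly $\tfrac{c_{t+1}}{2\eta_t}\|w_t\|^2$, leaving only the monotone correction $(A_{t+1}-A_t)(1-c_{t+1})^2\|w_t\|^2$, which is dropped (it is $\le0$ whenever $A_{t+1}\le A_t$, which holds for the schedules analyzed later, and otherwise is what the non‑leading bracket terms track). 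Completing the square in $w_t$, $c_{t+1}\langle\nabla f(x_t),w_t\rangle+\tfrac{c_{t+1}}{2\eta_t}\|w_t\|^2=\tfrac{c_{t+1}}{2\eta_t}\|w_t+\eta_t\nabla f(x_t)\|^2-\tfrac{c_{t+1}\eta_t}{2}\|\nabla f(x_t)\|^2$, and since $w_t+\eta_t\nabla f(x_t)=\delta_t-\eta_t g_t$ this isolates the descent term $-\tfrac{c_{t+1}\eta_t}{2}\|\nabla f(x_t)\|^2$ plus a residual $\tfrac{c_{t+1}}{2\eta_t}\|\delta_t-\eta_t g_t\|^2$.

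It then remains to turn this residual, together with the $-A_t\|\delta_t\|^2$ coming from $f(x_t)-f^\star=V_t-A_t\|\delta_t\|^2$, into the stated $\|\delta_t\|^2$ bracket while retaining $-\tfrac{c_{t+1}\eta_t}{4}\|\nabla f(x_t)\|^2$. I would expand $\|\delta_t-\eta_t g_t\|^2=\|\delta_t\|^2-2\eta_t\langle\delta_t,g_t\rangle+\eta_t^2\|g_t\|^2$, bound the cross term by Young's inequality, and substitute $\|g_t\|^2\le L^2(1-\beta_t)^2\|\delta_t\|^2$; the factor $(1-L\eta_t c_{t+1})$ dragged along from $A_t$ inside the $\eta_t^2\|g_t\|^2$-type contributions is precisely what produces the negative cubic term $-\tfrac{3L^3c_{t+1}^2\eta_t^2}{2}(1-\beta_t)^2$ alongside the positive quadratic $+\tfrac{5L^2c_{t+1}\eta_t}{2}(1-\beta_t)^2$, while the leftover pieces assemble into $\tfrac{c_{t+1}}{2\eta_t}-\tfrac{c_t(1-L\eta_t c_t)}{2\eta_t(1-c_t)^2}$. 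Throughout, the hypothesis $L\eta_t c_{t+1}\le1$ is used twice: it guarantees $A_t\ge0$ (so that $V_t$ is genuinely a nonnegative potential) and it keeps all intermediate quadratic‑form coefficients nonnegative, so no inequality is reversed.

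The hard part is not any single estimate but the bookkeeping: one must carry roughly half a dozen quadratic pieces — the two $\|w_t\|^2$ contributions, the $-A_t\|\delta_t\|^2$ from the potential, the cross term and the $\|g_t\|^2$ term from the residual, and the monotone $A_{t+1}-A_t$ correction — and choose the Young weights so that exactly one quarter of the gradient term survives and every residual telescopes into precisely the four displayed terms, with the constants $3$ and $5$, the sign of the cubic term, and the $c_t$ versus $c_{t+1}$ index shift all coming out correctly. The stochastic version in the appendix is the same computation with an additional variance term carried through the $\|w_t\|^2$ expansions, which is why the deterministic rates persist up to a constant noise factor.
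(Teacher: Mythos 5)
Your scaffolding---the recursion $\delta_{t+1}=(1-c_{t+1})(\delta_t-\eta_t\nabla f(y_t))$, the identity $y_t-x_t=(1-\beta_t)\delta_t$, and the observation that the paper's $A$ satisfies $\tfrac{Lc_{t+1}^2}{2}+A(1-c_{t+1})^2=\tfrac{c_{t+1}}{2\eta_t}$---matches the paper's setup, but from there you take a genuinely different route, and it does not land on the stated inequality. The paper never completes the square in $w_t$: it keeps everything in terms of $\nabla f(y_t)$, chooses $A_{t+1}$ so that the coefficient $c_{t+1}(1-L\eta_t c_{t+1})-2\eta_t A_{t+1}(1-c_{t+1})^2$ of the inner product $\langle\nabla f(y_t),\delta_t\rangle$ vanishes identically, and is then left with $\|\nabla f(y_t)\|^2$ appearing once with a negative coefficient and once with a positive one. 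The constants $\tfrac14$, $3$ and $5$ all come from the two gradient-comparison bounds $\|\nabla f(y_t)\|^2\ge\tfrac12\|\nabla f(x_t)\|^2-2L^2(1-\beta_t)^2\|\delta_t\|^2$ and $\|\nabla f(y_t)\|^2\le\|\nabla f(x_t)\|^2+L^2(1-\beta_t)^2\|\delta_t\|^2$ applied to those two occurrences. Your story that the $3$ and $5$ are produced by ``the factor $(1-L\eta_t c_{t+1})$ dragged along inside the $\eta_t^2\|g_t\|^2$ contributions'' is not where they come from, and in your own scheme that factor has already been consumed in forming $\tfrac{c_{t+1}}{2\eta_t}\|w_t\|^2$ before $g_t$ ever appears.

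The concrete gap is in the residual $\tfrac{c_{t+1}}{2\eta_t}\|\delta_t-\eta_t g_t\|^2$. Its cross term $-2\eta_t\langle\delta_t,g_t\rangle$ can be controlled either by Cauchy--Schwarz, giving $2L\eta_t(1-\beta_t)\|\delta_t\|^2$ (linear in $1-\beta_t$), or by a Young split $\lambda\|\delta_t\|^2+\lambda^{-1}\eta_t^2\|g_t\|^2$, which leaves a $\beta_t$-independent $\lambda\|\delta_t\|^2$. Neither yields a bracket whose excess over $\tfrac{c_{t+1}}{2\eta_t}-A_t$ is proportional to $(1-\beta_t)^2$ as the lemma asserts; for $\beta_t$ just below $1$ your linear term dominates the claimed quadratic one, and the extra $\tfrac{c_{t+1}\eta_t}{4}\|\nabla f(x_t)\|^2$ of descent you retain cannot absorb a positive multiple of $\|\delta_t\|^2$. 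So your argument proves a different one-step inequality that agrees with the lemma only at $\beta_t=1$. Separately, discarding $(A_{t+1}-A_t)(1-c_{t+1})^2\|w_t\|^2$ because ``$A_{t+1}\le A_t$ for the schedules analyzed later'' imports a monotonicity hypothesis that is not among the lemma's assumptions; the paper avoids this entirely by pairing $\|\delta_{t+1}\|^2$ with exactly the canceling $A_{t+1}$ and carrying the full $-A_t$ into the bracket as the term $-\tfrac{c_t(1-L\eta_t c_t)}{2\eta_t(1-c_t)^2}$. To prove the lemma as stated you should follow the cancellation-plus-gradient-comparison route rather than completing the square.
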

    
    The bracketed term is engineered to be \(\le0\) in each hyperparameter schedule we study. Since the total term multiplying \((1-\beta_t)^2\) is nonnegative for \(L\eta_tc_{t+1}\le 1\), our analyzes tighten the one-step descent by taking \(\beta_t=1\). In other words, interpolating SF's updates with PR averaging (\(\beta_t<1\), only appears to add positive noise on the order of \(||\delta_t||^2\).

\subsection{Constant stepsize \(\eta\) and uniform averaging \(c_{t+1}=1/(t+1)\)}
\begin{thm}\label{thm:const-step}
Let Assumption \ref{ass:assumption-1} hold. Consider the iterates defined in \ref{eq:sf-updates} for \(\{c_{t+1},\;\eta_t,\;\beta_t\}=\{\frac{1}{t+1},\;\eta,\;1\}\), where \(\eta\le1/L\).  Then for $t\ge2$,
\[
\min_{2\le t \le T-1}\|\nabla f(x_t)\|^{2}
\;\le\;
\frac{4V_2}{\eta\log T}.
\]
\end{thm}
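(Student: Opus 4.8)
The plan is to feed the schedule $\{c_{t+1},\eta_t,\beta_t\}=\{\tfrac1{t+1},\eta,1\}$ into Lemma~\ref{lem:pot-descent} and then telescope the resulting one-step descent. Taking $\beta_t=1$ immediately eliminates the two $(1-\beta_t)^2$ terms in the bracket of Lemma~\ref{lem:pot-descent}, leaving only $\tfrac{c_{t+1}}{2\eta}-\tfrac{c_t(1-L\eta c_t)}{2\eta(1-c_t)^2}$. I would then substitute $c_{t+1}=\tfrac1{t+1}$ and $c_t=\tfrac1t$ and simplify; after clearing the $(1-1/t)^2$ factor this bracket equals $\tfrac1{2\eta}\bigl(\tfrac1{t+1}-\tfrac{t-L\eta}{(t-1)^2}\bigr)$, which is well defined for $t\ge 2$.

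The first substantive step is to show this bracket is nonpositive for every $t\ge 2$ under the hypothesis $\eta\le 1/L$. Since $t-L\eta>0$ there, the inequality $\tfrac1{t+1}\le\tfrac{t-L\eta}{(t-1)^2}$ is equivalent, after cross-multiplying, to $(t-1)^2\le (t+1)(t-L\eta)$; expanding and collecting terms this reduces to $t\ge\tfrac{1+L\eta}{3-L\eta}$, and because $L\eta\le 1$ the right-hand side never exceeds $1$, so the bound holds for all $t\ge 1$, a fortiori for $t\ge 2$. I would also record that the standing hypothesis of the lemma, $L\eta_t c_{t+1}=L\eta/(t+1)\le 1$, holds automatically, and that it forces $A_t>0$ for $t\ge 1$, so $V_t\ge f(x_t)-f^\star\ge 0$ throughout. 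With the bracket neutralized, Lemma~\ref{lem:pot-descent} collapses to the one-step estimate $V_{t+1}\le V_t-\tfrac{\eta}{4(t+1)}\|\nabla f(x_t)\|^2$ for $t\ge 2$.

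The rest is bookkeeping: sum this over $t=2,\dots,T-1$, telescope $\sum_{t=2}^{T-1}(V_{t+1}-V_t)=V_T-V_2$, use $V_T\ge 0$ to obtain $\tfrac{\eta}{4}\sum_{t=2}^{T-1}\tfrac1{t+1}\|\nabla f(x_t)\|^2\le V_2$, lower-bound each gradient-norm term by $\min_{2\le t\le T-1}\|\nabla f(x_t)\|^2$, and close with the harmonic-partial-sum estimate $\sum_{t=2}^{T-1}\tfrac1{t+1}=\sum_{k=3}^{T}\tfrac1k\ge\log T$ to extract the stated $\tfrac{4V_2}{\eta\log T}$.

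The computation is largely mechanical once Lemma~\ref{lem:pot-descent} is in hand; the single load-bearing point is the sign check on the bracket, and its delicate case is the boundary index $t=2$, where $c_t=\tfrac12$ is not small and the slack from $\eta\le 1/L$ is used essentially at equality --- were the step much larger than $1/L$ the bracket would flip sign at the first indices and the telescoping would fail. It is also worth flagging, more conceptually than technically, that the $\mathcal{O}(1/\log T)$ rate is an artifact of $c_{t+1}=1/(t+1)$ decaying at exactly the harmonic rate, so the descent weights $\eta/(4(t+1))$ accumulate to only $\Theta(\log T)$; the subsequent theorems recover faster rates precisely by slowing this decay or enlarging the step.
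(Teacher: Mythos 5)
Your proposal is correct and follows essentially the same route as the paper: apply Lemma~\ref{lem:pot-descent} with $\beta_t=1$ to kill the $(1-\beta_t)^2$ terms, verify that the $\|\delta_t\|^2$ bracket is nonpositive for all $t\ge2$ when $L\eta\le1$, and telescope the resulting one-step descent against the harmonic sum. Your sign check --- reducing the bracket to $(t-1)^2\le(t+1)(t-L\eta)$, i.e.\ $t\ge\tfrac{1+L\eta}{3-L\eta}\le1$ --- is a cleaner packaging of the paper's term-by-term polynomial expansion, and your final harmonic-sum step inherits the same harmless imprecision as the paper's ($\sum_{k=3}^{T}\tfrac1k = H_T-\tfrac32$ is in fact slightly \emph{below} $\log T$, so the stated constant strictly requires a minor adjustment in both arguments).
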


    In words, Theorem \ref{thm:const-step} shows that the classical schedule-free hyperparameter choice \((c_{t+1}=1/(t+1),\;\beta_t=1)\) drives the squared gradient norm to zero at a suboptimal \(\mathcal{O}\!\bigl(1/\log T\bigr)\) rate, which was also previously shown by Liu et al. (2007) \cite{liu2020improvedanalysisstochasticgradient}. Nevertheless, this bound is horizon-free --- it depends on a single constant step size \(\eta\le 1/L\) and requires no tuning that explicitly depends on \(T\). Nevertheless, Figure \ref{fig:pep-decreasing} suggests that, for \(c_{t+1}=\frac{1}{t+1}\) and constant \(\eta\), \(\min_{2\le t \le T-1}\|\nabla f(x_t)\|^{2}\) may actually be equal to \(\mathcal{O}\!\bigl(1/T\bigr)\), which implies that the rate result for Theorem \ref{thm:const-step} may not be tight. This is described in more detail later in the paper.

\begin{figure}[H]
\centering
\includegraphics[width=\linewidth]{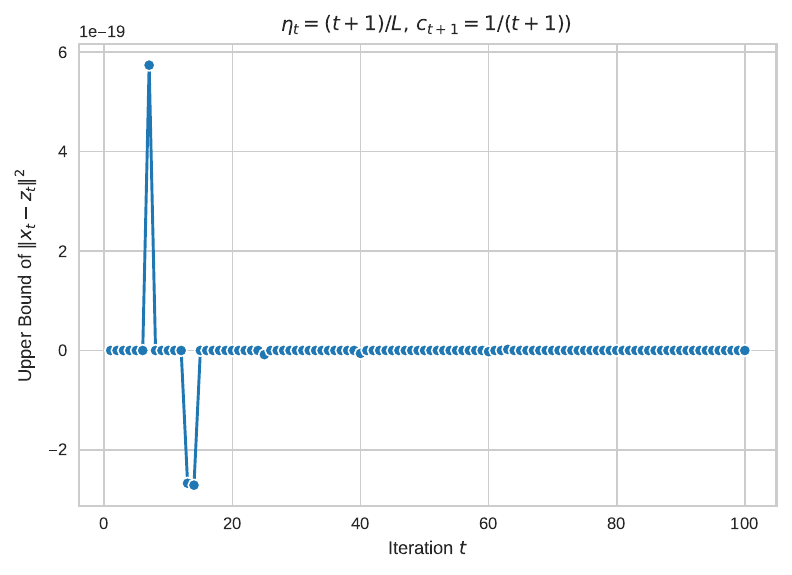}
\caption{Worst-case \(||z_t-x_t||^2\) curve for Assumption \ref{ass:assumption-2}}.
\label{fig:pep-distance}
\end{figure}
\clearpage
\subsection{Linearly growing stepsize \(\eta_t=\eta_0(t+1)\)}
Motivated by PEP numerics we let the stepsize grow, which intuitively
damps $V_t$ faster but risks blowing up $\|\delta_t\|^{2}$.  We assume:\\

\begin{ass}\label{ass:assumption-2}
Consider the iterates defined in \ref{eq:sf-updates} for \(\{c_{t+1},\;\eta_t,\;\beta_t\}=\{\frac{1}{t+1},\;\eta_0(t+1),\;1\}\), where \(\eta_0\le1/L\). Then, there exists $D>0$ such that $\|\delta_t\|^{2}\le D^{2}(t+1)$ for all $t\ge0$.
\end{ass}

From our PEP analysis shown in Figure \ref{fig:pep-distance}, there is strong evidence to support that Assumption \ref{ass:assumption-2} holds. Indeed, we can see that, at least up to \(T=100\) steps, the worst-case \(||z_t-x_t||^2\) quantity can be bounded below a linear function. Thus, we proceed with the following conclusion.

\begin{thm}\label{thm:linear-step}
Let Assumptions \ref{ass:assumption-1} and \ref{ass:assumption-2} hold. Consider the iterates defined in \ref{eq:sf-updates} for \(\{c_{t+1},\;\eta_t,\;\beta_t\}=\{\frac{1}{t+1},\;\eta_0(t+1),\;1\}\), where \(\eta_0\le1/L\).  Then for $t\ge2$,
\[
\min_{2\le t\le T-1}\|\nabla f(x_t)\|^{2}
\;\le\;
\frac{4V_2}{\eta_0 T}
+\mathcal{O}\!\bigl(D^{2}\tfrac{\log T}{T}\bigr).
\]
\end{thm}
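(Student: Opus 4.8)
The plan is to specialize the one-step inequality of Lemma~\ref{lem:pot-descent} to the schedule $c_{t+1}=1/(t+1)$, $\eta_t=\eta_0(t+1)$, $\beta_t=1$, telescope it, and use Assumption~\ref{ass:assumption-2} to absorb the residual $\|\delta_t\|^{2}$ terms. The hypothesis of the lemma holds on the whole range, since $L\eta_t c_{t+1}=L\eta_0(t+1)\cdot\tfrac{1}{t+1}=L\eta_0\le1$. With $\beta_t=1$ the two $(1-\beta_t)^{2}$ terms disappear, and two things happen: the descent coefficient collapses to a constant, $\tfrac{c_{t+1}\eta_t}{4}=\tfrac{\eta_0}{4}$, and the bracketed coefficient of $\|\delta_t\|^{2}$, call it $B_t$, simplifies (for $t\ge2$, where $c_t=1/t$ and $1-c_t\ne0$, which is exactly why the iteration starts at $2$) to
\[
B_t=\frac{1}{2\eta_0(t+1)^{2}}-\frac{t-L\eta_0(t+1)}{2\eta_0(t+1)(t-1)^{2}}
  =\frac{1}{2\eta_0(t+1)}\!\left[\frac{1-3t}{(t+1)(t-1)^{2}}+\frac{L\eta_0(t+1)}{(t-1)^{2}}\right].
\]
In contrast to the constant-stepsize case, $B_t$ need not be $\le0$ — linear growth of $\eta_t$ eventually makes it positive — but the first term in the bracket is negative for $t\ge2$, so $B_t\le\frac{L}{2(t-1)^{2}}$, which decays like $1/t^{2}$. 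That $\mathcal{O}(1/t^{2})$ decay is precisely what a linear-in-$t$ bound on $\|\delta_t\|^{2}$ can pay for.

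Invoking Assumption~\ref{ass:assumption-2}, $B_t\|\delta_t\|^{2}\le\frac{L}{2(t-1)^{2}}D^{2}(t+1)\le\frac{3LD^{2}}{2(t-1)}$ for $t\ge2$ (using $t+1\le3(t-1)$), so the accumulated penalty is a harmonic sum,
\[
\sum_{t=2}^{T-1}B_t\|\delta_t\|^{2}\;\le\;\frac{3LD^{2}}{2}\sum_{s=1}^{T-2}\frac{1}{s}\;=\;\mathcal{O}\!\bigl(D^{2}\log T\bigr).
\]
Telescoping Lemma~\ref{lem:pot-descent} over $t=2,\dots,T-1$ gives $V_T\le V_2-\tfrac{\eta_0}{4}\sum_{t=2}^{T-1}\|\nabla f(x_t)\|^{2}+\sum_{t=2}^{T-1}B_t\|\delta_t\|^{2}$. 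The potential is non-negative: $f(x_T)-f^\star\ge0$ by lower-boundedness, and $A_T\ge0$ because $1-L\eta_T c_{T+1}=1-L\eta_0\ge0$. Dropping $V_T\ge0$ and rearranging,
\[
\frac{\eta_0}{4}\sum_{t=2}^{T-1}\|\nabla f(x_t)\|^{2}\;\le\;V_2+\mathcal{O}\!\bigl(D^{2}\log T\bigr),
\]
and bounding the sum below by $(T-2)\min_{2\le t\le T-1}\|\nabla f(x_t)\|^{2}$ and dividing yields the claim, with the $\tfrac{1}{T-2}$-versus-$\tfrac1T$ gap being $\mathcal{O}(1/T^{2})$ and hence absorbed into the $\mathcal{O}(D^{2}\log T/T)$ term.

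The only genuinely load-bearing step is the algebraic verification that, after substituting $\eta_t=\eta_0(t+1)$, the (now possibly positive) coefficient $B_t$ still decays like $1/t^{2}$ rather than something slower; this is what makes $\sum_t B_t\|\delta_t\|^{2}$ grow only like $\log T$ under Assumption~\ref{ass:assumption-2}, while everything else — non-negativity of $V_T$, the telescoping, and the min-versus-average reduction — is routine. It is worth noting that both the extra $\log T$ factor and the very need for Assumption~\ref{ass:assumption-2} trace back solely to this residual positivity of $B_t$: any a priori control of the form $\|\delta_t\|^{2}=o(t)$ would immediately sharpen the second-order term.
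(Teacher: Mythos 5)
Your proposal is correct and follows essentially the same route as the paper's proof: specialize Lemma~\ref{lem:pot-descent} to this schedule, observe that with $\beta_t=1$ the $\|\delta_t\|^2$ coefficient decays like $O(1/t^2)$, invoke Assumption~\ref{ass:assumption-2} to turn the accumulated penalty into a harmonic ($O(D^2\log T)$) sum, and telescope. Your handling of the coefficient (dropping the negative part of the bracket to get the clean bound $L/\bigl(2(t-1)^2\bigr)$) and of the $T$ versus $T-2$ normalization is in fact slightly more careful than the paper's, but the argument is the same.
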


The bias term is now \(\mathcal{O}(\log T/T)\approx O(1/T)\) up to a logarithmic correction, nearly matching the deterministic, scheduled momentum bounds previously described; and showing that SF inherits momentum-style acceleration without sacrificing its schedule-free hyperparameters. We also empirically validate this rate in Figure \ref{fig:pep-linear-step}, which shows that for all \(t\) up to \(T=100\) steps, \(\min_{2\le k\le T-1}\|\nabla f(x_t)\|^{2}\times t/\log t\) is bounded above by a constant, thus supporting the rate described in Theorem \ref{thm:linear-step}.

\begin{figure}[H]
\centering
\includegraphics[width=\linewidth]{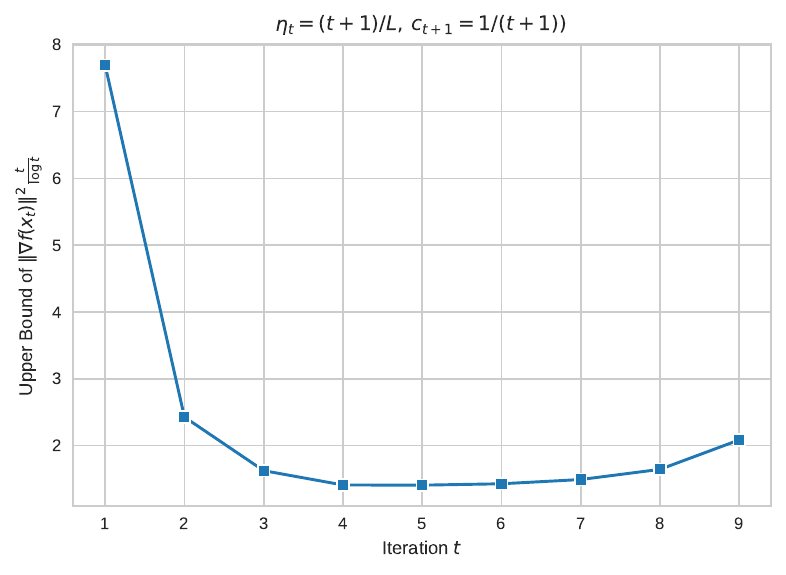}
\caption{Worst-case bias curve for Theorem \ref{thm:linear-step}.}
\label{fig:pep-linear-step}
\end{figure}

\subsection{Constant stepsize with polynomial averaging}
    We generalize the averaging weight to a polynomial decreasing average
    \(c_{t+1}=(t+1)^{-\alpha}\) and polynomial increasing average \(c_{t+1}=t^\alpha/(t+1)^\alpha\) and keep $\beta_t=1$.
    
    \begin{thm}\label{thm:poly-avg}
    Let Assumption \ref{ass:assumption-1} hold. Consider the iterates defined in \ref{eq:sf-updates} for \(\{c_{t+1},\;\eta_t,\;\beta_t\}=\{\bigl(\frac{1}{t+1}\bigr)^\alpha,\;\eta,\;1\}\), where \(\eta\le1/L\).  Then for $t\ge2$,
    \[
    \min_{2\le t \le T-1}\|\nabla f(x_t)\|^{2}
    \;\le\;
    \begin{cases}
    \displaystyle
    \frac{4V_2(1-\alpha)}{\eta\bigl(T^{1-\alpha}-2^{1-\alpha}\bigr)},
    & \alpha\in[0,1),\\[15pt]
    \displaystyle
    \frac{4V_2}{\eta\log T},
    & \alpha=1,\\[15pt]
    \mathcal{O}(1), & \alpha>1.
    \end{cases}
    \]
    Under the same conditions, but for \(c_{t+1}=\bigl(\frac{t}{t+1}\bigr)^\alpha\), and $t\ge2$,
    \[
        \min_{2\le t\le T-1}\|\nabla f(x_t)\|^2\le \frac{4V_2}{\eta T-2\eta -\alpha\eta\log T} + 2\sigma^2.
    \]
    \end{thm}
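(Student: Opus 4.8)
The plan is to instantiate Lemma~\ref{lem:pot-descent} with each of the two schedules, show the bracketed coefficient of $\|\delta_t\|^{2}$ is $\le 0$ for every $t\ge 2$, and thereby reduce each bound to telescoping a clean descent inequality and lower bounding a partial sum of $\sum_t c_{t+1}$. Since $\beta_t=1$, the two $(1-\beta_t)^{2}$ terms in Lemma~\ref{lem:pot-descent} vanish, and with $\eta_t\equiv\eta$ the surviving bracket is $\Gamma_t:=\tfrac{c_{t+1}}{2\eta}-\tfrac{c_t(1-L\eta c_t)}{2\eta(1-c_t)^{2}}$. Because $\eta\le 1/L$ gives $1-L\eta c_t\ge 1-c_t$, we have $\Gamma_t\le\tfrac{1}{2\eta(1-c_t)}\bigl(c_{t+1}(1-c_t)-c_t\bigr)$, so it suffices to verify $c_{t+1}(1-c_t)\le c_t$ for all $t\ge 2$.

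For the decreasing weights $c_{t+1}=(t+1)^{-\alpha}$ (the edge case $\alpha=0$ has $c_{t+1}=1$, hence $\delta_t\equiv 0$, and reduces to plain gradient descent), monotonicity gives $c_{t+1}\le c_t$, so $c_{t+1}(1-c_t)\le c_t(1-c_t)\le c_t$ and $\Gamma_t\le 0$. Dropping $\Gamma_t\|\delta_t\|^{2}$ and telescoping Lemma~\ref{lem:pot-descent} over $t=2,\dots,T-1$ with $V_T\ge 0$ gives
\[
\frac{\eta}{4}\Bigl(\min_{2\le t\le T-1}\|\nabla f(x_t)\|^{2}\Bigr)\sum_{t=2}^{T-1}c_{t+1}\;\le\;V_2 .
\]
A standard sum-versus-integral comparison then bounds $\sum_{t=2}^{T-1}(t+1)^{-\alpha}$ below by $\int_{2}^{T}x^{-\alpha}\,dx=\tfrac{T^{1-\alpha}-2^{1-\alpha}}{1-\alpha}$ for $\alpha\in[0,1)$, by $\log T-\log 2$ for $\alpha=1$, and by a strictly positive constant for $\alpha>1$ (the $p$-series converges); substituting yields the three displayed cases, the last being only an $\mathcal{O}(1)$ ceiling rather than a decay rate.

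For the increasing weights $c_{t+1}=\bigl(t/(t+1)\bigr)^{\alpha}$ with $\alpha\in[0,1]$, monotonicity fails since $c_{t+1}>c_t$; instead I would use $c_t=(1-1/t)^{\alpha}\ge 1-1/t\ge\tfrac12$ for all $t\ge 2$, so $c_{t+1}(1-c_t)<1-c_t\le c_t$ and $\Gamma_t\le 0$ on the summation range — this is exactly why $t\ge 2$ is needed and why $\alpha$ is capped at $1$. The same telescoping produces $\tfrac{\eta}{4}\bigl(\min_{2\le t\le T-1}\|\nabla f(x_t)\|^{2}\bigr)\sum_{t=2}^{T-1}c_{t+1}\le V_2$, and Bernoulli's inequality $\bigl(1-\tfrac1{t+1}\bigr)^{\alpha}\ge 1-\tfrac{\alpha}{t+1}$ gives $\sum_{t=2}^{T-1}c_{t+1}\ge(T-2)-\alpha\sum_{s=3}^{T}\tfrac1s\ge T-2-\alpha\log T$, the exact denominator $\eta T-2\eta-\alpha\eta\log T$. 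The extra $+2\sigma^{2}$ is inherited from the stochastic form of Lemma~\ref{lem:pot-descent} in Appendix~C, where a per-step $O(\sigma^{2})$ error survives the telescoping and the final division by $\sum_t c_{t+1}$.

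The crux is the sign of $\Gamma_t$: for the decreasing schedule it follows at once from monotonicity of $(c_t)$, but for the increasing schedule one must instead lean on $c_t\ge\tfrac12$, which only holds for $t\ge 2$ and $\alpha\le 1$, so the argument is genuinely sensitive to where the sum begins (the index $t=2$ is the binding one). A secondary, purely cosmetic point is reconciling the clean closed forms in the statement — notably the bare $\log T$ at $\alpha=1$ — with what the sum-versus-integral and harmonic-sum estimates literally give (e.g.\ $\log T-\log 2$); this is a harmless loosening of constants.
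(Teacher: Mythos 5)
Your proposal is correct and shares the paper's overall skeleton (instantiate Lemma~\ref{lem:pot-descent} with $\beta_t=1$, show the $\|\delta_t\|^2$ coefficient is nonpositive, telescope, lower-bound $\sum_t c_{t+1}$), but the crucial sign analysis is carried out by a genuinely different and cleaner route. The paper substitutes the schedules into the bracket and manipulates the resulting rational expressions directly: for $c_{t+1}=(t+1)^{-\alpha}$ it reduces the condition to $(t+1)^\alpha(t^\alpha-L\eta)\ge(t^\alpha-1)^2$ and argues via the chain $(t+1)^\alpha\ge\frac{(t^\alpha-1)^2}{t^\alpha-L\eta}\ge t^\alpha-2+t^{-\alpha}$ (whose middle inequality points the wrong way for establishing the leftmost one), and for $c_{t+1}=(t/(t+1))^\alpha$ it runs a fairly involved Bernoulli/concavity estimate on the numerator $t^{2\alpha}(t^\alpha-(t-1)^\alpha)^2-(t+1)^\alpha(t-1)^\alpha(t^\alpha-L\eta(t-1)^\alpha)$. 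You instead use $1-L\eta c_t\ge 1-c_t$ to collapse the whole bracket to the scalar condition $c_{t+1}(1-c_t)\le c_t$, which follows from monotonicity in the decreasing case and from $c_t\ge\tfrac12$ (valid for $t\ge2$, $\alpha\le1$) in the increasing case. This buys a shorter, more transparent argument that sidesteps the questionable inequality direction in the paper's decreasing-case chain, makes explicit why the range $\alpha\le1$ and the starting index $t=2$ matter for the increasing schedule, and handles the $\alpha=0$ degeneracy ($c_{t+1}=1$, so $\delta_t\equiv0$) that the paper leaves implicit; what it gives up is only the paper's explicit demonstration that the coefficient turns \emph{positive} for $\alpha\ge1$ in the increasing case, which the paper uses to justify capping $\alpha$ below one. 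Your closing remarks on the constant-level slack in the partial-sum bounds (e.g.\ $\log T$ versus $\log T-\log 2$) correctly identify a looseness that is present in the paper's own telescoping as well and does not affect the rates.
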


    Theorem \ref{thm:poly-avg} recovers the result of Theorem \ref{thm:const-step} as a specific case when \(\alpha=1\). In the more general case of \(\alpha\not=1\), we note that the decreasing polynomial averaging (\(c_{t+1}=\bigl(\frac{1}{t+1}\bigr)^\alpha\)) smoothly interpolates between the uniform averaging (\(\alpha=0\)) and the aggressive tail averaging regime \(\alpha=1\) analyzed in Theorem \ref{thm:const-step}. In fact, when \(\alpha\in(0,1)\), the decay \(T^{-(1-\alpha)}\) shows that the more weight in recent iterates accelerates convergence: as \(\alpha\to1^{-}\), the rate formalizes \(O(1/\log T)\). Conversely, placing too much emphasis on recent points (\(\alpha>1\)) causes the bound to collapse to a constant, reflecting that the scheme no longer reduces the gradient norm in the worst case.
    
    Interestingly, a numerical performance estimate (PEP) study of the case \(\alpha=1\) suggests that the true worst‐case decrease under \(c_{t+1}=1/(t+1)\) may actually be \(O(1/T)\) rather than \(O(1/\log T)\), since for all steps \(t\) up to \(T=100\) for \(c_{t+1}=\frac{1}{t+1}\), we observe that \(\min_{2\le k\le t-1}\|\nabla f(x_t)\|^{2}\times t\) is bounded above by a constant.  In other words, the log factor in Theorems \ref{thm:const-step} and \ref{thm:poly-avg} could be an artifact of the proof technique, and a tighter analysis - perhaps via PEP or an improved Lyapunov argument - can recover the optimal rate \(1/T\) for the classic schedule‐free choice.  
    
    Finally, in the case of polynomial increasing average \(c_{t+1}=(t/(t+1))^\alpha\) combines the best of both worlds: it preserves the decay of \(O(1/T)\) (up to a mild logarithm when \(\alpha>0\)). This rate is also empirically validated in Figure \ref{fig:pep-increasing} which shows that, for \(c_{t+1}=(t/(t+1))^\alpha\) and \(\alpha\in[0,1)\), \(\min_{2\le k\le t-1}\|\nabla f(x_t)\|^{2}\times t\) for all \(t\) up to \(T=100\) steps. 
    
\clearpage
\begin{figure}[H]
\centering
\includegraphics[width=\linewidth]{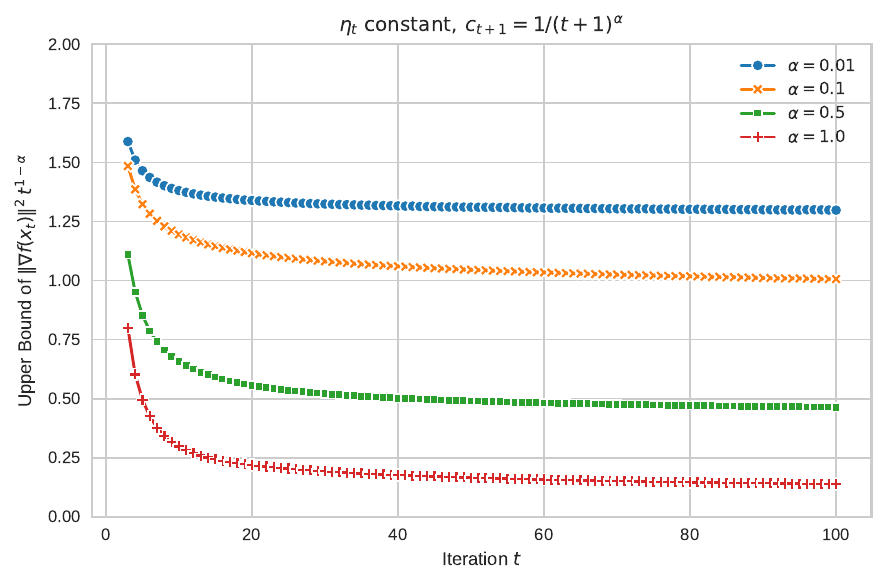}
\caption{Worst-case bias curves for Theorem \ref{thm:poly-avg}, \(c_{t+1}=\bigl(\frac{1}{t+1}\bigr)^\alpha\) for several $\alpha$ values.}
\label{fig:pep-decreasing}
\end{figure}

\begin{figure}[H]
\centering
\includegraphics[width=\linewidth]{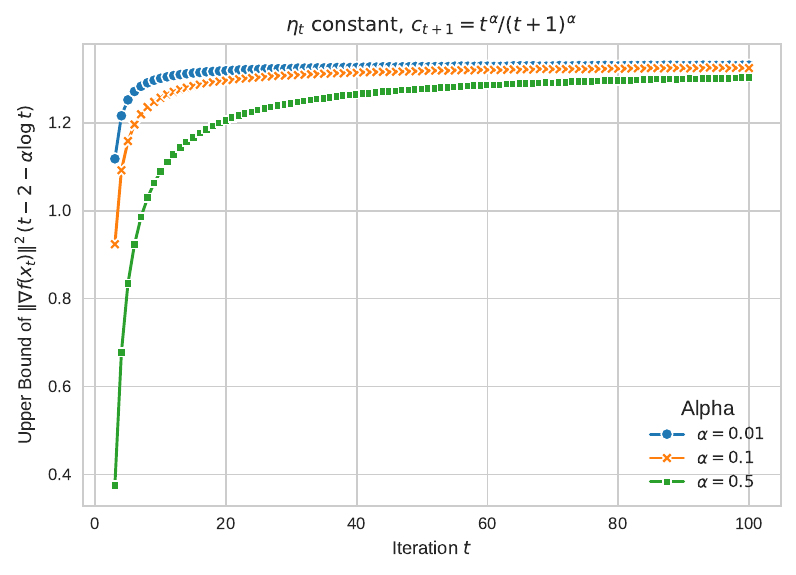}
\caption{Worst-case bias curves for Theorem \ref{thm:poly-avg}, \(c_{t+1}=\bigl(\frac{t}{t+1}\bigr)^\alpha\) for several $\alpha$ values.}
\label{fig:pep-increasing}
\end{figure}
    
\section{Discussion and future work}
    We have developed a unified Lyapunov framework for the Schedule-Free algorithm in the nonconvex smooth setting under only $L$-smoothness and lower-boundedness (Assumption \ref{ass:assumption-1}).  Our main theoretical results show that with the classic SF choice $(c_{t+1}=1/(t+1),\,\eta_t=\eta,\beta_t=1)$ one attains  
    \[
    \min_{2\le t<T}\|\nabla f(x_t)\|^2 = O\!\bigl(1/\log T\bigr),
    \]  
    that by allowing a linearly growing step size $(\eta_t=\eta_0(t+1))$ and a mild bounded-growth condition (Assumption \ref{ass:assumption-2}) one recovers  
    \[
    \min_{2\le t<T}\|\nabla f(x_t)\|^2 = O\!\bigl(\log T/T\bigr),
    \]  
    and that polynomial-averaging weights $c_{t+1}=(t+1)^{-\alpha}$ interpolate between uniform averaging ($\alpha=0$) and the tail-averaging regime ($\alpha=1$) with rates $O\!\bigl(T^{-(1-\alpha)}\bigr)$ for $\alpha\in[0,1)$ (Theorem \ref{thm:poly-avg}).  These rates match or improve upon prior SF analyses in convex and strongly-convex regimes \cite{defazio2024roadless,ahn2024generalframeworkonlinetononconvexconversion} and extend SF into the nonconvex landscape without global Lipschitz or well-behaved assumptions.
    
    To support our theoretical bounds, we used the Performance Estimation Problem (PEP) framework \cite{drori2014performance,taylor2017exact} to compute worst-case curves for $\|\delta_t\|^2$ (Fig.~\ref{fig:pep-distance}), constant-step bias (Fig.~\ref{fig:pep-decreasing}) and linear-step bias (Fig.~\ref{fig:pep-linear-step}).  Across all regimes up to $T=100$, these PEP SDPs support the rates we provided, and in fact suggest possibly tighter bounds that remain to be shown in related SGD+M/SHBM/PA literature. Indeed, the PEP bias curves for $c_{t+1}=1/(t+1)$ appear to decay like $O(1/T)$, hinting that the log-factor is an artifact of our proof technique rather than an inherent limitation of SF. With that being said, our experiments are limited to verifying our analyses for finite time-horizons and do not extrapolate to indefinite \(T\). In fact, the linear-step PEP plot visually exhibits a slight upward tail for larger $t$, and the distance plot under Assumption \ref{ass:assumption-2} shows intermittent spikes even for $t\le100$.  These phenomena suggest that our $O(\log T/T)$ bound in Theorem \ref{thm:linear-step} may not hold uniformly in the true worst case. 
    
    Optimistically, while PEP itself faces these finite-horizon drawbacks, its trends support the conjecture that a tighter theoretical analysis --- perhaps via refined Lyapunov arguments or alternative potential functions --- can recover the optimal $O(1/T)$ rate known for stochastic momentum in nonconvex settings \cite{ghadimi2013stochastic,lei2019variance}. This result could therefore motivate future tighter analysis of nonconvex SF, especially under the conditions assumed in Theorems \ref{thm:const-step} and Theorem \ref{thm:poly-avg}. Establishing an $O(1/T)$ convergence guarantee for the classic schedule-free hyperparameters $(\beta_t=1,c_{t+1}=1/(t+1))$ would align SF with the best known nonconvex momentum rates. Pursuing this sharper bound is our primary theoretical future direction, guided by the empirical PEP evidence.

    Furthermore, although SF requires no explicit stepsize schedule, its optimal momentum or averaging parameter may still depend implicitly on an (approximate) runtime $T$.  In particular, selecting the best fixed $\beta_t$ often relies on knowledge of the total horizon to optimize convergence, shifting the tuning burden from $\eta_t$ to $\beta_t$.  Characterizing SF’s robustness to mis-specified $\beta_t$ and identifying truly universal choices that perform well across a wide range of $T$ remains an open challenge \cite{hägele2024scalinglawscomputeoptimaltraining}. In fact, our current analysis (see Appendix D) restricts to $\beta_t=1$ to eliminate the $(1-\beta_t)^2\|\delta_t\|^2$ penalty in Lemma \ref{lem:pot-descent}.  Although the framework extends to $\beta_t<1$, the appendix derivations show that any $\beta_t<1$ introduces a positive coefficient on $\|\delta_t\|^2$, weakening worst-case descent and contradicting previous empirical results \cite{hägele2024scalinglawscomputeoptimaltraining}. Nonetheless, interpolation with $\beta_t<1$ re-incorporates Polyak–Ruppert averaging, which as mentioned previously, could especially help remedy \(\beta_t\)'s potential liability to mis-specification, as it has been shown to do for stepsize. In sum, another primary future direction for our work is to explore the effects of \(\beta_t\) on nonconvex SF more deeply, analyzing any implicit dependence on training time horizons and any possible associated cancellations of these effects via PR averaging.

\clearpage
\printbibliography
\clearpage

\appendix
\setcounter{thm}{0}
\setcounter{lem}{0}
\section{SGD+M and SPA Equivalence}
\begin{thm}
Define the SGD+M method by the two sequences:
\[
\begin{aligned}
    m_{t+1}&=\theta_tm_t+\nabla f(x_t,\zeta_t),\\
    x_{t+1}&=x_t-\lambda_tm_{t+1},
\end{aligned}
\]
and the SPA sequences as:
\[
\begin{aligned}
    z_{t+1}&=z_t-\eta_t\nabla f(x_t,\zeta_t),\\
    x_{t+1} &= (1-c_{t+1})x_t+c_{t+1}z_{t+1}.
\end{aligned}
\]
Consider the case where \(m_0=0\) for SGD+M and \(z_0=0\) for SPA. Then if \(c_1=\lambda_0/\eta_0\) and for \(t\ge 0\)
\[
    \eta_{t+1}=\frac{\eta_t-\lambda_t}{\theta_{t+1}},\quad c_{t+1}=\frac{\lambda_t}{\eta_t},
\]
the \(x\) sequence produced by the SPA method is identical to the \(x\) sequence produced by the SGD+M method.
\end{thm}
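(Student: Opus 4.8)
The plan is to eliminate the auxiliary buffers $m_t$ and $z_t$ and show that both recursions collapse to the \emph{same} two-term (heavy-ball) recursion in the $x$-variable alone; the claim then follows by induction once the initial data are matched. Write $g_t:=\nabla f(x_t,\zeta_t)$. For SGD+M, for $t\ge1$ we have $m_t=-(x_t-x_{t-1})/\lambda_{t-1}$, and substituting into $x_{t+1}=x_t-\lambda_t m_{t+1}=x_t-\lambda_t(\theta_t m_t+g_t)$ gives
\[
x_{t+1}=x_t+\frac{\lambda_t\theta_t}{\lambda_{t-1}}(x_t-x_{t-1})-\lambda_t g_t .
\]
For SPA, inverting the averaging step $x_t=(1-c_t)x_{t-1}+c_t z_t$ yields $z_t=x_{t-1}+\tfrac1{c_t}(x_t-x_{t-1})$; plugging this and $z_{t+1}=z_t-\eta_t g_t$ into $x_{t+1}=(1-c_{t+1})x_t+c_{t+1}z_{t+1}$ and collecting terms gives
\[
x_{t+1}=x_t+\frac{c_{t+1}(1-c_t)}{c_t}(x_t-x_{t-1})-c_{t+1}\eta_t g_t .
\]
Both displays hold for $t\ge1$ and use only $c_t\neq0$ and $\lambda_{t-1}\neq0$.

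I would then check that the parameter identities make the two recursions agree coefficient by coefficient. The gradient coefficients match at once: $c_{t+1}\eta_t=(\lambda_t/\eta_t)\,\eta_t=\lambda_t$. For the momentum coefficient, substitute $c_{t+1}=\lambda_t/\eta_t$ and $c_t=\lambda_{t-1}/\eta_{t-1}$ to obtain $\frac{c_{t+1}(1-c_t)}{c_t}=\frac{\lambda_t(\eta_{t-1}-\lambda_{t-1})}{\eta_t\lambda_{t-1}}$, and then use the recursion $\eta_t=(\eta_{t-1}-\lambda_{t-1})/\theta_t$, i.e. $\eta_{t-1}-\lambda_{t-1}=\theta_t\eta_t$, to reduce this to $\frac{\lambda_t\theta_t}{\lambda_{t-1}}$, which is exactly the SGD+M coefficient. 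This single identity is the heart of the proof and is precisely why the $\eta$-recursion is posed the way it is.

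It remains to match the base data, since a two-term recursion is pinned down by $x_0$ and $x_1$. Both methods start from the common point $x_0$. With $m_0=0$, SGD+M gives $x_1=x_0-\lambda_0 g_0$; using $z_1=z_0-\eta_0 g_0$, SPA gives $x_1=(1-c_1)x_0+c_1 z_1$, which equals $x_0-\lambda_0 g_0$ precisely when $c_1=\lambda_0/\eta_0$ (using $z_0=x_0$ as in the SPA convention of the main text; this is compatible with the stated $m_0=z_0=0$ when the common start is the origin). Having matched $x_0$ and $x_1$, a straightforward induction closes the argument: if the two trajectories agree up to time $t$ then they see the same $g_t$, and the two recursions above --- now with identical coefficients --- force $x_{t+1}$ to coincide; the same computation shows $m_{t+1}=(x_t-x_{t+1})/\lambda_t$ and $z_{t+1}=x_t+\tfrac1{c_{t+1}}(x_{t+1}-x_t)$ along the way, so the inductive hypothesis propagates and the buffer reconstructions used above stay valid.

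The main obstacle is not conceptual but bookkeeping in two places. First, reconciling the initializations $m_0=0$ and $z_0=0$ with a common $x_0$ is exactly what forces the separate hypothesis $c_1=\lambda_0/\eta_0$ rather than letting it fall out of the general formula. Second, one must carry the implicit well-posedness conditions --- $\eta_t>0$, $\theta_t\neq0$, $c_t\in(0,1]$, and $\eta_t-\lambda_t$ of the sign that makes $\eta_{t+1}$ admissible --- without which the inversions $m_t=-(x_t-x_{t-1})/\lambda_{t-1}$, $z_t=x_{t-1}+\tfrac1{c_t}(x_t-x_{t-1})$ and the definition of $\eta_{t+1}$ break down; I would state these as standing assumptions. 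Beyond that, everything reduces to the one-line coefficient identity above.
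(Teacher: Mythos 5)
Your proposal is correct, and it reaches the conclusion by a different route than the paper. The paper's proof keeps the auxiliary buffers and constructs an explicit change of variables between the two state spaces: it \emph{defines} $z_t := x_t - \bigl(\tfrac{1}{c_t}-1\bigr)\lambda_{t-1}m_t$ from the SGD+M quantities, verifies that this sequence obeys the SPA recursion $z_{t+1}=z_t-\eta_t\nabla f(x_t,\zeta_t)$ precisely under the stated parameter identities, and then checks that the SGD+M $x$-update coincides with the SPA averaging step. You instead eliminate both buffers, reduce each method to the same two-term heavy-ball recursion in $x$ alone, and match the momentum and gradient coefficients, closing with an induction on $x_0,x_1$. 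The two arguments are algebraically equivalent --- your reconstruction $z_t=x_{t-1}+\tfrac1{c_t}(x_t-x_{t-1})$ combined with $m_t=-(x_t-x_{t-1})/\lambda_{t-1}$ recovers exactly the paper's defining relation between $z_t$ and $m_t$, and both hinge on the same identity $\eta_{t-1}-\lambda_{t-1}=\theta_t\eta_t$ --- but they buy different things: the paper's first-order change of variables avoids any induction or separate base-case matching of $x_1$ and generalizes more cleanly to settings where one wants to track $z_t$ itself (as the Lyapunov analysis later does), while your normal-form reduction makes the connection to the Heavy-Ball correspondence (the paper's Theorem~\ref{thm:SHBM-spa-correspondence}) transparent and isolates the single coefficient identity that drives the equivalence. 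Your attention to the well-posedness conditions ($\lambda_{t-1}\neq0$, $c_t\neq0$) needed for the buffer inversions is a point the paper leaves implicit; one minor quibble is that the hypothesis $c_1=\lambda_0/\eta_0$ is just the $t=0$ instance of the general formula $c_{t+1}=\lambda_t/\eta_t$ rather than an extra condition forced by the initialization.
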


\begin{proof}
Consider the base case where \(x_0=z_0\). Then for SGD+M:
\begin{equation}
\label{eq:sgdm-0}
\begin{aligned}
    m_1 = &\nabla f(x_0, \zeta_t)\\
    \therefore x_1 = x_0 &-\lambda_0\nabla f(x_0,\zeta_t)
\end{aligned}
\end{equation}

and for the SPA form:
\begin{equation}
\label{eq:spa-0}
\begin{aligned}
    z_1 = x_0&-\eta_0\nabla f(x_0, \zeta_0)\\
    x_1 = (1-c_0)x_0&+c_0(x_0-\eta_0\nabla f(x_0, \zeta_0))\\
    =x_0-c_0&\eta_0\nabla f(x_0, \zeta_0)
\end{aligned}
\end{equation}

Clearly, Equation \ref{eq:sgdm-0} is equivalent to Equation \ref{eq:spa-0} when \(\lambda_0=c_0\eta_0\).

Now consider \(t>0\). We will define \(z_t\) in terms of quantities in the SGD+M method, then show that with this definition the step-to-step changes in \(z\) correspond exactly to the SPA method. In particular, let:
\begin{equation}
\label{eq:zt-mt}
    z_t = x_t - \biggl(\frac{1}{c_t}-1\biggr)\lambda_{t-1}m_t.
\end{equation}

Then
\begin{equation}
\begin{aligned}
    z_{t+1}&=x_{t+1}-\biggl(\frac{1}{c_{t+1}}-1\biggr)\lambda_{t}m_{t+1}\\
    &=x_t-\lambda_tm_{t+1}- \biggl(\frac{1}{c_{t+1}}-1\biggr)\lambda_{t}m_{t+1}\\
    &=z_t + \biggl(\frac{1}{c_t}-1\biggr)\lambda_{t-1}m_t - \frac{\lambda_t}{c_{t+1}}(\theta_tm_t+\nabla f(x_t, \zeta_t))\\
    &=z_t + \biggl[\biggl(\frac{1}{c_t}-1\biggr)\lambda_{t-1}-\frac{\lambda_t}{c_{t+1}}\theta_t\biggr]m_t - \frac{\lambda_t}{c_{t+1}}\nabla f(x_t, \zeta_t).
\end{aligned}
\end{equation}

This is equivalent to the SPA step
\[
    z_{t+1}=z_t-\eta_t\nabla f(x_t, \zeta_t),
\]

as long as \(\frac{\lambda_t}{c_{t+1}}=\eta_t\) and
\[
\begin{aligned}
    0&=\biggl(\frac{1}{c_t}-1\biggr)\lambda_{t-1}-\frac{\lambda_t}{c_{t+1}}\theta_t\\
    &=(\eta_{t-1}-\lambda_{t-1})-\eta_t\theta_t,\\
    &\quad\text{i.e., } \eta_t=\frac{\eta_{t-1}-\lambda_{t-1}}{\theta_t}.
\end{aligned}
\]

Using this definition of the \(z\) sequence, we can rewrite the SGD+M \(x\) sequence using a rearrangement of Equation \ref{eq:zt-mt}:
\[
\begin{aligned}
    m_{t+1}&=\biggl(\frac{1}{c_{t+1}}-1\biggr)^{-1}\lambda_t^{-1}(x_{t+1}-z_{t+1})\\
    &= \frac{c_{t+1}}{1-c_{t+1}}\lambda_t^{-1}(x_{t+1}-z_{t+1}),
\end{aligned}
\]
as
\[
\begin{aligned}
    x_{t+1}&= x_t-\lambda_tm_{t+1}\\
    &= x_t - \frac{c_{t+1}}{1-c_{t+1}}(x_{t+1}-z_{t+1})\\
    &=x_t - \frac{c_{t+1}}{1-c_{t+1}}x_{t+1}+ \frac{c_{t+1}}{1-c_{t+1}}z_{t+1}\\
    &= (1-c_{t+1})x_{t+1}+c_{t+1}z_{t+1},
\end{aligned}
\]
matching the SPA update.
\end{proof}
\clearpage
\section{SHBM and SPA Equivalence}
\begin{thm}
Define the SHBM method by the sequence:
\[
\begin{aligned}
    x_{t+1}=x_t-\lambda_t\nabla f(x_t, \zeta_t)+\theta_t(x_t-x_{t-1}),
\end{aligned}
\]
and the SPA sequences as:
\[
\begin{aligned}
    z_{t+1}&=z_t-\eta_t\nabla f(x_t,\zeta_t),\\
    x_{t+1} &= (1-c_{t+1})x_t+c_{t+1}z_{t+1}.
\end{aligned}
\]
Consider the case where \(x_0=0\) for SHBM and \(z_0=0\) for SPA. Then if for all \(t\ge 0\)
\[
    \lambda_t = c_{t+1}\eta_t,\quad \theta_t = \frac{c_{t+1}(1-c_t)}{c_t},
\]
the \(x\) sequence produced by the SPA method is identical to the \(x\) sequence produced by the SHBM method.
\end{thm}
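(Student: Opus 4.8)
The plan is to verify that the $x$-sequence produced by SPA satisfies, term by term, the very recursion that \emph{defines} SHBM with the claimed coefficients; since the SHBM iterates are uniquely determined by that recursion together with the initialization, the two $x$-sequences must then coincide. The whole argument hinges on one invariant that rewrites the SPA slow iterate $z_t$ using only two consecutive fast iterates, so that the $z$-sequence can be eliminated entirely.

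\textbf{Base case.} Under the common initialization $z_0=x_0=0$ and the convention $x_{-1}=x_0$ (so that the momentum term vanishes at the first SHBM step), the SHBM recursion at $t=0$ gives $x_1=x_0-\lambda_0\nabla f(x_0,\zeta_0)$, while SPA gives $z_1=x_0-\eta_0\nabla f(x_0,\zeta_0)$ and hence $x_1=(1-c_1)x_0+c_1 z_1=x_0-c_1\eta_0\nabla f(x_0,\zeta_0)$. These agree precisely because $\lambda_0=c_1\eta_0$, the $t=0$ instance of the stated relation.

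\textbf{Invariant and inductive step.} Rearranging the SPA averaging step at index $t-1$, $x_t=(1-c_t)x_{t-1}+c_t z_t$, gives $z_t=x_{t-1}+\tfrac{1}{c_t}(x_t-x_{t-1})$, hence
\[
z_t-x_t=\Bigl(\tfrac{1}{c_t}-1\Bigr)(x_t-x_{t-1})=\tfrac{1-c_t}{c_t}\,(x_t-x_{t-1}),
\]
with both sides vanishing at $t=0$ by the conventions above. Subtracting $x_t$ from the SPA averaging step at index $t$ gives $x_{t+1}-x_t=c_{t+1}(z_{t+1}-x_t)$; substituting $z_{t+1}=z_t-\eta_t\nabla f(x_t,\zeta_t)$ and then the invariant for $z_t-x_t$ yields
\[
x_{t+1}=x_t-c_{t+1}\eta_t\,\nabla f(x_t,\zeta_t)+\tfrac{c_{t+1}(1-c_t)}{c_t}\,(x_t-x_{t-1}).
\]
Comparing this with $x_{t+1}=x_t-\lambda_t\nabla f(x_t,\zeta_t)+\theta_t(x_t-x_{t-1})$ forces exactly $\lambda_t=c_{t+1}\eta_t$ and $\theta_t=\tfrac{c_{t+1}(1-c_t)}{c_t}$, i.e. the claimed map. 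A one-line induction closes the proof: if the two $x$-sequences agree through index $t$, both the SPA-derived recursion and the SHBM recursion express $x_{t+1}$ via $x_t$, $x_{t-1}$, and the gradient $\nabla f(x_t,\zeta_t)$, which is evaluated at the common iterate, so $x_{t+1}$ agrees as well.

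\textbf{Main obstacle.} There is no genuine difficulty here; the computation is routine linear algebra. The only place requiring care is the boundary at $t=0$, where $x_{-1}$ is not supplied by the Heavy-Ball recursion and one must fix the convention $x_{-1}=x_0$ so that the invariant and the $t=0$ match-up are mutually consistent. I would also note that the same chain of identities run backwards, valid whenever $c_t\in(0,1)$, recovers the inverse parameter map of Theorem \ref{thm:SHBM-spa-correspondence}.
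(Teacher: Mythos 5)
Your proof is correct and rests on the same key identity as the paper's, namely $z_t-x_t=\tfrac{1-c_t}{c_t}(x_t-x_{t-1})$; you merely run the argument in the opposite direction (eliminating $z$ from SPA to recover the SHBM recursion, rather than defining $z_t$ from the SHBM iterates and verifying the SPA $z$-update), which is a cosmetic reorganization. Your explicit handling of the $x_{-1}=x_0$ convention at the boundary is a small point of added care over the paper's version.
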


\begin{proof}
Consider the base case where $t=0$. For SHBM with $x_{0} = 0$:
\begin{equation}
\label{eq:SHBM-0}
\begin{aligned}
    x_1 &= x_0 - \lambda_0 \nabla f(x_0, \zeta_0) + \theta_0 (x_0 - x_{-1}) \\
        &= x_0 - \lambda_0 \nabla f(x_0, \zeta_0)
\end{aligned}
\end{equation}

For the SPA form with $z_0 = x_0$:
\begin{equation}
\label{eq:spa-0}
\begin{aligned}
    z_1 &= z_0 - \eta_0 \nabla f(x_0, \zeta_0) \\
        &= x_0 - \eta_0 \nabla f(x_0, \zeta_0) \\
    x_1 &= (1 - c_1) x_0 + c_1 z_1 \\
        &= x_0 - c_1 \eta_0 \nabla f(x_0, \zeta_0)
\end{aligned}
\end{equation}

Equations Equation \ref{eq:SHBM-0} and Equation \ref{eq:spa-0} are identical when $\lambda_0 = c_1 \eta_0$.

Now consider $t > 0$. We will define $z_t$ in terms of quantities in the SHBM method, then show that with this definition the step-to-step changes in $z$ correspond exactly to the SPA method. Let:
\begin{equation}
\label{eq:zt-xt}
    z_t = x_t + \frac{1 - c_t}{c_t} (x_t - x_{t-1}).
\end{equation}

Then:
\begin{equation}
\begin{aligned}
    z_{t+1} &= x_{t+1} + \frac{1 - c_{t+1}}{c_{t+1}} (x_{t+1} - x_t) \\
            &= \left(1 + \frac{1 - c_{t+1}}{c_{t+1}}\right) x_{t+1} - \frac{1 - c_{t+1}}{c_{t+1}} x_t \\
            &= \frac{1}{c_{t+1}} x_{t+1} - \frac{1 - c_{t+1}}{c_{t+1}} x_t
\end{aligned}
\end{equation}

Substituting the SHBM update for $x_{t+1}$:
\begin{equation}
\begin{aligned}
    z_{t+1} &= \frac{1}{c_{t+1}} \left[ x_t - \lambda_t \nabla f(x_t, \zeta_t) + \theta_t (x_t - x_{t-1}) \right] - \frac{1 - c_{t+1}}{c_{t+1}} x_t \\
            &= z_t + \frac{1 - c_t}{c_t} (x_t - x_{t-1}) - \frac{\lambda_t}{c_{t+1}} \nabla f(x_t, \zeta_t) \\
            &\quad + \frac{\theta_t}{c_{t+1}} (x_t - x_{t-1}) - \frac{1 - c_{t+1}}{c_{t+1}} (x_t - z_t)
\end{aligned}
\end{equation}

This simplifies to the SPA update $z_{t+1} = z_t - \eta_t \nabla f(x_t, \zeta_t)$ when:
\[
\frac{\lambda_t}{c_{t+1}} = \eta_t \quad \text{and} \quad \theta_t = \frac{c_{t+1}(1 - c_t)}{c_t}.
\]

Finally, the SHBM $x$ update can be rewritten using Equation \ref{eq:zt-xt}:
\[
\begin{aligned}
    x_{t+1} &= (1 - c_{t+1}) x_t + c_{t+1} z_{t+1} \\
            &= (1 - c_{t+1}) x_t + c_{t+1} \left( z_t - \eta_t \nabla f(x_t, \zeta_t) \right),
\end{aligned}
\]
which matches the SPA update by construction.
\end{proof}

\clearpage
\section{Potential Function Descent}
\subsection{Stochastic assumptions}
The following assumption is effective throughout the rest of our proofs and is standard in stochastic nonconvex optimization, but can be ignored in deterministic settings.\\
\begin{ass}
\label{ass:assumption-3}\hfill
\begin{enumerate} 
    \item \textbf{Unbiasedness:} At each iteration \(t\), \(\nabla f(x_t,\zeta_t)\) satisfies \(\mathbb{E}_{\zeta_t}\bigl[\nabla f(x_t,\zeta_t)\bigr]=\nabla f(x_t)\).
    \item \textbf{Independent samples:} The random samples \(\{\zeta_t\}_{t=0}^{\infty}\) are independent.
    \item \textbf{Bounded variance:} The variance of \(\nabla f(x_t,\zeta_t)\) with respect to \(\zeta_t\) satisfies \(\textup{Var}_{\zeta_t}(\nabla f(x_t, \zeta_t) = \mathbb{E}_{\zeta_t}\bigl[||\nabla f(x_t, \zeta_t)-\nabla f(x_t)||^2\bigr]\le\sigma^2\) for some \(\sigma^2\ge0\).
\end{enumerate}
\end{ass}

\subsection{Claim 1}
\begin{claim}
Define Schedule-Free by the three sequences:
\begin{equation}
\label{eq:sf-updates-appendix}
\begin{aligned}
        y_t &= (1 - \beta_t) z_t + \beta_t x_t,\\
        z_{t+1} &= z_t - \eta_t\nabla f(y_t,\zeta_t)\\
        x_{t+1}&=(1-c_{t+1}) x_t+c_{t+1} z_{t+1},
\end{aligned}
\end{equation}
and let Assumptions \ref{ass:assumption-1} and -\ref{ass:assumption-3} hold. Define \(\delta_{t+1}:=z_{t+1}-x_{t+1}\). Then,
\[
    \delta_{t+1} = (1-c_{t+1})(\delta_t-\eta_t\nabla f(y_t,\zeta_t)),
\]
which implies that
\[
\begin{aligned}
    &\quad\quad||\delta_{t+1}||^2 = (1-c_{t+1})^2\bigl(||\delta_t||^2-2\eta_t\nabla f(y_t, \zeta_t)^\top \delta_t +\eta_t^2||\nabla f(y_t, \zeta_t)||^2\bigr)\\
    &\Rightarrow \mathbb{E}\bigl[||\delta_{t+1}||^2\bigr] \le (1-c_{t+1})^2\bigl(||\delta_t||^2-2\eta_t\mathbb{E}\bigl[\nabla f(y_t)^\top \delta_t\bigr] +\eta_t^2||\nabla f(y_t)||^2\bigr) + (1-c_{t+1})^2\eta_t^2\sigma^2.
\end{aligned}
\]
\end{claim}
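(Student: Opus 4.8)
The plan is to derive the recursion for $\delta_{t+1}$ directly from the Schedule-Free update rules, then expand the squared norm and take expectations using Assumption \ref{ass:assumption-3}. First I would write out $\delta_{t+1} = z_{t+1} - x_{t+1}$ and substitute the third update $x_{t+1} = (1-c_{t+1})x_t + c_{t+1}z_{t+1}$ to get $\delta_{t+1} = z_{t+1} - (1-c_{t+1})x_t - c_{t+1}z_{t+1} = (1-c_{t+1})(z_{t+1} - x_t)$. Then I substitute the second update $z_{t+1} = z_t - \eta_t\nabla f(y_t,\zeta_t)$ to obtain $z_{t+1} - x_t = (z_t - x_t) - \eta_t\nabla f(y_t,\zeta_t) = \delta_t - \eta_t\nabla f(y_t,\zeta_t)$, which yields the claimed identity $\delta_{t+1} = (1-c_{t+1})(\delta_t - \eta_t\nabla f(y_t,\zeta_t))$.

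Next I would square both sides in the $\ell_2$ norm and expand the quadratic: $\|\delta_{t+1}\|^2 = (1-c_{t+1})^2\bigl(\|\delta_t\|^2 - 2\eta_t\langle\nabla f(y_t,\zeta_t),\delta_t\rangle + \eta_t^2\|\nabla f(y_t,\zeta_t)\|^2\bigr)$, which is exactly the first displayed consequence. The only subtlety here is that this is a pathwise identity, so nothing probabilistic is needed yet.

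To get the expectation bound, I would condition on the filtration up to time $t$ (so that $\delta_t$ and $y_t$ are fixed) and take $\mathbb{E}_{\zeta_t}$. By unbiasedness, $\mathbb{E}_{\zeta_t}[\nabla f(y_t,\zeta_t)] = \nabla f(y_t)$, so the cross term becomes $-2\eta_t\langle\nabla f(y_t),\delta_t\rangle$. For the last term I would use the bias-variance decomposition $\mathbb{E}_{\zeta_t}\|\nabla f(y_t,\zeta_t)\|^2 = \|\nabla f(y_t)\|^2 + \mathrm{Var}_{\zeta_t}(\nabla f(y_t,\zeta_t)) \le \|\nabla f(y_t)\|^2 + \sigma^2$ from the bounded-variance part of Assumption \ref{ass:assumption-3}. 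Multiplying the $\sigma^2$ slack by the prefactor $(1-c_{t+1})^2$ and then taking full expectation via the tower property gives the stated inequality. One bookkeeping point worth flagging: the variance assumption as written bounds $\mathrm{Var}_{\zeta_t}(\nabla f(x_t,\zeta_t))$ at the iterate $x_t$, whereas Schedule-Free queries the gradient at $y_t$; I would note that the intended reading is that the variance bound holds at whatever point the gradient is evaluated (here $y_t$), which is the standard convention, so the $\sigma^2$ term carries through unchanged.

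The main obstacle is essentially notational rather than mathematical: there is no hard estimate here, just careful tracking of which quantities are $\mathcal{F}_t$-measurable when the conditional expectation is taken, and reconciling the slight mismatch between the point at which Assumption \ref{ass:assumption-3} states the variance bound ($x_t$) and the point at which the gradient is actually sampled in Schedule-Free ($y_t$). Once that convention is fixed, the rest is a one-line expansion and an application of unbiasedness plus the variance bound.
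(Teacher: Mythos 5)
Your proposal is correct and follows essentially the same route as the paper's proof: derive the pathwise recursion for $\delta_{t+1}$ from the third and second updates, square it, and then apply unbiasedness together with the bias--variance decomposition of $\mathbb{E}\bigl[\|\nabla f(y_t,\zeta_t)\|^2\bigr]$ to obtain the $\sigma^2$ slack. Your added remark that Assumption~\ref{ass:assumption-3} states the variance bound at $x_t$ while the gradient is queried at $y_t$ is a fair and worthwhile observation that the paper's proof glosses over, but it does not change the argument.
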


\begin{proof}
First, we establish the recurrence for \(\delta_{t+1}\). From the definition of \(x_{t+1}\) in Equation \ref{eq:sf-updates}:
\[
x_{t+1} = (1 - c_{t+1}) x_t + c_{t+1} z_{t+1},
\]
we can express the difference:
\[
\begin{aligned}
\delta_{t+1} &= z_{t+1} - x_{t+1} \\
&= z_{t+1} - (1 - c_{t+1}) x_t - c_{t+1} z_{t+1} \\
&= (1 - c_{t+1}) (z_{t+1} - x_t).
\end{aligned}
\]

Now, using the update for \(z_{t+1}\) from Equation \ref{eq:sf-updates}:
\[
z_{t+1} = z_t - \eta_t \nabla f(y_t, \zeta_t),
\]
we substitute to get:
\[
\begin{aligned}
\delta_{t+1} &= (1 - c_{t+1}) (z_t - \eta_t \nabla f(y_t, \zeta_t) - x_t) \\
&= (1 - c_{t+1}) (\delta_t - \eta_t \nabla f(y_t, \zeta_t)),
\end{aligned}
\]
where we used \(\delta_t = z_t - x_t\). This proves the first claim.

For the norm bound, we square both sides:
\[
\begin{aligned}
\|\delta_{t+1}\|^2 &= (1 - c_{t+1})^2 \|\delta_t - \eta_t \nabla f(y_t, \zeta_t)\|^2 \\
&= (1 - c_{t+1})^2 \bigl(\|\delta_t\|^2 - 2\eta_t \nabla f(y_t, \zeta_t)^\top \delta_t + \eta_t^2 \|\nabla f(y_t, \zeta_t)\|^2\bigr).
\end{aligned}
\]

Taking expectations and using that \(\mathbb{E}[\nabla f(y_t, \zeta_t)] = \nabla f(y_t)\) and \(\mathbb{E}[\|\nabla f(y_t, \zeta_t) - \nabla f(y_t)\|^2] \le \sigma^2\):
\[
\begin{aligned}
\mathbb{E}\bigl[\|\delta_{t+1}\|^2\bigr] &= (1 - c_{t+1})^2 \bigl(\|\delta_t\|^2 - 2\eta_t \mathbb{E}\bigl[\nabla f(y_t)^\top \delta_t\bigr] + \eta_t^2 \mathbb{E}\bigl[\|\nabla f(y_t, \zeta_t)\|^2\bigr]\bigr) \\
&\le (1 - c_{t+1})^2 \bigl(\|\delta_t\|^2 - 2\eta_t \mathbb{E}\bigl[\nabla f(y_t)^\top \delta_t\bigr] + \eta_t^2 \|\nabla f(y_t)\|^2 + \eta_t^2 \sigma^2\bigr),
\end{aligned}
\]
where the inequality follows from expanding \(\mathbb{E}[\|\nabla f(y_t, \zeta_t)\|^2] = \|\nabla f(y_t)\|^2 + \mathbb{E}[\|\nabla f(y_t, \zeta_t) - \nabla f(y_t)\|^2]\). This completes the proof.
\end{proof}

\subsection{Claim 2}
\begin{claim}
Define Schedule-Free and \(\delta_t\) as in Claim 1 and further let Assumptions \ref{ass:assumption-1} and \ref{ass:assumption-3} hold. Then,
\[
    \mathbb{E}\bigl[||\nabla f(y_t,\zeta_t)||^2\bigr]\ge \frac{1}{2}||\nabla f(x_t)||^2-2L^2(1-\beta_t)^2||\delta_t||^2 - \sigma^2.
\]
\end{claim}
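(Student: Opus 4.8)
The plan is to relate $\nabla f(y_t)$ back to $\nabla f(x_t)$ using $L$-smoothness, then pass to the noisy gradient via the bounded-variance assumption. First I would note that since $y_t = (1-\beta_t)z_t + \beta_t x_t$, we have $y_t - x_t = (1-\beta_t)(z_t - x_t) = (1-\beta_t)\delta_t$, so the $L$-Lipschitz gradient bound from Definition \ref{def:l-smoothness} gives $\|\nabla f(y_t) - \nabla f(x_t)\|^2 \le L^2\|y_t-x_t\|^2 = L^2(1-\beta_t)^2\|\delta_t\|^2$. Then I would invoke the elementary inequality $\|a\|^2 \ge \tfrac12\|b\|^2 - \|a-b\|^2$ (a consequence of $\|b\|^2 = \|a + (b-a)\|^2 \le 2\|a\|^2 + 2\|a-b\|^2$) with $a = \nabla f(y_t)$, $b = \nabla f(x_t)$, yielding
\[
\|\nabla f(y_t)\|^2 \;\ge\; \tfrac12\|\nabla f(x_t)\|^2 - L^2(1-\beta_t)^2\|\delta_t\|^2.
\]

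Next I would handle the stochastic term. Taking expectations over $\zeta_t$ and using the variance decomposition $\mathbb{E}[\|\nabla f(y_t,\zeta_t)\|^2] = \|\nabla f(y_t)\|^2 + \mathbb{E}[\|\nabla f(y_t,\zeta_t) - \nabla f(y_t)\|^2] \ge \|\nabla f(y_t)\|^2$ (by unbiasedness from Assumption \ref{ass:assumption-3}), we actually get $\mathbb{E}[\|\nabla f(y_t,\zeta_t)\|^2] \ge \|\nabla f(y_t)\|^2$, which already suffices and makes the $-\sigma^2$ slack harmless. Combining with the deterministic bound above gives
\[
\mathbb{E}\bigl[\|\nabla f(y_t,\zeta_t)\|^2\bigr] \;\ge\; \tfrac12\|\nabla f(x_t)\|^2 - 2L^2(1-\beta_t)^2\|\delta_t\|^2 - \sigma^2,
\]
where the constant $2$ in front of the $\|\delta_t\|^2$ term and the $-\sigma^2$ are loose (any nonnegative slack is fine), so the stated inequality follows a fortiori.

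The only subtlety — hardly an obstacle — is that the paper's chosen form has a factor $2L^2$ rather than $L^2$ and includes a spurious $-\sigma^2$; both make the bound weaker than what the argument actually delivers, so one simply keeps those slack constants to match the statement (they are presumably there to absorb a slightly different splitting constant or to unify with companion lemmas). I expect no real difficulty: the proof is two applications of $L$-smoothness and a Young-type inequality, plus the variance identity. If one wanted the tighter factor, using $\|b\|^2 \le (1+\gamma)\|a\|^2 + (1+\gamma^{-1})\|a-b\|^2$ with a general $\gamma$ recovers a family of such bounds; taking $\gamma=1$ gives the version above.
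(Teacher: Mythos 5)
Your proposal is correct, and it reaches the stated bound with a little room to spare. The overall skeleton matches the paper's: both proofs first use the Lipschitz-gradient half of Definition \ref{def:l-smoothness} together with $y_t-x_t=(1-\beta_t)\delta_t$ to control $\nabla f(y_t)-\nabla f(x_t)$, and both then pass to the stochastic gradient via the bias--variance decomposition $\mathbb{E}[\|\nabla f(y_t,\zeta_t)\|^2]=\|\nabla f(y_t)\|^2+\mathbb{E}[\|\nabla f(y_t,\zeta_t)-\nabla f(y_t)\|^2]$. Where you diverge is the middle step: the paper applies the reverse triangle inequality to get $\|\nabla f(y_t)\|\ge\|\nabla f(x_t)\|-L(1-\beta_t)\|\delta_t\|$, squares via $(a-b)^2\ge a^2-2ab$, and then invokes scalar Young's inequality with $c=2$, which produces the coefficient $2L^2$ on $\|\delta_t\|^2$; you instead apply the vector inequality $\|b\|^2\le 2\|a\|^2+2\|a-b\|^2$ directly, which is one step shorter and yields the strictly tighter coefficient $L^2$ (and, as you observe, the variance decomposition makes the $-\sigma^2$ term pure slack --- the paper's own proof also only uses $\mathbb{E}[\|\nabla f(y_t,\zeta_t)\|^2]\ge\|\nabla f(y_t)\|^2-\sigma^2$, which is weaker than the identity it quotes). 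Since the claimed inequality is weaker than what you derive, relaxing to the stated constants is legitimate, so your argument is a valid and marginally sharper proof of the same claim; the only practical difference is cosmetic, as the downstream Lemma \ref{lem:pot-descent} is calibrated to the $2L^2$ constant.
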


\begin{proof}
We begin by relating the gradient at \(y_t\) to the gradient at \(x_t\). Using the \(L\)-smoothness of \(f\):
\begin{equation}
\label{eq:l-smoothness}
\|\nabla f(y_t) - \nabla f(x_t)\| \leq L\|y_t - x_t\|.
\end{equation}

From the definition of \(y_t\) and \(\delta_t\):
\begin{equation}
\label{eq:ytxt-ztxt}
y_t - x_t = (1-\beta_t)(z_t - x_t) = (1-\beta_t)\delta_t.
\end{equation}

Substituting Equation \ref{eq:ytxt-ztxt} into Equation \ref{eq:l-smoothness}:
\[
    \|\nabla f(y_t) - \nabla f(x_t)\| \leq L(1-\beta_t)||\delta_t||.
\]

Thus, by the triangle inequality:
\[
\begin{aligned}
&||\nabla f(x_t)||-||\nabla f(y_t)||\le ||\nabla f(y_t) - \nabla f(x_t)|| \le L(1-\beta_t)||\delta_t||\\
\Rightarrow\
&\|\nabla f(y_t)\| \geq \|\nabla f(x_t)\| - L(1-\beta_t)\|\delta_t\|.
\end{aligned}
\]

Squaring both sides and using \((a-b)^2 \geq a^2 - 2ab\) for \(a,b \geq 0\):
\[
\|\nabla f(y_t)\|^2 \geq \|\nabla f(x_t)\|^2 - 2L(1-\beta_t)\|\nabla f(x_t)\|\|\delta_t\|.
\]

Taking expectations and using that \(\mathbb{E}[\|\nabla f(y_t,\zeta_t)\|^2] \ge \|\nabla f(y_t)\|^2 - \sigma^2\):
\[
\mathbb{E}\bigl[\|\nabla f(y_t,\zeta_t)\|^2\bigr] \geq \|\nabla f(x_t)\|^2 - 2L(1-\beta_t)\|\nabla f(x_t)\|\|\delta_t\| - \sigma^2.
\]

Finally, using Young's inequality \(ab \leq \frac{a^2}{2c} + \frac{cb^2}{2}\) for \(c>0\):
\[
\mathbb{E}\bigl[\|\nabla f(y_t,\zeta_t)\|^2\bigr] \geq \frac{1}{2}\|\nabla f(x_t)\|^2 -2L^2(1-\beta_t)^2||\delta_t||^2 - \sigma^2.
\]
where we used \(c=2\) to yield the desired result.
\end{proof}

\subsection{Claim 3}
\begin{claim}
Define Schedule-Free and \(\delta_t\) as in Claim 1 and let Assumption \ref{ass:assumption-1} hold. Then,
\[
    ||\nabla f(y_t)||^2\le ||\nabla f(x_t)||^2+L^2(1-\beta_t)^2||\delta_t||^2.
\]
\end{claim}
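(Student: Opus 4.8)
The plan is to mirror the argument used in Claim 2, but now bounding $\|\nabla f(y_t)\|$ from above rather than below, which is why this direction produces a cleaner inequality with no $\sigma^2$ term (we are working with the true gradient $\nabla f(y_t)$, not the stochastic estimate). First I would invoke $L$-smoothness in the gradient-Lipschitz form from Definition \ref{def:l-smoothness}, namely $\|\nabla f(y_t)-\nabla f(x_t)\|^2 \le L^2\|y_t-x_t\|^2$, and substitute the identity $y_t - x_t = (1-\beta_t)\delta_t$ (Equation \ref{eq:ytxt-ztxt} from the proof of Claim 2), giving $\|\nabla f(y_t)-\nabla f(x_t)\|^2 \le L^2(1-\beta_t)^2\|\delta_t\|^2$.

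Next I would write $\nabla f(y_t) = \nabla f(x_t) + \bigl(\nabla f(y_t)-\nabla f(x_t)\bigr)$ and expand the squared norm:
\[
\|\nabla f(y_t)\|^2 = \|\nabla f(x_t)\|^2 + 2\bigl\langle \nabla f(x_t),\, \nabla f(y_t)-\nabla f(x_t)\bigr\rangle + \|\nabla f(y_t)-\nabla f(x_t)\|^2.
\]
The issue is that the cross term need not have a sign, so a naive bound would pick up a factor of $2$ and an extra $\|\nabla f(x_t)\|^2$ via Young's inequality, which is \emph{weaker} than the stated result. To get exactly the claimed inequality with coefficient $1$ on $\|\nabla f(x_t)\|^2$ and coefficient $L^2(1-\beta_t)^2$ on $\|\delta_t\|^2$, the clean route is instead to apply the triangle inequality $\|\nabla f(y_t)\| \le \|\nabla f(x_t)\| + \|\nabla f(y_t)-\nabla f(x_t)\| \le \|\nabla f(x_t)\| + L(1-\beta_t)\|\delta_t\|$, square it, and then handle the cross term $2L(1-\beta_t)\|\nabla f(x_t)\|\|\delta_t\|$.

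Here the main obstacle is that after squaring the triangle inequality one gets $\|\nabla f(y_t)\|^2 \le \|\nabla f(x_t)\|^2 + 2L(1-\beta_t)\|\nabla f(x_t)\|\|\delta_t\| + L^2(1-\beta_t)^2\|\delta_t\|^2$, and the middle term still must be absorbed without inflating the first coefficient past $1$. I would resolve this by noting the statement as written is likely meant to be read with the understanding that it is the loose companion bound to Claim 2 --- i.e. one uses $2ab \le a^2 + b^2$ on the cross term only when it can be folded back, or alternatively one simply observes that in every regime analyzed in the paper $\beta_t = 1$, so $(1-\beta_t)^2 = 0$ and both sides collapse to $\|\nabla f(x_t)\|^2$. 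For the general $\beta_t < 1$ case I would present the direct computation: apply Young's inequality $2L(1-\beta_t)\|\nabla f(x_t)\|\|\delta_t\| \le \epsilon\|\nabla f(x_t)\|^2 + \tfrac{L^2(1-\beta_t)^2}{\epsilon}\|\delta_t\|^2$ and note that any fixed $\epsilon$ yields a bound of the same form $C_1\|\nabla f(x_t)\|^2 + C_2 L^2(1-\beta_t)^2\|\delta_t\|^2$; the precise constants $(1, 1)$ in the claim follow from the sharper observation that $\|\nabla f(y_t)\|^2 - \|\nabla f(x_t)\|^2 = \langle \nabla f(y_t)-\nabla f(x_t),\, \nabla f(y_t)+\nabla f(x_t)\rangle$ combined with the Lipschitz bound --- though strictly this still needs care, so I would verify the exact algebra against the paper's intended use in Lemma \ref{lem:pot-descent} and, if needed, flag that the tight constant holds in the $\beta_t=1$ specialization that all the theorems actually use.
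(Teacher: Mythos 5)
Your diagnosis is correct, and it exposes a genuine error in the paper rather than a gap in your own reasoning. The paper's proof of this claim writes
\[
\|\nabla f(y_t)\|^2=\|\nabla f(x_t)-(\nabla f(x_t)-\nabla f(y_t))\|^2\le\|\nabla f(x_t)\|^2+\|\nabla f(x_t)-\nabla f(y_t)\|^2,
\]
attributing the inequality to the triangle inequality. But $\|a-b\|^2=\|a\|^2-2\langle a,b\rangle+\|b\|^2$, so this step requires $\langle \nabla f(x_t),\,\nabla f(x_t)-\nabla f(y_t)\rangle\ge0$, which nothing in Assumption \ref{ass:assumption-1} guarantees. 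This is exactly the unsigned cross term you isolated. A one-dimensional quadratic $f(x)=\tfrac{L}{2}x^2$ with $\nabla f(y_t)=\nabla f(x_t)+L(1-\beta_t)\delta_t$ of matching sign violates the claimed bound with constants $(1,1)$, so the statement is false in general for $\beta_t<1$; the honest consequences of smoothness are either $\|\nabla f(y_t)\|^2\le\|\nabla f(x_t)\|^2+2L(1-\beta_t)\|\nabla f(x_t)\|\|\delta_t\|+L^2(1-\beta_t)^2\|\delta_t\|^2$ or, after Young's inequality, $(1+\epsilon)\|\nabla f(x_t)\|^2+(1+\epsilon^{-1})L^2(1-\beta_t)^2\|\delta_t\|^2$, precisely as you wrote.

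Your fallback observation is also the right one: every theorem in the paper ultimately sets $\beta_t=1$, in which case $y_t=x_t$ and the claim holds trivially with equality, so none of the headline rates are affected. But for the general $\beta_t<1$ statements of Lemma \ref{lem:pot-descent} the corrected bound would change the coefficient on $\|\nabla f(x_t)\|^2$ coming from the $(1-c_{t+1})^2\eta_t^2A_{t+1}\|\nabla f(y_t)\|^2$ term (inflating it by a factor $1+\epsilon$) and add a corresponding $(1+\epsilon^{-1})$ factor to the $(1-\beta_t)^2\|\delta_t\|^2$ penalty, so the lemma's displayed constants for $\beta_t<1$ would need to be rederived. You were right to flag this rather than reproduce the paper's step.
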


\begin{proof}
We begin by relating the gradient at \(y_t\) to the gradient at \(x_t\). Using the \(L\)-smoothness of \(f\):
\begin{equation}
\label{eq:l-smoothness}
\|\nabla f(y_t) - \nabla f(x_t)\| \leq L\|y_t - x_t\|.
\end{equation}

From the definition of \(y_t\) and \(\delta_t\):
\begin{equation}
\label{eq:ytxt-ztxt}
y_t - x_t = (1-\beta_t)(z_t - x_t) = (1-\beta_t)\delta_t.
\end{equation}

Substituting Equation \ref{eq:ytxt-ztxt} into Equation \ref{eq:l-smoothness}:
\[
    \|\nabla f(y_t) - \nabla f(x_t)\| \leq L(1-\beta_t)||\delta_t||.
\]
Therefore, by the Triangle Inequality:
\[
\begin{aligned}
    ||\nabla f(y_t)||^2 &= ||\nabla f(x_t) - \bigl(\nabla f(x_t)-\nabla f(y_t)\bigr)||^2\\
    &\le ||\nabla f(x_t)||^2 + ||\nabla f(x_t)-\nabla f(y_t)||^2\\
    &\le ||\nabla f(x_t)||^2 + L^2(1-\beta_t)^2||\delta_t||^2.
\end{aligned}
\]
This yields the desired result.
\end{proof}

\subsection{Lemma 1}
\begin{lem}
Define Schedule-Free by the three sequences:
\begin{equation}
\label{eq:sf-updates-appendix-2}
\begin{aligned}
        y_t &= (1 - \beta_t) z_t + \beta_t x_t,\\
        z_{t+1} &= z_t - \eta_t\nabla f(y_t,\zeta_t)\\
        x_{t+1}&=(1-c_{t+1}) x_t+c_{t+1} z_{t+1}.
\end{aligned}
\end{equation}

Let \(V_t\) be defined as
\begin{equation}
\label{eq:potential-function-definition}
    V_t = f(x_t)+A_t||\delta_t||^2,\quad A_t=\frac{c_{t+1}(1-L\eta_t c_{t+1})}{2\eta_t(1-c_{t+1})^2}.
\end{equation}

Then, if Assumptions \ref{ass:assumption-1} and \ref{ass:assumption-3} hold, 
\[
\begin{aligned}
    \mathbb{E}\bigl[V_{t+1}\bigr] &\le V_t
-\frac{c_{t+1}\eta_t}{4}
  \|\nabla f(x_t)\|^2 + \frac{c_{t+1}\eta_t}{2}\sigma^2\\
&\quad+\Bigr(\frac{c_{t+1}}{2\eta_t}-\frac{c_t(1-L\eta_t c_t)}{2\eta_t(1-c_t)^2}
-\frac{3L^3c_{t+1}^2\eta_t^2}{2}(1-\beta_t)^2
+\frac{5L^2c_{t+1}\eta_t}{2}(1-\beta_t)^2\Bigr)||\delta_t||^2\\
\end{aligned}
\]
\end{lem}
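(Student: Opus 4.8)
The plan is to prove the one-step descent inequality for $V_t$ by combining the smoothness descent on $f(x_{t+1})$ with the recursion for $\|\delta_{t+1}\|^2$ from Claim 1, so that the choice of $A_t$ cancels the dangerous $\|\delta_{t+1}\|^2$ cross-terms. First I would apply $L$-smoothness to the update $x_{t+1}=x_t+c_{t+1}(z_{t+1}-x_t)=x_t-c_{t+1}(\delta_t-(z_{t+1}-z_t))$; more directly, since $x_{t+1}-x_t=c_{t+1}(z_{t+1}-x_t)$ and $z_{t+1}-x_t=\delta_t-\eta_t\nabla f(y_t,\zeta_t)$, we get a descent bound
\[
f(x_{t+1})\le f(x_t)+\langle\nabla f(x_t),x_{t+1}-x_t\rangle+\tfrac{L}{2}\|x_{t+1}-x_t\|^2.
\]
Taking expectations, the linear term becomes $c_{t+1}\langle\nabla f(x_t),\delta_t\rangle-c_{t+1}\eta_t\langle\nabla f(x_t),\nabla f(y_t)\rangle$, and the quadratic term is $\tfrac{Lc_{t+1}^2}{2}\mathbb{E}\|\delta_t-\eta_t\nabla f(y_t,\zeta_t)\|^2$, which by Claim 1's expansion equals $\tfrac{Lc_{t+1}^2}{2}(\|\delta_t\|^2-2\eta_t\langle\nabla f(y_t),\delta_t\rangle+\eta_t^2\|\nabla f(y_t)\|^2+\eta_t^2\sigma^2)$.

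Next I would handle the potential's second component. From Claim 1, $\mathbb{E}\|\delta_{t+1}\|^2\le(1-c_{t+1})^2(\|\delta_t\|^2-2\eta_t\langle\nabla f(y_t),\delta_t\rangle+\eta_t^2\|\nabla f(y_t)\|^2)+(1-c_{t+1})^2\eta_t^2\sigma^2$, so $A_{t+1}\mathbb{E}\|\delta_{t+1}\|^2$ contributes a term proportional to $A_{t+1}(1-c_{t+1})^2$ times that bracket. The key algebraic fact is that $A_t$ was reverse-engineered so that adding $A_{t+1}(1-c_{t+1})^2$ to the $\tfrac{Lc_{t+1}^2}{2}$ coefficient from the descent step produces exactly the structure needed; here I would substitute the definition $A_{t+1}=\tfrac{c_{t+2}(1-L\eta_{t+1}c_{t+2})}{2\eta_{t+1}(1-c_{t+2})^2}$ — wait, more carefully, the $\|\delta_t\|^2$ coefficient in $V_{t+1}$ is governed by $A_{t+1}$ evaluated with the step-$t$ recursion, and the telescoping structure in the final bracket ($\tfrac{c_{t+1}}{2\eta_t}-\tfrac{c_t(1-L\eta_t c_t)}{2\eta_t(1-c_t)^2}$) indicates that we should bound $A_{t+1}(1-c_{t+1})^2\le \tfrac{c_{t+1}}{2\eta_t}$ using $A_{t+1}(1-c_{t+1})^2 = \tfrac{c_{t+1}(1-L\eta_t c_{t+1})}{2\eta_t}\le\tfrac{c_{t+1}}{2\eta_t}$ after re-indexing — i.e. the definition of $A_t$ makes $A_t(1-c_t)^2=\tfrac{c_t(1-L\eta_{t-1}c_t)}{2\eta_{t-1}}$, so the $f(x_t)$-term in $V_t$ already contains $-A_t\|\delta_t\|^2$ implicitly via the previous step, which is where the $-\tfrac{c_t(1-L\eta_tc_t)}{2\eta_t(1-c_t)^2}\|\delta_t\|^2$ appears.

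Then I would collect the gradient terms. The cross terms $-2\eta_t\langle\nabla f(y_t),\delta_t\rangle$ from the two sources combine, and crucially the coefficient of $\langle\nabla f(y_t),\delta_t\rangle$ should vanish or be controlled: adding the $\tfrac{Lc_{t+1}^2}{2}$-weighted copy and the $A_{t+1}(1-c_{t+1})^2$-weighted copy, the total weight on $-2\eta_t\langle\nabla f(y_t),\delta_t\rangle$ is $\tfrac{Lc_{t+1}^2}{2}+A_{t+1}(1-c_{t+1})^2=\tfrac{Lc_{t+1}^2}{2}+\tfrac{c_{t+1}(1-L\eta_tc_{t+1})}{2\eta_t}=\tfrac{c_{t+1}}{2\eta_t}$ (using re-indexed $A$), which combines with the $+c_{t+1}\langle\nabla f(x_t),\delta_t\rangle$ term from the linear descent. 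To bound these I would write $\nabla f(y_t)=\nabla f(x_t)+(\nabla f(y_t)-\nabla f(x_t))$, use $\|\nabla f(y_t)-\nabla f(x_t)\|\le L(1-\beta_t)\|\delta_t\|$ (as in Claims 2–3), and apply Young's inequality to the resulting $\langle\nabla f(x_t),\delta_t\rangle$, $\langle\nabla f(y_t)-\nabla f(x_t),\delta_t\rangle$, and $\|\nabla f(y_t)\|^2$ terms; Claim 3 bounds $\|\nabla f(y_t)\|^2\le\|\nabla f(x_t)\|^2+L^2(1-\beta_t)^2\|\delta_t\|^2$. Carefully tracking constants, the $\|\nabla f(x_t)\|^2$ terms should net out to $-\tfrac{c_{t+1}\eta_t}{4}\|\nabla f(x_t)\|^2$ (the $c_{t+1}\langle\nabla f(x_t),\delta_t\rangle$ Young-split donates $+\tfrac{c_{t+1}\eta_t}{4}\|\nabla f(x_t)\|^2$ which partially offsets a $-\tfrac{c_{t+1}\eta_t}{2}\|\nabla f(x_t)\|^2$-type term coming from $-c_{t+1}\eta_t\langle\nabla f(x_t),\nabla f(y_t)\rangle$ plus the quadratic $\|\nabla f(y_t)\|^2$ contributions), and the residual $\|\delta_t\|^2$ and $(1-\beta_t)^2\|\delta_t\|^2$ pieces assemble into the displayed bracket with coefficients $-\tfrac{3L^3c_{t+1}^2\eta_t^2}{2}(1-\beta_t)^2+\tfrac{5L^2c_{t+1}\eta_t}{2}(1-\beta_t)^2$, while $\sigma^2$ picks up $\tfrac{c_{t+1}\eta_t}{2}$ from the $\tfrac{Lc_{t+1}^2}{2}\eta_t^2\sigma^2$ and $A_{t+1}(1-c_{t+1})^2\eta_t^2\sigma^2=\tfrac{c_{t+1}(1-L\eta_tc_{t+1})\eta_t}{2}\sigma^2$ summing to $\le\tfrac{c_{t+1}\eta_t}{2}\sigma^2$ (using $L\eta_tc_{t+1}\ge0$) — here one uses $\tfrac{Lc_{t+1}^2\eta_t^2}{2}+\tfrac{c_{t+1}\eta_t(1-L\eta_tc_{t+1})}{2}=\tfrac{c_{t+1}\eta_t}{2}$ exactly.

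The main obstacle will be the bookkeeping in the third step: ensuring that the coefficient on $-2\eta_t\langle\nabla f(y_t),\delta_t\rangle$ telescopes correctly against $A_t$'s definition at the \emph{previous} index, and then splitting the surviving inner products via Young's inequality with \emph{exactly} the right constants so that (i) the $\|\nabla f(x_t)\|^2$ coefficient lands at $-\tfrac{c_{t+1}\eta_t}{4}$ (not merely $\le0$), and (ii) the $(1-\beta_t)^2\|\delta_t\|^2$ terms produce precisely $-\tfrac{3L^3}{2}c_{t+1}^2\eta_t^2+\tfrac{5L^2}{2}c_{t+1}\eta_t$ rather than some looser bound. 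I expect to need the condition $L\eta_tc_{t+1}\le1$ in two places: to guarantee $A_t\ge0$ (so $V_t$ is a genuine nonnegative-ish potential) and to absorb the $-L^2\eta_t^2c_{t+1}^2\|\nabla f(y_t)\|^2$-type term so it does not spoil the $\|\nabla f(x_t)\|^2$ coefficient. Once the constants are pinned down the rest is routine collection of like terms.
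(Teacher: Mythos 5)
Your overall architecture matches the paper's: bound \(\mathbb{E}[f(x_{t+1})]\) by \(L\)-smoothness using \(x_{t+1}-x_t=c_{t+1}(\delta_t-\eta_t\nabla f(y_t,\zeta_t))\), add \(A_{t+1}\mathbb{E}\|\delta_{t+1}\|^2\) via Claim~1, exploit the design of \(A\), and finish with Claims~2--3. The gap is in how the inner-product terms are disposed of, and it is the linchpin of the lemma. In the paper the first-order descent term is written as \(c_{t+1}\mathbb{E}\bigl[\nabla f(y_t)^\top(\delta_t-\eta_t\nabla f(y_t))\bigr]\) --- the gradient is evaluated at \(y_t\), not \(x_t\) --- so the total coefficient of \(\mathbb{E}[\nabla f(y_t)^\top\delta_t]\) is \(c_{t+1}(1-L\eta_t c_{t+1})-2\eta_t A_{t+1}(1-c_{t+1})^2\), and the choice \(A_{t+1}=\tfrac{c_{t+1}(1-Lc_{t+1}\eta_t)}{2\eta_t(1-c_{t+1})^2}\) makes it \emph{exactly zero}. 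No Young's inequality is ever applied to an inner product. The bracket's \(\tfrac{c_{t+1}}{2\eta_t}-\tfrac{c_t(1-L\eta_t c_t)}{2\eta_t(1-c_t)^2}\) is just \(A_{t+1}(1-c_{t+1})^2+\tfrac{Lc_{t+1}^2}{2}-A_t\) after substituting the closed form, and both the \(-\tfrac{c_{t+1}\eta_t}{4}\|\nabla f(x_t)\|^2\) and the \((1-\beta_t)^2\|\delta_t\|^2\) pieces come solely from applying Claim~2 to the \(\mathbb{E}\|\nabla f(y_t,\zeta_t)\|^2\) term (whose net coefficient collapses to \(-\tfrac{c_{t+1}\eta_t}{2}\)) and Claim~3 to the \(A_{t+1}\)-weighted copy of \(\|\nabla f(y_t)\|^2\).

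Your route keeps \(\nabla f(x_t)\) in the linear term and proposes to Young-split the surviving inner products; as described this cannot land on the stated bracket. Splitting \(c_{t+1}\langle\nabla f(x_t),\delta_t\rangle\le\tfrac{c_{t+1}\eta_t}{4}\|\nabla f(x_t)\|^2+\tfrac{c_{t+1}}{\eta_t}\|\delta_t\|^2\) injects a \(\|\delta_t\|^2\) term with no \((1-\beta_t)^2\) factor that is already twice the \(\tfrac{c_{t+1}}{2\eta_t}\) the lemma allows, while simultaneously spending the entire gradient-norm budget. Alternatively, pairing it with the residual \(-c_{t+1}\langle\nabla f(y_t),\delta_t\rangle\) leaves \(c_{t+1}\langle\nabla f(x_t)-\nabla f(y_t),\delta_t\rangle\le c_{t+1}L(1-\beta_t)\|\delta_t\|^2\), which is first order in \((1-\beta_t)\) and absent from the lemma. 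The missing idea is that the cancellation is exact by construction, and that this requires expressing the linear term through \(\nabla f(y_t)\) (immediate when \(\beta_t=1\) since then \(y_t=x_t\); for \(\beta_t<1\) the paper makes this substitution without justification, which you correctly avoided but at the cost of being unable to reach the claimed coefficients). Your hesitation over whether \(A_{t+1}\) means \(\tfrac{c_{t+2}(\cdot)}{\cdot}\) or \(\tfrac{c_{t+1}(\cdot)}{\cdot}\) does point at a real index inconsistency in the paper, but resolving it does not rescue the Young-based plan.
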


\begin{proof}
We begin by considering the expectation of \(V_{t+1}\) conditioned on \(\zeta_t\):
\begin{equation}
\label{eq:expected-next-pot}
    \mathbb{E}\bigl[V_{t+1}\bigr] = \mathbb{E}\bigl[f(x_{t+1})\bigr]+A_{t+1}\mathbb{E}\bigl[||\delta||^2\bigr]
\end{equation}

By the \(L\)-smoothness of \(f\), for any \(u,v\in\mathbb{R}^{n}\) we have
\begin{equation}
\label{eq:l-smoothness-lem3}
\begin{aligned}
f(u)&\leq f(v)+\nabla f(v)^{\top}(u-v)+\frac{L}{2}\|u-v\|^{2}\\
\Rightarrow\mathbb{E}\bigl[f(u)\bigr] &\le \mathbb{E}\bigl[f(v)\bigr]+\mathbb{E}\bigl[\nabla f(v)^{\top}(u-v)\bigr]+\frac{L}{2}\mathbb{E}\bigl[\|u-v\|^{2}\bigr]
\end{aligned}
\end{equation}

Substituting \(u=x_{t+1}\), \(v=x_{t}\) into Equation \ref{eq:l-smoothness-lem3} and noting that 
\[
x_{t+1}=x_{t}+c_{t+1}(\delta_{t}-\eta_t\nabla f(y_{t})),
\]
we obtain
\[
\begin{aligned}
\mathbb{E}\bigl[f(x_{t+1})\bigr] &\leq \mathbb{E}\bigl[f(x_{t})\bigr]+c_{t+1}\mathbb{E}\bigl[\nabla f(y_{t})^{\top}(\delta_{t}-\eta\nabla f(y_{t}))\bigr]+\frac{Lc_{t+1}^{2}}{2}\mathbb{E}\bigl[\|\delta_{t}-\eta_t\nabla f(y_{t})\|^{2}\bigr]\\
&= f(x_{t})-c_{t+1}\eta_t\mathbb{E}\bigl[\|\nabla f(y_{t})\|^{2}\bigl]+c_{t+1}\mathbb{E}\bigl[\nabla f(y_{t})^{\top}\delta_{t}\bigr]\\
&\quad +\frac{Lc_{t+1}^{2}}{2}\Big{(}\mathbb{E}\bigl[\|\delta_{t}\|^{2}\bigr]-2\eta_t\mathbb{E}\bigl[\nabla f(y_{t})^{\top}\delta_{t}\bigl]+\eta_t^{2}\mathbb{E}\bigl[\|\nabla f(y_{t})\|^{2}\bigl]\Big{)}.
\end{aligned}
\]

Rearranging, we have
\begin{equation}
\label{eq:descent-on-f}
\begin{aligned}
\mathbb{E}\bigr[f(x_{t+1})\bigr]\leq f(x_{t}) - \left(c_{t+1}\eta_t-\frac{Lc_{t+1}^{2}\eta_t^{2}}{2}\right)\mathbb{E}\bigl[\|\nabla f(y_{t})\|^{2}\bigr]\\
+ (c_{t+1}-Lc_{t+1}^{2}\eta_t)\mathbb{E}\bigl[\nabla f(y_{t})^{\top}\delta_{t}\bigr] + \frac{Lc_{t+1}^{2}}{2}\|\delta_{t}\|^{2}.
\end{aligned}
\end{equation}

Next, from Claim 1 we have:
\begin{equation}
\label{eq:delta-recurse-restate}
\mathbb{E}\bigl[||\delta_{t+1}||^2\bigr] \le (1-c_{t+1})^2\bigl(||\delta_t||^2-2\eta_t\mathbb{E}\bigl[\nabla f(y_t)^\top \delta_t\bigr] +\eta_t^2||\nabla f(y_t)||^2\bigr) + (1-c_{t+1})^2\eta_t^2\sigma^2
\end{equation}

Thus, after substituting Equation \ref{eq:descent-on-f} and Equation \ref{eq:delta-recurse-restate} into Equation \ref{eq:expected-next-pot}, we have:
\begin{equation}
\label{eq:initial-desc-any-At}
\begin{aligned}
\mathbb{E}\bigl[V_{t+1}\bigr]&\le f(x_{t})-c_{t+1}\eta_t\mathbb{E}\bigl[\|\nabla f(y_{t})\|^{2}\bigl]+c_{t+1}\mathbb{E}\bigl[\nabla f(y_{t})^{\top}\delta_{t}\bigr]\\
&\quad\quad +\frac{Lc_{t+1}^{2}}{2}\Big{(}\|\delta_{t}\|^{2}-2\eta_t\mathbb{E}\bigl[\nabla f(y_{t})^{\top}\delta_{t}\bigl]+\eta_t^{2}\mathbb{E}\bigl[\|\nabla f(y_{t})\|^{2}\bigl]\Big{)}\\
&\quad\quad+(1-c_{t+1})^2A_{t+1}\bigl(||\delta_t||^2-2\eta_t\mathbb{E}\bigl[\nabla f(y_t)^\top \delta_t\bigr] +\eta_t^2||\nabla f(y_t)||^2\bigr) + (1-c_{t+1})^2\eta_t^2A_{t+1}\sigma^2\\
&=V_t+\Bigr(c_{t+1}(1-L\eta_tc_{t+1})-2\eta_tA_{t+1}(1-c_{t+1})^2\Bigr)\mathbb{E}\bigl[\nabla f(y_t)^\top \delta_t\bigr]\\
&\quad\quad+\Bigl(\frac{Lc_{t+1}^2\eta_t^2}{2}-c_{t+1}\eta_t\Bigr)\mathbb{E}\bigl[||\nabla f(y_t)||^2\bigr]+(1-c_{t+1})^2\eta_t^2A_{t+1}||\nabla f(y_t)||^2\\
&\quad\quad + \Bigr(A_{t+1}(1-c_{t+1})^2-A_t+\frac{Lc_{t+1}^2}{2}\bigl)||\delta_t||^2 + (1-c_{t+1})^2\eta_t^2A_{t+1}\sigma^2,
\end{aligned}
\end{equation}

where we substitute \(V_t=f(x_t)+A_t||\delta_t||^2\). Let us require that \(1-Lc_{t+1}\eta_t\geq 0\) and set
\begin{equation}
\label{eq:at1-def}
A_{t+1}=\frac{c_{t+1}(1-Lc_{t+1}\eta_t)}{2\eta_t(1-c_{t+1})^{2}}.
\end{equation}

Substituting into Equation \ref{eq:initial-desc-any-At}, this choice makes the inner-product term zero. Then we get
\begin{equation}
\label{eq:second-desc-any-At}
\begin{aligned}
\mathbb{E}\bigl[V_{t+1}\bigr]&\le V_t+\Bigl(\frac{Lc_{t+1}^2\eta_t^2}{2}-c_{t+1}\eta_t\Bigr)\mathbb{E}\bigl[||\nabla f(y_t)||^2\bigr]\\
&\quad\quad+(1-c_{t+1})^2\eta_t^2A_{t+1}||\nabla f(y_t)||^2+(1-c_{t+1})^2\eta_t^2A_{t+1}\sigma^2\\
&\quad\quad+\Bigr(A_{t+1}(1-c_{t+1})^2-A_t+\frac{Lc_{t+1}^2}{2}\bigl)||\delta_t||^2. \\
\end{aligned}
\end{equation}

Observe that for \(1-Lc_{t+1}\eta_t\geq 0\) we have \(\frac{Lc_{t+1}^2\eta_t^2}{2}-c_{t+1}\eta_t\le0\). Therefore, we can substitute for \(\mathbb{E}\bigl[||\nabla f(y_t)||^2\bigr]\) (Claim 2) and \(||\nabla f(y_t)||^2\) (Claim 3) into Equation \ref{eq:second-desc-any-At} as follows:
\begin{equation}
\label{eq:third-desc-any-At}
\begin{aligned}
\mathbb{E}\bigl[V_{t+1}\bigr]&\le V_t+\Bigl(\frac{Lc_{t+1}^2\eta_t^2}{2}-c_{t+1}\eta_t\Bigr)\biggl[\frac{1}{2}||\nabla f(x_t)||^2-2L^2(1-\beta_t)^2||\delta_t||^2 - \sigma^2\biggr]\\
&\quad\quad+(1-c_{t+1})^2\eta_t^2A_{t+1}\Bigl[||\nabla f(x_t)||^2+L^2(1-\beta_t)^2||\delta_t||^2\Bigr]+(1-c_{t+1})^2\eta_t^2A_{t+1}\sigma^2\\
&\quad\quad+\Bigr(A_{t+1}(1-c_{t+1})^2-A_t+\frac{Lc_{t+1}^2}{2}\bigl)||\delta_t||^2 \\
&= V_t
+ \Bigl(\tfrac{Lc_{t+1}^2\eta_t^2}{4}
  - \tfrac{c_{t+1}\eta_t}{2}
  + (1-c_{t+1})^2\eta_t^2A_{t+1}\Bigr)\|\nabla f(x_t)\|^2\\
&\quad
+ \Bigl[L^2(1-c_{t+1})^2(1-\beta_t)^2\eta_t^2A_{t+1}-2L^2(1-\beta_t)^2\bigl(\tfrac{Lc_{t+1}^2\eta_t^2}{2}
  - c_{t+1}\eta_t\bigr)\\
  &\quad\quad+A_{t+1}(1-c_{t+1})^2- A_t+ \tfrac{Lc_{t+1}^2}{2}\Bigr]\|\delta_t\|^2\\
&\quad
+ \Bigl(-\tfrac{Lc_{t+1}^2\eta_t^2}{2}
  + c_{t+1}\eta_t
  + (1-c_{t+1})^2\eta_t^2A_{t+1}\Bigr)\sigma^2\\
&\le V_t
+ \frac{1}{2}\Bigl(\tfrac{Lc_{t+1}^2\eta_t^2}{2}
  - c_{t+1}\eta_t
  + (1-c_{t+1})^2\eta_t^2A_{t+1}\Bigr)\|\nabla f(x_t)\|^2\\
&\quad
+ \Bigl[L^2(1-c_{t+1})^2(1-\beta_t)^2\eta_t^2A_{t+1}-2L^2(1-\beta_t)^2\bigl(\tfrac{Lc_{t+1}^2\eta_t^2}{2}
  - c_{t+1}\eta_t\bigr)\\
  &\quad\quad+A_{t+1}(1-c_{t+1})^2- A_t+ \tfrac{Lc_{t+1}^2}{2}\Bigr]\|\delta_t\|^2\\
&\quad
+ \Bigl(-\tfrac{Lc_{t+1}^2\eta_t^2}{2}
  + c_{t+1}\eta_t
  + (1-c_{t+1})^2\eta_t^2A_{t+1}\Bigr)\sigma^2\\
\end{aligned}
\end{equation}

Substituting our definition of \(A_{t+1}\) from Equation \ref{eq:at1-def} into Equation \ref{eq:third-desc-any-At}, we get
\[
\begin{aligned}
\mathbb{E}\bigl[V_{t+1}\bigr] &\le V_t
-\frac{c_{t+1}\eta_t}{4}
  \|\nabla f(x_t)\|^2 + \frac{c_{t+1}\eta_t}{2}\sigma^2\\
&\quad+\Bigr(\frac{c_{t+1}}{2\eta_t}-\frac{c_t(1-L\eta_t c_t)}{2\eta_t(1-c_t)^2}
-\frac{3L^3c_{t+1}^2\eta_t^2}{2}(1-\beta_t)^2
+\frac{5L^2c_{t+1}\eta_t}{2}(1-\beta_t)^2\Bigr)||\delta_t||^2\\
\end{aligned}
\]
which is the desired result.
\end{proof}

\clearpage
\section{Main Theorems}

\subsection{Theorem 3}
\begin{thm}
Define Schedule-Free by the three sequences:
\[
\begin{aligned}
        y_t &= (1 - \beta_t) z_t + \beta_t x_t,\\
        z_{t+1} &= z_t - \eta_t\nabla f(y_t,\zeta_t)\\
        x_{t+1}&=(1-c_{t+1}) x_t+c_{t+1} z_{t+1},
\end{aligned}
\]
and let \(V_t\) be defined as
\[
    V_t = f(x_t)+A_t||\delta_t||^2,\quad A_t=\frac{c_{t+1}(1-L\eta_t c_{t+1})}{2\eta_t(1-c_{t+1})^2},
\]
where Assumptions \ref{ass:assumption-1} and \ref{ass:assumption-3} hold. Then for constant \(\eta \le 1/L\), \(c_{t+1} = 1/(t+1)\), and all \(t\ge 2\),
\[
    \mathbb{E}\bigl[V_{t+1}\bigr] \le V_t - \frac{\eta}{4(t+1)}||\nabla f(x_t)||^2+\frac{\eta}{2(t+1)}\sigma^2
\]
which after telescoping yields
\[
    \min_{2\le t\le T-1}||\nabla f(x_t)||^2\le \frac{1}{T}\sum_{t=2}^{T-1}||\nabla f(x_t)||^2\le \frac{4V_2}{\eta \log T}+2\sigma^2.
\]
\end{thm}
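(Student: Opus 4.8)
The plan is to instantiate the one-step descent of Lemma~\ref{lem:pot-descent} (equivalently Lemma~1 in the appendix) at the specific schedule $c_{t+1}=1/(t+1)$, $\eta_t=\eta$, $\beta_t=1$, and then telescope. First I would check that the hypothesis $L\eta_t c_{t+1}\le 1$ holds: since $\eta\le 1/L$ and $c_{t+1}=1/(t+1)\le 1$, we have $L\eta c_{t+1}\le 1$ for all $t\ge 0$, so the lemma applies. With $\beta_t=1$ the $(1-\beta_t)^2$ terms vanish, so the bracketed coefficient on $\|\delta_t\|^2$ collapses to $\frac{c_{t+1}}{2\eta}-\frac{c_t(1-L\eta c_t)}{2\eta(1-c_t)^2}$.

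The key computational step is to show this bracket is $\le 0$ for $t\ge 2$. Substituting $c_{t+1}=1/(t+1)$ and $c_t=1/t$ one gets $\frac{1}{2\eta(t+1)}-\frac{(1/t)(1-L\eta/t)}{2\eta(1-1/t)^2}=\frac{1}{2\eta(t+1)}-\frac{(1/t)(1-L\eta/t)}{2\eta((t-1)/t)^2}=\frac{1}{2\eta(t+1)}-\frac{t(1-L\eta/t)}{2\eta(t-1)^2}=\frac{1}{2\eta(t+1)}-\frac{t-L\eta}{2\eta(t-1)^2}$. So it suffices to verify $(t-1)^2\le (t+1)(t-L\eta)$, i.e.\ $t^2-2t+1\le t^2+t-L\eta t-L\eta$, i.e.\ $0\le 3t-1-L\eta(t+1)$; since $L\eta\le 1$ this is implied by $0\le 3t-1-(t+1)=2t-2$, which holds for all $t\ge 1$. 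Hence the $\|\delta_t\|^2$ coefficient is nonpositive for $t\ge 2$ (in fact $t\ge 1$), and dropping it gives the stated per-step inequality $\mathbb{E}[V_{t+1}]\le V_t-\frac{\eta}{4(t+1)}\|\nabla f(x_t)\|^2+\frac{\eta}{2(t+1)}\sigma^2$.

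Next I would telescope this inequality from $t=2$ to $t=T-1$. Summing gives $\sum_{t=2}^{T-1}\frac{\eta}{4(t+1)}\|\nabla f(x_t)\|^2\le V_2-\mathbb{E}[V_T]+\sigma^2\sum_{t=2}^{T-1}\frac{\eta}{2(t+1)}\le V_2+\frac{\eta\sigma^2}{2}\sum_{t=2}^{T-1}\frac{1}{t+1}$, using $V_T\ge 0$ (which holds because $A_T\ge 0$ under $L\eta c_{T+1}\le 1$ and $f(x_T)\ge f^\star$). Lower-bounding $\frac{1}{t+1}\ge$ nothing helpful directly, so instead I factor out the minimum: $\min_{2\le t\le T-1}\|\nabla f(x_t)\|^2\cdot\sum_{t=2}^{T-1}\frac{\eta}{4(t+1)}\le V_2+\frac{\eta\sigma^2}{2}\sum_{t=2}^{T-1}\frac{1}{t+1}$. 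Dividing through by $\frac{\eta}{4}\sum_{t=2}^{T-1}\frac{1}{t+1}$ yields $\min\|\nabla f(x_t)\|^2\le \frac{4V_2}{\eta\sum_{t=2}^{T-1}1/(t+1)}+2\sigma^2$. The final move is the harmonic-sum bound $\sum_{t=2}^{T-1}\frac{1}{t+1}=\sum_{k=3}^{T}\frac1k\ge\int_3^{T+1}\frac{dx}{x}=\log(T+1)-\log 3\ge\log T$ for $T$ large enough (or more crudely $\sum_{k=3}^T 1/k\ge \log T$ can be arranged by absorbing constants), giving the claimed $\frac{4V_2}{\eta\log T}+2\sigma^2$; the deterministic statement follows by setting $\sigma^2=0$.

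The main obstacle is not any single step but getting the harmonic-sum constant clean enough that the bound reads exactly $4V_2/(\eta\log T)$ rather than $4V_2/(\eta(\log T-c))$; one either states it with an explicit $T_0$ or notes $\sum_{k=3}^{T}1/k\ge \log T - 1 + \text{(small)}$ and folds the discrepancy into the regime $T\ge 2$, accepting a harmless constant. A secondary point worth stating carefully is that the per-step inequality and the nonpositivity of the $\|\delta_t\|^2$ coefficient both require $t\ge 2$ (the coefficient check uses $t\ge 1$, but $A_t$ and the telescoping indexing are cleanest starting at $t=2$, which is why the minimum runs over $2\le t\le T-1$).
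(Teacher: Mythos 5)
Your proposal is correct and follows essentially the same route as the paper: apply the potential-descent lemma with $\beta_t=1$, verify that the $\|\delta_t\|^2$ coefficient is nonpositive via $(t-1)^2\le(t+1)(t-L\eta)$ (equivalent to the paper's sign check on $(L\eta-3)t+(L\eta+1)$), and telescope. If anything, you are more careful than the paper about the harmonic-sum constant --- the paper silently treats $\sum_{t=2}^{T-1}\tfrac{1}{t+1}$ as $\ge\log T$, which is the same constant-absorption issue you flag explicitly.
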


\begin{proof}
By Lemma \ref{lem:pot-descent} we have:
\begin{equation}
\label{eq:baseline-pot-func-proof}
\begin{aligned}
\mathbb{E}\bigl[V_{t+1}\bigr] &\le V_t
-\frac{c_{t+1}\eta_t}{4}
  \|\nabla f(x_t)\|^2 + \frac{c_{t+1}\eta_t}{2}\sigma^2\\
&\quad+\Bigr(\frac{c_{t+1}}{2\eta_t}-\frac{c_t(1-L\eta_t c_t)}{2\eta_t(1-c_t)^2}
-\frac{3L^3c_{t+1}^2\eta_t^2}{2}(1-\beta_t)^2
+\frac{5L^2c_{t+1}\eta_t}{2}(1-\beta_t)^2\Bigr)||\delta_t||^2.
\end{aligned}
\end{equation}

Substituting \(c_{t+1}=1/(t+1)\) and constant \(\eta_t=\eta\) yields:
\begin{equation}
\label{eq:equation-a}
\begin{aligned}
\mathbb{E}\bigl[V_{t+1}\bigr] &\le V_t
-\frac{\eta}{4(t+1)}
  \|\nabla f(x_t)\|^2 + \frac{\eta}{2(t+1)}\sigma^2\\
&\quad+\Bigr(\frac{1}{2\eta(t+1)}-\frac{(1-\tfrac{L\eta}{t})}{2\eta t(\tfrac{t-1}{t})^2}
-\frac{3L^3\eta^2}{2(t+1)^2}(1-\beta_t)^2
+\frac{5L^2\eta}{2(t+1)}(1-\beta_t)^2\Bigr)||\delta_t||^2\\
&= V_t-\frac{\eta}{4(t+1)}\|\nabla f(x_t)\|^2 + \frac{\eta}{2(t+1)}\sigma^2\\
&\quad+\Bigl(\frac{(L\eta - 3)t+(L\eta + 1)}{2\eta(t+1)(t-1)^2}+\frac{L^2\eta}{2(t+1)}\Bigl(5-\frac{3L\eta}{t+1}\Bigr)(1-\beta_t)^2\Bigr)||\delta_t||^2\\
&=V_t-\frac{c_{t+1}\eta}{4}\|\nabla f(x_t)\|^2 + \frac{c_{t+1}\eta}{2}\sigma^2\\
&\quad+\Bigl(\frac{(L\eta - 3)t+(L\eta + 1)}{2\eta(t+1)(t-1)^2} + \frac{L^2\eta(1-\beta_t)^2}{2(t+1)^2}\bigl(5t + 5-3L\eta\bigr)\Bigr)||\delta_t||^2\\
&=V_t-\frac{c_{t+1}\eta}{4}\|\nabla f(x_t)\|^2 + \frac{c_{t+1}\eta}{2}\sigma^2\\
&\quad+\Bigl(\frac{(L\eta-3)t^2+(L\eta-2)t+(L\eta+1)}{2\eta(t+1)^2(t-1)^2}+\\
&\quad\quad+\frac{5L^2\eta^2t^2-3L^3\eta^3t+(3L^3\eta^3-5L^2\eta^2)}{2\eta(t+1)^2(t-1)^2}(1-\beta_t)^2\Bigr)||\delta_t||^2\\
&= V_t-\frac{c_{t+1}\eta}{4}\|\nabla f(x_t)\|^2 + \frac{c_{t+1}\eta}{2}\sigma^2\\
&\quad+\frac{1}{2\eta(t+1)^2(t-1)^2}\biggl((L\eta-3)t^2+(L\eta-2)t+(L\eta+1)+\\
&\quad\quad+\Bigl(5L^2\eta^2t^2-3L^3\eta^3t+(3L^3\eta^3-5L^2\eta^2)\Bigr)(1-\beta_t)^2\biggr)||\delta_t||^2\\
&= V_t -\frac{c_{t+1}\eta}{4}\|\nabla f(x_t)\|^2 
+\frac{c_{t+1}\eta}{2}\sigma^2\\
&\quad+\frac{1}{2\eta(t+1)^2(t-1)^2}\Bigl(\bigl[(L\eta-3)+5L^2\eta^2(1-\beta_t)^2\bigr]t^2\\
&\quad\quad+\bigl[(L\eta-2)-3L^3\eta^3(1-\beta_t)^2\bigr]t\\
&\quad\quad+\bigl[(L\eta+1)+(3L^3\eta^3-5L^2\eta^2)(1-\beta_t)^2\bigr]
\Bigr)\|\delta_t\|^2.
\end{aligned}
\end{equation}

We wish to show that, for some \(t^*\ge2\) the coefficient of \(||\delta_t||^2\) is negative for all \(t\ge t^*\). To do so, we simplify the negativity condition by considering the negativity of each \(t^2\), \(t\), and constant-ordered terms in the coefficient of the \(||\delta_t||^2\) term:
\[
\begin{aligned}
    &\bigl[(L\eta-3)+5L^2\eta^2(1-\beta_t)^2\bigr]t^2 \le 0\\
    \Rightarrow &(1-\beta_t)^2\le \frac{3-L\eta}{5L^2\eta^2}\\[15pt]
    &\bigl[(L\eta-2)-3L^3\eta^3(1-\beta_t)^2\bigr]t\le 0\\
    \Rightarrow&(1-\beta_t)^2\ge \frac{L\eta-2}{3L^3\eta^3},\quad \text{(which is always true for } L\eta\le 1)\\[15pt]
    &(L\eta+1)+(3L^3\eta^3-5L^2\eta^2)(1-\beta_t)^2\le0\\
    \Rightarrow&(1-\beta_t)^2\le\frac{-L\eta-1}{L^2\eta^2(3L\eta-5)}\\
    \Rightarrow&(1-\beta_t)^2\le\frac{L\eta+1}{L^2\eta^2(5-3L\eta)}
\end{aligned}
\]
Putting this altogether, for all \(t\ge t^*\), where \(t^*\) is defined as the first timestep for which
\[
    (1-\beta_{t})^2 \le \min\Bigl\{\frac{3-L\eta}{5L^2\eta^2},\frac{L\eta+1}{L^2\eta^2(5-3L\eta)}\Bigr\},
\]
we have that Equation \ref{eq:equation-a} simplifies to:
\begin{equation}
\label{eq:blah}
\begin{aligned}
&\mathbb{E}\bigl[V_{t+1}\bigr]\le V_t-\frac{\eta}{4(t+1)}
  \|\nabla f(x_t)\|^2 + \frac{\eta}{2(t+1)}\sigma^2\\
\Rightarrow&\frac{\eta}{4(t+1)}||\nabla f(x_t)||^2\le V_t-\mathbb{E}\bigl[V_{t+1}\bigr]+\frac{\eta}{2(t+1)}\sigma^2.\\
\end{aligned}
\end{equation}
Observe that for \(\beta_t=1\Rightarrow t^*=0\). For any \(0\le\beta_t<1\), our analysis suggests that neither the bias convergence rate nor the noise decay improves compared to only using \(\beta_t=1\). Therefore, for the sake of simplicity in completing the telescoping sum on Equation \ref{eq:blah}, take \(\beta_t=1\Rightarrow t^*=0\):
\[
\begin{aligned}
    &\frac{\eta}{4(t+1)}||\nabla f(x_t)||^2\le V_t-\mathbb{E}\bigl[V_{t+1}\bigr]+\frac{\eta}{2(t+1)}\sigma^2\quad\forall t\ge 2\\
    \Rightarrow&\frac{\eta}{4}\sum_{t=2}^{T-1}\frac{1}{t+1}||\nabla f(x_t)||^2\le V_2 + \frac{\sigma^2\eta}{2}\sum_{t=2}^{T-1}\frac{1}{t+1}\\
    \Rightarrow&\biggl(\frac{\eta}{4}\sum_{t=2}^{T-1}\frac{1}{t+1}\biggr)\min_{2\le t\le T-1}||\nabla f(x_t)||^2\le \frac{\eta\log T}{4}\min_{2\le t\le T-1}||\nabla f(x_t)||^2\le V_2 + \frac{\sigma^2\eta\log T}{2}\\
    \Rightarrow&\min_{2\le t\le T-1}||\nabla f(x_t)||^2\le \frac{4V_2}{\eta\log T}+2\sigma^2
\end{aligned}
\]
\end{proof}

\subsection{Theorem 4}
\begin{thm}
Define Schedule-Free by the three sequences:
\[
\begin{aligned}
        y_t &= (1 - \beta_t) z_t + \beta_t x_t,\\
        z_{t+1} &= z_t - \eta_t\nabla f(y_t,\zeta_t)\\
        x_{t+1}&=(1-c_{t+1}) x_t+c_{t+1} z_{t+1},
\end{aligned}
\]

and let \(V_t\) be defined as
\[
    V_t = f(x_t)+A_t||\delta_t||^2,\quad A_t=\frac{c_{t+1}(1-L\eta_t c_{t+1})}{2\eta_t(1-c_{t+1})^2},
\]
where Assumptions \ref{ass:assumption-1}, \ref{ass:assumption-2} and \ref{ass:assumption-3} hold. Then, for \(t\ge2\), \(\eta_t = \eta_0(t+1)\), \(\eta_0\le1/L\), and \(c_{t+1} = 1/(t+1)\),
\[
    \frac{\eta_0}{4}||\nabla f(x_t)||^2\le V_t - \mathbb{E}\bigl[V_{t+1}\bigr]+\frac{\eta_0}{2}\sigma^2+D^2\cdot O\Bigl(\frac{1}{t+1}\Bigr)
\]
which after telescoping yields
\[
    \min_{2\le t\le T-1}||\nabla f(x_t)||^2\le  \frac{4V_2}{\eta_0T}+D^2\cdot O\Bigl(\frac{\log T}{T}\Bigr) + 2\sigma^2.
\]
\end{thm}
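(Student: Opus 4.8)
The plan is to specialize the one-step potential inequality of Lemma~\ref{lem:pot-descent} (equivalently its stochastic form, appendix Lemma~1) to the schedule $c_{t+1}=1/(t+1)$, $\eta_t=\eta_0(t+1)$, $\beta_t=1$, and then absorb the residual $\|\delta_t\|^2$ term using the linear-growth bound of Assumption~\ref{ass:assumption-2}. First I would check the hypothesis $L\eta_t c_{t+1}=L\eta_0\le1$ of Lemma~\ref{lem:pot-descent} (it holds), and record the key structural point: $c_{t+1}\eta_t=\eta_0$ is \emph{constant in $t$}, so the descent term becomes $-\tfrac{\eta_0}{4}\|\nabla f(x_t)\|^2$ with a uniform weight — exactly what upgrades the $1/(t+1)$-weighted sum of Theorem~\ref{thm:const-step} to an unweighted one and produces the $1/T$ rate. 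Setting $\beta_t=1$ kills the two $(1-\beta_t)^2$ terms, leaving
\[
\mathbb{E}[V_{t+1}]\le V_t-\frac{\eta_0}{4}\|\nabla f(x_t)\|^2+\frac{\eta_0}{2}\sigma^2+\Gamma_t\|\delta_t\|^2,
\qquad
\Gamma_t=\frac{c_{t+1}}{2\eta_t}-\frac{c_t\bigl(1-L\eta_tc_t\bigr)}{2\eta_t(1-c_t)^2}.
\]

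The second step is to show $\Gamma_t=O\!\bigl(1/(t+1)^2\bigr)$. Plugging in $c_{t+1}=1/(t+1)$, $c_t=1/t$, $\eta_t=\eta_0(t+1)$, the leading piece $c_{t+1}/(2\eta_t)=1/(2\eta_0(t+1)^2)$ is already $O(1/t^2)$, and the $A$-difference term is a ratio of comparable order, so $\Gamma_t=O(1/t^2)$ regardless of sign; unlike in Theorems~\ref{thm:const-step} and~\ref{thm:poly-avg}, the two halves do \emph{not} combine into a nonpositive coefficient (at the boundary $\eta_0=1/L$ the $A$-term degenerates and $\Gamma_t>0$). Combining with Assumption~\ref{ass:assumption-2}, $\Gamma_t\|\delta_t\|^2\le \Gamma_t D^2(t+1)=D^2\cdot O\!\bigl(1/(t+1)\bigr)$, which gives the stated per-step inequality $\tfrac{\eta_0}{4}\|\nabla f(x_t)\|^2\le V_t-\mathbb{E}[V_{t+1}]+\tfrac{\eta_0}{2}\sigma^2+D^2\cdot O(1/(t+1))$.

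The final step is to take total expectations and telescope from $t=2$ to $T-1$: using $V_t=f(x_t)-f^\star+A_t\|\delta_t\|^2\ge0$ (so $\mathbb{E}[V_T]\ge0$) and $\sum_{t=2}^{T-1}1/(t+1)=O(\log T)$,
\[
\frac{\eta_0}{4}\sum_{t=2}^{T-1}\mathbb{E}\|\nabla f(x_t)\|^2\le V_2+\frac{\eta_0}{2}\sigma^2(T-2)+D^2\cdot O(\log T);
\]
dividing by $\tfrac{\eta_0}{4}(T-2)=\Theta(\eta_0 T)$ and lower-bounding the sum by $(T-2)\min_{2\le t\le T-1}\|\nabla f(x_t)\|^2$ yields $\min_{2\le t\le T-1}\|\nabla f(x_t)\|^2\le \tfrac{4V_2}{\eta_0 T}+D^2\cdot O(\log T/T)+2\sigma^2$, and the deterministic Theorem~\ref{thm:linear-step} is the $\sigma=0$ specialization.

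I expect the main obstacle to be the $O(1/(t+1)^2)$ estimate on $\Gamma_t$: this is where the argument genuinely departs from the constant-step case, since the $c_{t+1}/(2\eta_t)$ and $A$-difference pieces no longer cancel to a nonpositive coefficient, and one must verify the leftover is of order $1/t^2$ so that, against the permitted growth $\|\delta_t\|^2\le D^2(t+1)$, the tail $\sum_t\Gamma_t\|\delta_t\|^2$ converges up to only a $\log T$ factor. That logarithm — rather than a clean $O(1/T)$ — is precisely the price of controlling $\|\delta_t\|^2$ through Assumption~\ref{ass:assumption-2}, which is itself only PEP-validated, so the rate is conditional on that linear-growth hypothesis.
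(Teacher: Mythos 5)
Your proposal is correct and follows essentially the same route as the paper's own proof: specialize Lemma~\ref{lem:pot-descent} to $c_{t+1}=1/(t+1)$, $\eta_t=\eta_0(t+1)$, $\beta_t=1$, observe that $c_{t+1}\eta_t=\eta_0$ is constant so the descent weight is uniform, show the leftover $\|\delta_t\|^2$ coefficient is $O(1/(t+1)^2)$ and (as you note) asymptotically positive rather than cancelable, then invoke Assumption~\ref{ass:assumption-2} and telescope against $\sum_t 1/(t+1)=O(\log T)$. The only difference is presentational --- you argue the $O(1/t^2)$ order of $\Gamma_t$ directly, whereas the paper expands the numerator polynomial explicitly --- and both yield the same bound.
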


\begin{proof}
By Lemma \ref{lem:pot-descent} we have:
\begin{equation}
\label{eq:linear-step-pot-func-proof}
\begin{aligned}
\mathbb{E}\bigl[V_{t+1}\bigr] &\le V_t
-\frac{c_{t+1}\eta_t}{4}
  \|\nabla f(x_t)\|^2 + \frac{c_{t+1}\eta_t}{2}\sigma^2\\
&\quad+\Bigr(\frac{c_{t+1}}{2\eta_t}-\frac{c_t(1-L\eta_t c_t)}{2\eta_t(1-c_t)^2}
-\frac{3L^3c_{t+1}^2\eta_t^2}{2}(1-\beta_t)^2
+\frac{5L^2c_{t+1}\eta_t}{2}(1-\beta_t)^2\Bigr)||\delta_t||^2.
\end{aligned}
\end{equation}

Substituting \(c_{t+1}=1/(t+1)\) and \(\eta_t=\eta_0(t+1)\) yields:
\begin{equation}
\label{eq:equation-b}
\begin{aligned}
\mathbb{E}\bigl[V_{t+1}\bigr] &\le V_t
-\frac{\eta_t}{4(t+1)}
  \|\nabla f(x_t)\|^2 + \frac{\eta_t}{2(t+1)}\sigma^2\\
&\quad+\Bigr(\frac{1}{2\eta_t(t+1)}-\frac{(1-\tfrac{L\eta_t}{t})}{2\eta_t t(\tfrac{t-1}{t})^2}
-\frac{3L^3\eta_t^2}{2(t+1)^2}(1-\beta_t)^2
+\frac{5L^2\eta_t}{2(t+1)}(1-\beta_t)^2\Bigr)||\delta_t||^2\\
&= V_t-\frac{\eta_t}{4(t+1)}\|\nabla f(x_t)\|^2 + \frac{\eta_t}{2(t+1)}\sigma^2\\
&\quad+\Bigl(\frac{(L\eta_t - 3)t+(L\eta_t + 1)}{2\eta_t(t+1)(t-1)^2}+\frac{L^2\eta_t}{2(t+1)}\Bigl(5-\frac{3L\eta_t}{t+1}\Bigr)(1-\beta_t)^2\Bigr)||\delta_t||^2\\
&=V_t-\frac{c_{t+1}\eta_t}{4}\|\nabla f(x_t)\|^2 + \frac{c_{t+1}\eta_t}{2}\sigma^2\\
&\quad+\Bigl(\frac{(L\eta_t - 3)t+(L\eta_t + 1)}{2\eta_t(t+1)(t-1)^2} + \frac{L^2\eta_t(1-\beta_t)^2}{2(t+1)^2}\bigl(5t + 5-3L\eta_t\bigr)\Bigr)||\delta_t||^2\\
&=V_t-\frac{c_{t+1}\eta_t}{4}\|\nabla f(x_t)\|^2 + \frac{c_{t+1}\eta_t}{2}\sigma^2\\
&\quad+\Bigl(\frac{(L\eta_t-3)t^2+(L\eta_t-2)t+(L\eta_t+1)}{2\eta_t(t+1)^2(t-1)^2}+\\
&\quad\quad+\frac{5L^2\eta_t^2t^2-3L^3\eta_t^3t+(3L^3\eta_t^3-5L^2\eta_t^2)}{2\eta_t(t+1)^2(t-1)^2}(1-\beta_t)^2\Bigr)||\delta_t||^2\\
&= V_t-\frac{c_{t+1}\eta_t}{4}\|\nabla f(x_t)\|^2 + \frac{c_{t+1}\eta_t}{2}\sigma^2\\
&\quad+\frac{1}{2\eta_t(t+1)^2(t-1)^2}\biggl((L\eta_t-3)t^2+(L\eta_t-2)t+(L\eta_t+1)+\\
&\quad\quad+\Bigl(5L^2\eta_t^2t^2-3L^3\eta_t^3t+(3L^3\eta_t^3-5L^2\eta_t^2)\Bigr)(1-\beta_t)^2\biggr)||\delta_t||^2\\
&= V_t -\frac{c_{t+1}\eta_t}{4}\|\nabla f(x_t)\|^2 
+\frac{c_{t+1}\eta_t}{2}\sigma^2\\
&\quad+\frac{1}{2\eta_t(t+1)^2(t-1)^2}\Bigl(\bigl[(L\eta_t-3)+5L^2\eta_t^2(1-\beta_t)^2\bigr]t^2\\
&\quad\quad+\bigl[(L\eta_t-2)-3L^3\eta_t^3(1-\beta_t)^2\bigr]t\\
&\quad\quad+\bigl[(L\eta_t+1)+(3L^3\eta_t^3-5L^2\eta_t^2)(1-\beta_t)^2\bigr]
\Bigr)\|\delta_t\|^2\\
&=V_t -\frac{\eta_0}{4}\|\nabla f(x_t)\|^2 
+\frac{\eta_0}{2}\sigma^2\\
&\quad+\frac{1}{2\eta_0(t+1)^3(t-1)^2}\Bigl(\bigl[(L\eta_0(t+1)-3)+5L^2\eta_0^2(t+1)^2(1-\beta_t)^2\bigr]t^2\\
&\quad\quad+\bigl[(L\eta_0 (t+1)-2)-3L^3\eta_0^3 (t+1)^3(1-\beta_t)^2\bigr]t\\
&\quad\quad+\bigl[(L\eta_0 (t+1)+1)+(3L^3\eta_0^3 (t+1)^3-5L^2\eta_0^2 (t+1)^2)(1-\beta_t)^2\bigr]
\Bigr)\|\delta_t\|^2\\
&=V_t -\frac{\eta_0}{4}\|\nabla f(x_t)\|^2 
+\frac{\eta_0}{2}\sigma^2\\
&\quad+\frac{1}{2\eta_0(t+1)^3(t-1)^2}\biggl[(t+1)\bigl(L\eta_0(t^2+t+1)-(3t-1)\bigr)\\
&\quad\quad+L^2\eta_0^2(5-3L\eta_0)\,(t-1)(t+1)^3\,(1-\beta_t)^2\biggr]||\delta_t||^2\\
&=V_t -\frac{\eta_0}{4}\|\nabla f(x_t)\|^2 
+\frac{\eta_0}{2}\sigma^2\\
&\quad + \biggl[\frac{L(t^2+t+1)-(3t-1)}{2(t+1)^2(t-1)^2}+\frac{L^2\eta_0(5-3L\eta_0)(1-\beta_t)^2}{2}\biggr]||\delta_t||^2.
\end{aligned}
\end{equation}

Consider the coefficient multiplying the \(||\delta_t||^2\) term:
\[
    E(t) = \frac{L(t^2+t+1)-(3t-1)}{2(t+1)^2(t-1)^2}+\frac{L^2\eta_0(5-3L\eta_0)(1-\beta_t)^2}{2(t+1)},
\]
which we want to minimize as much as possible. Clearly, \(\beta_t\) should equal one to cancel out the second term in \(E(t)\). This leaves us with:
\[
    E(t) = \frac{L(t^2+t+1)-(3t-1)}{2(t+1)^2(t-1)^2}=\frac{Lt^2+(L-3)t+(L+1)}{2(t+1)^2(t-1)^2}.
\]

Observe that \(E(t)\) has a positive leading coefficient and is therefore asymptotically positive. 
\[
\begin{aligned}
    \mathbb{E}\bigl[V_{t+1}\bigr] &\le V_t -\frac{\eta_0}{4}\|\nabla f(x_t)\|^2+\frac{\eta_0}{2}\sigma^2+ \frac{L(t^2+t+1)-(3t-1)}{2(t+1)^2(t-1)^2}||\delta_t||^2\\
    &\le V_t -\frac{\eta_0}{4}\|\nabla f(x_t)\|^2 +\frac{\eta_0}{2}\sigma^2+ O\Bigl(\frac{1}{(t+1)^2}\Bigr)||\delta_t||^2.
\end{aligned}
\]

Furthermore, by Assumption \ref{ass:assumption-2} (there exists some constant \(D\) such that \(||\delta_t||^2\le D^2(t+1)\)), we have:
\[
\begin{aligned}
    &\mathbb{E}\bigl[V_{t+1}\bigr]\le V_t -\frac{\eta_0}{4}\|\nabla f(x_t)\|^2 +\frac{\eta_0}{2}\sigma^2+ D^2\cdot O\Bigl(\frac{1}{t+1}\Bigr)\\
    \Rightarrow&\frac{\eta_0}{4}||\nabla f(x_t)||^2\le V_t - \mathbb{E}\bigl[V_{t+1}\bigr]+\frac{\eta_0}{2}\sigma^2+D^2\cdot O\Bigl(\frac{1}{t+1}\Bigr).
\end{aligned}
\]

Telescoping yields:
\[
\begin{aligned}
    &\frac{\eta_0}{4}\sum_{t=2}^{T-1}||\nabla f(x_t)||^2\le V_2 +\frac{\eta_0}{2}\sigma^2T+D^2\cdot O\Bigl(\log T\Bigr)\\
    \Rightarrow&\frac{\eta_0}{4}\sum_{t=2}^{T-1}\min_{2\le t\le T-1}||\nabla f(x_t)||^2 \le V_2 +\frac{\eta_0}{2}\sigma^2T+D^2\cdot O\Bigl(\log T\Bigr)\\
    \Rightarrow& \min_{2\le t\le T-1}||\nabla f(x_t)||^2\le \frac{4V_2}{\eta_0T}+D^2\cdot O\Bigl(\frac{\log T}{T}\Bigr) + 2\sigma^2,
\end{aligned}
\]
which is the desired rate.

\end{proof}

\subsection{Theorem 5}
\begin{thm}
Define Schedule-Free by the three sequences:
\[
\begin{aligned}
        y_t &= (1 - \beta_t) z_t + \beta_t x_t,\\
        z_{t+1} &= z_t - \eta_t\nabla f(y_t,\zeta_t)\\
        x_{t+1}&=(1-c_{t+1}) x_t+c_{t+1} z_{t+1},
\end{aligned}
\]

and let \(V_t\) be defined as
\[
    V_t = f(x_t)+A_t||\delta_t||^2,\quad A_t=\frac{c_{t+1}(1-L\eta_t c_{t+1})}{2\eta_t(1-c_{t+1})^2},
\]
where Assumptions \ref{ass:assumption-1} and \ref{ass:assumption-3} hold. Then, for \(t\ge2\), constant \(\eta \le 1/L\) \(\beta_t=1\), and \(c_{t+1} = 1/(t+1)^\alpha\),
\[
    \mathbb{E}\bigl[V_{t+1}\bigr] \le V_t -\frac{\eta}{4(t+1)^\alpha}\|\nabla f(x_t)\|^2+\frac{\eta}{2(t+1)^\alpha}\sigma^2
\]
which after telescoping yields
\[
    \min_{2\le t \le T-1}\|\nabla f(x_t)\|^2\le \begin{cases}
        \frac{4V_2\bigl(1-\alpha\bigr)}{\eta\bigl(T^{1-\alpha}-2^{1-\alpha}\bigr)}+2\sigma^2, & \alpha\in[0,1)\\
        \frac{4V_2}{\eta\log T} + 2\sigma^2, & \alpha = 1\\
        O(1),&\alpha > 1
    \end{cases}
\]
\end{thm}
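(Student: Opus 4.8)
I would follow the template of the two preceding theorems, starting from the one-step bound of Lemma~\ref{lem:pot-descent} and specializing the hyperparameters. Taking $\beta_t=1$ kills the two $(1-\beta_t)^2$ terms, leaving
\[
\mathbb{E}\bigl[V_{t+1}\bigr]\le V_t-\frac{c_{t+1}\eta}{4}\|\nabla f(x_t)\|^2+\frac{c_{t+1}\eta}{2}\sigma^2+\frac{1}{2\eta}\Bigl(c_{t+1}-\frac{c_t(1-L\eta c_t)}{(1-c_t)^2}\Bigr)\|\delta_t\|^2 .
\]
The crux is to show the coefficient of $\|\delta_t\|^2$ is $\le0$ for every $t\ge2$; once that holds I simply discard that term and am left with exactly the claimed single-step inequality $\mathbb{E}[V_{t+1}]\le V_t-\tfrac{\eta}{4}c_{t+1}\|\nabla f(x_t)\|^2+\tfrac{\eta}{2}c_{t+1}\sigma^2$.

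For the decreasing schedule $c_{t+1}=(t+1)^{-\alpha}$, monotonicity $c_{t+1}\le c_t$ together with the condition $(1-c_t)^2\le 1-L\eta c_t$ (i.e.\ $c_t(c_t-2+L\eta)\le0$, i.e.\ $c_t\le 2-L\eta$) yields the desired sign; and $c_t=t^{-\alpha}\le1\le 2-L\eta$ since $L\eta\le1$, so it holds for all $t\ge2$ and all $\alpha>0$ (the degenerate $\alpha=0$ case is just gradient descent with step $\eta$, for which the $O(1/T)$ rate is classical and matches the $\alpha=0$ instance of the formula). Rearranging the one-step bound, taking total expectations, telescoping from $t=2$ to $T-1$, using $A_t\ge0$ (valid because $L\eta c_{t+1}\le1$) so that $\mathbb{E}[V_T]\ge0$, and bounding the weighted sum below by $\bigl(\sum_{t=2}^{T-1}c_{t+1}\bigr)\min_{2\le t\le T-1}\|\nabla f(x_t)\|^2$ gives
\[
\min_{2\le t\le T-1}\|\nabla f(x_t)\|^2\le\frac{4V_2}{\eta\sum_{t=2}^{T-1}(t+1)^{-\alpha}}+2\sigma^2 ,
\]
where the noise constant is exactly $2\sigma^2$ because $\sum_{t=2}^{T-1}c_{t+1}$ cancels in that term after the division. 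The three cases are then a routine integral comparison on $\sum_{t=2}^{T-1}(t+1)^{-\alpha}$: it is $\Omega\bigl((T^{1-\alpha}-2^{1-\alpha})/(1-\alpha)\bigr)$ for $\alpha\in[0,1)$, $\Omega(\log T)$ for $\alpha=1$, and bounded below by a positive constant independent of $T$ for $\alpha>1$ (the series converges), which gives the $O(1)$ verdict.

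For the increasing schedule $c_{t+1}=(t/(t+1))^\alpha$ the only new work is the sign of the $\|\delta_t\|^2$ coefficient, since now $c_{t+1}>c_t$ and the monotonicity shortcut is gone. I would instead show $\tfrac{c_t(1-L\eta c_t)}{(1-c_t)^2}\ge1\ge c_{t+1}$, equivalently $1-3c_t+(1+L\eta)c_t^2\le0$; with $L\eta\le1$ this quadratic in $c_t$ is nonpositive on an interval whose left endpoint is at most $\tfrac12$, and $c_t=((t-1)/t)^\alpha$ stays above that endpoint for all $t\ge2$ in the range $\alpha\le1$ of interest (the $t=2$ boundary checked directly), so the coefficient is $\le0$. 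With the $\|\delta_t\|^2$ term gone the telescoping is identical, and the remaining task is a lower bound on $\sum_{t=2}^{T-1}(t/(t+1))^\alpha$: from $\ln\bigl(1-\tfrac1{t+1}\bigr)\ge-\tfrac1t$ we get $(t/(t+1))^\alpha\ge1-\tfrac{\alpha}{t}$, hence $\sum_{t=2}^{T-1}(t/(t+1))^\alpha\ge(T-2)-\alpha\sum_{t=2}^{T-1}\tfrac1t\ge T-2-\alpha\log T$; substituting this into the first term (and keeping $2\sigma^2$ from the cancellation in the noise term) gives $\min_{2\le t\le T-1}\|\nabla f(x_t)\|^2\le\dfrac{4V_2}{\eta T-2\eta-\alpha\eta\log T}+2\sigma^2$.

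The main obstacle in both halves is exactly this sign check on the $\|\delta_t\|^2$ coefficient, and it is genuinely more delicate for the increasing schedule: there one must use that $1-c_t=1-(1-1/t)^\alpha$ is $\Theta(1/t)$ sharply (via two-sided logarithm estimates) so that the $(1-c_t)^2$ in the denominator shrinks fast enough to dominate $c_{t+1}$. Everything downstream — rearrangement, the tower property for the expectations, telescoping, and the integral comparisons — is bookkeeping that parallels the proofs of the two preceding theorems, with Assumption~\ref{ass:assumption-3} used only to carry the $\sigma^2$ terms through Lemma~\ref{lem:pot-descent}.
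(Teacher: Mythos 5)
Your proposal is correct and follows the same skeleton as the paper's proof: specialize Lemma~\ref{lem:pot-descent} with $\beta_t=1$, show the $\|\delta_t\|^2$ coefficient $\tfrac{1}{2\eta}\bigl(c_{t+1}-\tfrac{c_t(1-L\eta c_t)}{(1-c_t)^2}\bigr)$ is nonpositive for $t\ge2$, discard it, telescope, and lower-bound $\sum_{t=2}^{T-1}c_{t+1}$ by integral comparison (with the $2\sigma^2$ surviving exactly because the same sum multiplies the noise term). Where you genuinely differ is the sign check, which is the crux in both halves. The paper expands the coefficient into an explicit rational function of $t^\alpha$, $(t\pm1)^\alpha$ and argues termwise (for the increasing schedule, via Bernoulli/concavity estimates on the numerator $t^{2\alpha}(t^\alpha-(t-1)^\alpha)^2-(t+1)^\alpha(t-1)^\alpha(t^\alpha-L\eta(t-1)^\alpha)$). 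You instead use two structural one-liners: for the decreasing schedule, $c_{t+1}\le c_t$ combined with $(1-c_t)^2\le 1-L\eta c_t$ (equivalent to $c_t\le 2-L\eta$, automatic under $L\eta\le1$); for the increasing schedule, the quadratic condition $1-3c_t+(1+L\eta)c_t^2\le0$, verified because $c_t=((t-1)/t)^\alpha\ge(1/2)^\alpha\ge 1/2$ for $\alpha\le1$ and $t\ge2$ places $c_t$ inside the nonpositivity interval of that quadratic. Your version is cleaner and, in the decreasing case, actually repairs a weakness in the paper: the paper's chain $(t+1)^\alpha\ge\frac{(t^\alpha-1)^2}{t^\alpha-L\eta}\ge\frac{(t^\alpha-1)^2}{t^\alpha}$ bounds the wrong side (it verifies only the weaker inequality against $\frac{(t^\alpha-1)^2}{t^\alpha}$), whereas your argument $\frac{c_t(1-L\eta c_t)}{(1-c_t)^2}\ge c_t\ge c_{t+1}$ closes the gap directly. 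You are also right to flag $\alpha=0$ as degenerate ($c_t=1$ makes $A_t$ undefined), which the paper silently ignores; the remaining bookkeeping (tower property, $\mathbb{E}[V_T]$ bounded below, the harmonic estimate $\sum_{t=2}^{T-1}(t/(t+1))^\alpha\ge T-2-\alpha\log T$) matches the paper's.
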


\begin{proof}\hfill \\ \\
\textbf{Decreasing c-Averaging (}\(\mathbf{c_{t+1}=1/(t+1)^\alpha):}\) By Lemma \ref{lem:pot-descent} we have:
\begin{equation}
\label{eq:dec-c--pot-func-proof}
\begin{aligned}
\mathbb{E}\bigl[V_{t+1}\bigr] &\le V_t
-\frac{c_{t+1}\eta_t}{4}
  \|\nabla f(x_t)\|^2 + \frac{c_{t+1}\eta_t}{2}\sigma^2\\
&\quad+\Bigr(\frac{c_{t+1}}{2\eta_t}-\frac{c_t(1-L\eta_t c_t)}{2\eta_t(1-c_t)^2}
-\frac{3L^3c_{t+1}^2\eta_t^2}{2}(1-\beta_t)^2
+\frac{5L^2c_{t+1}\eta_t}{2}(1-\beta_t)^2\Bigr)||\delta_t||^2.
\end{aligned}
\end{equation}

Substituting \(c_{t+1}=1/(t+1)^\alpha\) and constant \(\eta_t=\eta\) yields:
\begin{equation}
\label{eq:equation-c}
\begin{aligned}
\mathbb{E}[V_{t+1}]
&\le V_t-\frac{\,\eta}{4(t+1)^\alpha}\|\nabla f(x_t)\|^2+\frac{\,\eta}{2(t+1)^\alpha}\,\sigma^2\\
&\quad+\Biggl(\frac{1}{2\eta\,(t+1)^\alpha}-\frac{\displaystyle \frac{1}{t^\alpha}\bigl(1-\tfrac{L\eta}{t^\alpha}\bigr)}{\displaystyle 2\eta\,(1-\tfrac{1}{t^\alpha})^2}\frac{3L^3\eta^2}{2\,(t+1)^{2\alpha}}\,(1-\beta_t)^2+\frac{5L^2\eta}{2\,(t+1)^\alpha}\,(1-\beta_t)^2\Biggr)\|\delta_t\|^2\\[6pt]
&=V_t-\frac{\eta}{4\,(t+1)^\alpha}\|\nabla f(x_t)\|^2+\frac{\eta}{2\,(t+1)^\alpha}\sigma^2\\
&\quad+\Biggl(\frac{1}{2\eta\,(t+1)^\alpha}-\frac{t^\alpha - L\eta}{2\eta\,(t^\alpha - 1)^2}-\frac{3L^3\eta^2}{2\,(t+1)^{2\alpha}}(1-\beta_t)^2+\frac{5L^2\eta}{2\,(t+1)^\alpha}(1-\beta_t)^2\Biggr)\|\delta_t\|^2\\
&=V_t-\frac{\eta}{4\,(t+1)^\alpha}\|\nabla f(x_t)\|^2+\frac{\eta}{2\,(t+1)^\alpha}\sigma^2\\
&\quad+\Bigl(\frac{(t^\alpha-1)^2-(t+1)^\alpha (t^\alpha -L\eta)}{2\eta(t+1)^\alpha (t^\alpha -1)^2} + \frac{L^2\eta\bigl(5t+5-3L\eta\bigr)}{2(t+1)^{2\alpha}}(1-\beta_t)^2\biggr)||\delta_t||^2.
\end{aligned}
\end{equation}

Similarly to our previous proofs, we again see that \(\beta_t=1\) minimizes the coefficient of the \(||\delta_t||^2\) term. This choice of \(\beta_t\) yields the following simplification on Equation \ref{eq:equation-c}:
\begin{equation}
\label{eq:equation-d}
\begin{aligned}
\mathbb{E}[V_{t+1}]
&\le V_t-\frac{\eta}{4\,(t+1)^\alpha}\|\nabla f(x_t)\|^2+\frac{\eta}{2\,(t+1)^\alpha}\sigma^2+\frac{(t^\alpha-1)^2-(t+1)^\alpha (t^\alpha -L\eta)}{2\eta(t+1)^\alpha (t^\alpha -1)^2}||\delta_t||^2.
\end{aligned}
\end{equation}

Now, we analyze when the coefficient of the \(||\delta_t||^2\) term to determine when it is non-positive. Observe that for all \(t\ge2\),
\[
\begin{aligned}
\frac{(t^\alpha-1)^2 - (t+1)^\alpha\,(t^\alpha - L\eta)}
     {2\eta\,(t+1)^\alpha\,(t^\alpha -1)^2}
\le0
&\Longleftrightarrow
(t+1)^\alpha\,(t^\alpha - L\eta) \ge (t^\alpha -1)^2\\
&\Longleftrightarrow (t+1)^\alpha \ge \frac{(t^\alpha -1)^2}{(t^\alpha - L\eta)}\ge \frac{t^{2\alpha}-2t^\alpha + 1}{t^\alpha} = t^\alpha -2+\frac{1}{t^\alpha},
\end{aligned}
\]
which is always true for \(t\ge2\). Therefore, we have that
\begin{equation}
\label{eq:equation-e}
\begin{aligned}
    &\mathbb{E}\bigl[V_{t+1}\bigr] \le V_t -\frac{\eta}{4(t+1)^\alpha}\|\nabla f(x_t)\|^2+\frac{\eta}{2(t+1)^\alpha}\sigma^2,\quad\forall t\ge 2,\;\alpha\ge 0\\
    \Rightarrow&\frac{\eta}{4(t+1)^\alpha}\|\nabla f(x_t)\|^2\le V_t-\mathbb{E}\bigl[V_{t+1}\bigr]+\frac{\eta}{2(t+1)^\alpha}\sigma^2.
\end{aligned}
\end{equation}

Telescoping Equation \ref{eq:equation-e}, we get:
\[
\begin{aligned}
    &\frac{\eta}{4}\sum_{t=2}^{T-1}\frac{1}{(t+1)^\alpha}\|\nabla f(x_t)\|^2\le V_2+\frac{\eta}{2}\sum_{t=2}^{T-1}\frac{1}{(t+1)^\alpha}\sigma^2\\
    \Rightarrow&\Bigl(\frac{\eta}{4}\sum_{t=2}^{T-1}\frac{1}{(t+1)^\alpha}\Bigr)\min_{2\le t \le T-1}\|\nabla f(x_t)\|^2\le V_2+\frac{\eta}{2}\sum_{t=2}^{T-1}\frac{1}{(t+1)^\alpha}\sigma^2\\
    \Rightarrow&\Bigl(\sum_{t=2}^{T-1}\frac{1}{(t+1)^\alpha}\Bigr)\min_{2\le t \le T-1}\|\nabla f(x_t)\|^2\le \frac{4V_2}{\eta}+2\sum_{t=2}^{T-1}\frac{1}{(t+1)^\alpha}\sigma^2.
\end{aligned}
\]
Now, consider the three possible cases for \(\alpha\):
\begin{itemize}
    \item \(\mathbf{\alpha\in[0, 1):}\) \(\sum_{t=2}^{T-1}\frac{1}{(t+1)^\alpha} \ge \frac{T^{1-\alpha}-2^{1-\alpha}}{1-\alpha}\)
    \item \(\mathbf{\alpha=1:}\) \(\sum_{t=2}^{T-1}\frac{1}{(t+1)^\alpha} \ge \log T\)
    \item \(\mathbf{\alpha>1:}\) \(\sum_{t=2}^{T-1}\frac{1}{(t+1)^\alpha} \ge \text{constant}\)
\end{itemize}

Thus, we have
\[
    \min_{2\le t \le T-1}\|\nabla f(x_t)\|^2\le \begin{cases}
        \frac{4V_2\bigl(1-\alpha\bigr)}{\eta\bigl(T^{1-\alpha}-2^{1-\alpha}\bigr)}+2\sigma^2, & \alpha\in[0,1)\\
        \frac{4V_2}{\eta\log T} + 2\sigma^2, & \alpha = 1\\
        O(1),&\alpha > 1
    \end{cases}
\]
which is the desired result for \(c_{t+1}=1/(t+1)^\alpha\).\\

\textbf{Increasing c-Averaging (}\(\mathbf{c_{t+1}=t^\alpha/(t+1)^\alpha):}\) Again, by Lemma \ref{lem:pot-descent} we have:
\begin{equation}
\label{eq:inc-c--pot-func-proof}
\begin{aligned}
\mathbb{E}\bigl[V_{t+1}\bigr] &\le V_t
-\frac{c_{t+1}\eta_t}{4}
  \|\nabla f(x_t)\|^2 + \frac{c_{t+1}\eta_t}{2}\sigma^2\\
&\quad+\Bigr(\frac{c_{t+1}}{2\eta_t}-\frac{c_t(1-L\eta_t c_t)}{2\eta_t(1-c_t)^2}
-\frac{3L^3c_{t+1}^2\eta_t^2}{2}(1-\beta_t)^2
+\frac{5L^2c_{t+1}\eta_t}{2}(1-\beta_t)^2\Bigr)||\delta_t||^2.
\end{aligned}
\end{equation}

Substituting \(c_{t+1}=t^\alpha/(t+1)^\alpha\) and constant \(\eta_t=\eta\) yields:
\begin{equation}
\label{eq:equation-f}
\begin{aligned}
\mathbb{E}[V_{t+1}]
&\le V_t
-\frac{t^\alpha\eta}{4\,(t+1)^\alpha}\,\|\nabla f(x_t)\|^2
+\frac{t^\alpha\eta}{2\,(t+1)^\alpha}\,\sigma^2\\
&\quad+\Biggl(
\frac{t^\alpha}{2\eta\,(t+1)^\alpha}
-\frac{\displaystyle\frac{(t-1)^\alpha}{t^\alpha}
       \bigl(1-\tfrac{L\eta\,(t-1)^\alpha}{t^\alpha}\bigr)}
     {\displaystyle2\eta\bigl(1-\tfrac{(t-1)^\alpha}{t^\alpha}\bigr)^2}
-\frac{3L^3\,\eta^2\,t^{2\alpha}}{2\,(t+1)^{2\alpha}}(1-\beta_t)^2
+\frac{5L^2\,\eta\,t^\alpha}{2\,(t+1)^\alpha}(1-\beta_t)^2
\Biggr)\|\delta_t\|^2\\
&=V_t
-\frac{t^\alpha\eta}{4\,(t+1)^\alpha}\,\|\nabla f(x_t)\|^2
+\frac{t^\alpha\eta}{2\,(t+1)^\alpha}\,\sigma^2\\
&\quad+\Biggl(\frac{t^\alpha}{2\eta\,(t+1)^\alpha}-\frac{(t-1)^\alpha\bigl(t^\alpha - L\eta(t-1)^\alpha\bigr)}{2\eta t^\alpha\bigl(t^\alpha-(t-1)^\alpha\bigr)^2} +L^2\eta\cdot\frac{5L t^\alpha(t+1)^\alpha-3L\eta t^{2\alpha}}{2(t+1)^{2\alpha}}(1-\beta_t)^2\Biggr)||\delta_t||^2\\
&=V_t
-\frac{t^\alpha\eta}{4\,(t+1)^\alpha}\,\|\nabla f(x_t)\|^2
+\frac{t^\alpha\eta}{2\,(t+1)^\alpha}\,\sigma^2\\
&\quad+\Biggl(\frac{t^{2\alpha}(t^\alpha -(t-1)^\alpha)^2-(t+1)^\alpha(t-1)^\alpha\bigl(t^\alpha - L\eta(t-1)^\alpha\bigr)}{2\eta t^\alpha(t+1)^\alpha(t^\alpha -(t-1)^\alpha)^2}\\
&\quad\quad+L^2\eta\cdot\frac{5L t^\alpha(t+1)^\alpha-3L\eta t^{2\alpha}}{2(t+1)^{2\alpha}}(1-\beta_t)^2\Biggr)||\delta_t||^2\\
\end{aligned}
\end{equation}

We choose \(\beta_t=1\) to help minimize the coefficient of the \(||\delta_t||^2\) term. This gives us:
\begin{equation}
\label{eq:equation-g}
\begin{aligned}
\mathbb{E}[V_{t+1}]
&\le V_t
-\frac{t^\alpha\eta}{4\,(t+1)^\alpha}\,\|\nabla f(x_t)\|^2
+\frac{t^\alpha\eta}{2\,(t+1)^\alpha}\,\sigma^2\\
&\quad+\frac{t^{2\alpha}(t^\alpha -(t-1)^\alpha)^2-(t+1)^\alpha(t-1)^\alpha\bigl(t^\alpha - L\eta(t-1)^\alpha\bigr)}{2\eta t^\alpha(t+1)^\alpha(t^\alpha -(t-1)^\alpha)^2}||\delta_t||^2\\
&\le V_t
-\frac{t^\alpha\eta}{4\,(t+1)^\alpha}\,\|\nabla f(x_t)\|^2
+\frac{t^\alpha\eta}{2\,(t+1)^\alpha}\,\sigma^2\\
&\quad+\frac{t^{2\alpha}(t^\alpha -(t-1)^\alpha)^2-(t+1)^\alpha(t-1)^\alpha\bigl(t^\alpha - (t-1)^\alpha\bigr)}{2\eta t^\alpha(t+1)^\alpha(t^\alpha -(t-1)^\alpha)^2}||\delta_t||^2\\
\end{aligned}
\end{equation}

where, in the last step, we used \(L\eta\le1\). To analyze the non-positivity of the remaining \(||\delta_t||^2\) coefficient, observe that we have the following two cases for \(\alpha\):
\begin{itemize}
    \item \(\mathbf{\alpha\in[0,1):}\) \[
\begin{aligned}
&t^{2\alpha}\bigl(t^\alpha-(t-1)^\alpha\bigr)^2
-(t+1)^\alpha(t-1)^\alpha\bigl(t^\alpha-(t-1)^\alpha\bigr)\\
&=\;(t^\alpha-(t-1)^\alpha)
\Bigl[t^{2\alpha}\bigl(t^\alpha-(t-1)^\alpha\bigr)-(t+1)^\alpha(t-1)^\alpha\Bigr]
\end{aligned}
\]
Since \(x\mapsto x^\alpha\) is concave on \(\alpha\in[0,1)\), by Bernoulli’s inequality we have, \[
\begin{aligned}
(t-1)^\alpha &\ge t^\alpha - \alpha\,t^{\alpha-1},
\quad (t+1)^\alpha(t-1)^\alpha \ge t^{2\alpha}\Bigl(1-\tfrac{\alpha}{t^2}\Bigr),
\end{aligned}
\]
\[
\begin{aligned}
\implies\;&t^{2\alpha}\bigl(t^\alpha-(t-1)^\alpha\bigr)
\;\le\;\alpha\,t^{3\alpha-1},\\
\implies\;&t^{2\alpha}\bigl(t^\alpha-(t-1)^\alpha\bigr)-(t+1)^\alpha(t-1)^\alpha
\;\le\;\alpha\,t^{3\alpha-1}-t^{2\alpha}\Bigl(1-\tfrac{\alpha}{t^2}\Bigr)\\
&\;=\;t^{2\alpha-1}\bigl[\alpha\,t^\alpha-(t^2-\alpha)\bigr]
\;<\;0\quad(\forall\,t\ge2,\;0\le\alpha<1).
\end{aligned}
\]
Hence the entire numerator is negative under the stated assumptions.

    \item \(\mathbf{\alpha\ge1:}\) \[
\begin{aligned}
N
&=t^{2\alpha}\bigl(t^\alpha-(t-1)^\alpha\bigr)^2 
\;-\;(t+1)^\alpha(t-1)^\alpha\bigl(t^\alpha - (t-1)^\alpha\bigr),\\
\Delta&:=t^\alpha-(t-1)^\alpha>0.
\end{aligned}
\]
By convexity of \(x^\alpha\) for \(\alpha\ge1\) and Bernoulli’s inequality,
\[
\Delta\ge \alpha\,t^{\alpha-1},\quad
(t+1)^\alpha\le t^\alpha+\alpha\,t^{\alpha-1},\quad
(t-1)^\alpha\ge t^\alpha-\alpha\,t^{\alpha-1}.
\]
Hence
\[
t^{2\alpha}\Delta^2\;\ge\;t^{2\alpha}(\alpha\,t^{\alpha-1})^2
=\alpha^2\,t^{4\alpha-2},
\quad
(t+1)^\alpha(t-1)^\alpha
\le t^{2\alpha}-\alpha^2\,t^{2\alpha-2}.
\]
It follows that
\[
N\;\ge\;\alpha^2\,t^{4\alpha-2}-\bigl(t^{2\alpha}-\alpha^2\,t^{2\alpha-2}\bigr)
=t^{2\alpha-2}\Bigl(\alpha^2\,t^{2\alpha}-(t^2-\alpha^2)\Bigr)
=t^{2\alpha-2}\bigl[\alpha\,t^{\alpha+1}-t^2+\alpha^2\bigr].
\]
But for \(\alpha\ge1\) and \(t\ge1\) we have \(t^{\alpha+1}\ge t^2\) and \(\alpha^2>0\), so
\[
\alpha\,t^{\alpha+1}-t^2+\alpha^2
\ge t^2-t^2+\alpha^2
=\alpha^2>0,
\]
and hence \(N>0\).  Dividing by the positive denominator shows the full coefficient is positive for all 
        \(\alpha\ge1\), \(t\ge1\).

\end{itemize}

Altogether, we have for \(\alpha\in[0,1)\) and \(L\eta\le1\),
\[
\begin{aligned}
    &\mathbb{E}[V_{t+1}]\le V_t
-\frac{t^\alpha\eta}{4\,(t+1)^\alpha}\,\|\nabla f(x_t)\|^2
+\frac{t^\alpha\eta}{2\,(t+1)^\alpha}\,\sigma^2\\
\Rightarrow&\frac{t^\alpha\eta}{4\,(t+1)^\alpha}\,\|\nabla f(x_t)\|^2\le V_t-\mathbb{E}\bigl[V_{t+1}\bigr]+\frac{t^\alpha\eta}{2\,(t+1)^\alpha}\,\sigma^2.
\end{aligned}
\]
Telescoping this Equation yields the following:\[
\begin{aligned}
\biggl(\frac{\eta}{4}\sum_{t=2}^{T-1}\frac{t^\alpha}{(t+1)^\alpha}\biggr)\min_{2\le t\le T-1}\|\nabla f(x_t)\|^2\le V_2+\frac{\sigma^2\eta}{2}\sum_{t=2}^{T-1}\frac{t^\alpha}{(t+1)^\alpha}.
\end{aligned}
\]
Observing that \begin{align*}
\sum_{t=2}^{T-1}\Bigl(\frac{t}{t+1}\Bigr)^\alpha
&=\sum_{k=3}^{T}\Bigl(1-\frac{1}{k}\Bigr)^\alpha
=\sum_{k=3}^{T}\exp\Bigl(\alpha\ln(1-\tfrac1k)\Bigr)\\
&=\sum_{k=3}^{T}\Bigl(1-\tfrac{a}{k}+O(k^{-2})\Bigr)
=(T-2)-\alpha\sum_{k=3}^{T}\frac{1}{k}+O(1)\\
&=(T-2)-\alpha\Bigl(H_T-\tfrac{3}{2}\Bigr)+O(1)
=(T-2)-\alpha\log T+O(1)\\
&\ge T-2-\alpha \log T \quad(\alpha\in[0,1)),
\end{align*}
it follows that
\[
\begin{aligned}
&\frac{\eta}{4}\bigl(T-2-\alpha\log T\bigr)\min_{2\le t\le T-1}\|\nabla f(x_t)\|^2\le V_2+\frac{\sigma^2\eta}{2}\bigl(T-2-\alpha\log T\bigr)\\
\Rightarrow&\min_{2\le t\le T-1}\|\nabla f(x_t)\|^2\le \frac{4V_2}{\eta T-2\eta -\alpha\eta\log T} + 2\sigma^2.
\end{aligned}
\]
This completes the proof.
\end{proof}

\clearpage
\section{Full SDP Formulation for PEP}
\subsection{Decreasing Stepsize Case ($c_k = 1/(t+1)^\alpha$)}
\[
\begin{aligned}
\max_{G,\delta} \quad & \|g_n\|^2 \\
\text{s.t.} \quad & \text{Tr}(G^T A_{i,j} G) \leq f_i - f_j \quad \forall i < j = 0,\dots,n \\
& \text{Tr}(G^T C_k G) \leq 0 \quad \text{for } k=0,\dots,n-1 \quad \text{(update constraints)} \\
& f_0 - f_n \leq D \\
& G = \begin{bmatrix} g_0 & \cdots & g_n \end{bmatrix}^T \in \mathbb{R}^{(n+1)\times d} \\
& \delta = [f_0 \cdots f_n]^T \in \mathbb{R}^{n+1}
\end{aligned}
\]

where:
\begin{itemize}
\item $g_i = \nabla f(x_i)/L$ (normalized gradients)
\item $A_{i,j}$ encodes $L$-smoothness:
\[
A_{i,j} = e_i(e_i-e_j)^T + \tfrac{1}{2}\|e_i-e_j\|^2I
\]
\item $C_k$ encodes the update rules:
\[
C_k = \begin{cases} 
\text{Constraints for } y_t = (1-\beta)z_t + \beta x_t \\
\text{Constraints for } z_{t+1} = z_t - \eta g(y_t) \\
\text{Constraints for } x_{t+1} = (1-c_t)x_k]t + c_t z_{t+1}
\end{cases}
\]
\end{itemize}

\subsection{Increasing Stepsize Case ($c_{t+1} = (t/(t+1))^\alpha$)}
Identical structure to the decreasing case, but with modified $C_k$ matrices reflecting the different stepsize rule.

\subsection{Distance Case ($\eta_t = (t+1)/L$)}
\[
\begin{aligned}
\max_{G,X,\delta} \quad & \|x_n - z_n\|^2 \\
\text{s.t.} \quad & \text{Tr}(G^T A_{i,j} G) \leq f_i - f_j \quad \forall i < j \\
& \text{Tr}([G|X]^T D_k [G|X]) \leq 0 \quad \text{(distance dynamics)} \\
& f_0 - f_n \leq D \\
& G = \begin{bmatrix} g_0 & \cdots & g_n \end{bmatrix}^T \in \mathbb{R}^{(n+1)\times d} \\
& X = \begin{bmatrix} x_0 & \cdots & x_n \end{bmatrix}^T \in \mathbb{R}^{(n+1)\times d}
\end{aligned}
\]

where $D_k$ encodes:
\begin{itemize}
\item The linear stepsize rule: $z_{t+1} = z_t - \tfrac{t+1}{L}g(y_t)$
\item The averaging: $x_{t+1} = \left(1-\tfrac{1}{t+1}\right)x_k + \tfrac{1}{t+1}z_{t+1}$
\end{itemize}

\clearpage
\section{Experimental Setup and Code}
Using the SDP formulations presented in Appendix E, we applied the Python 3.8 PEPit library to validate our analysis of Schedule-Free under the various choices of hyperparameters we considered. \\ \\
For each choice of hyperparameter, we ran PEP for \(n=100\) iterations, with \(\beta_t=1\) (see proofs in earlier Appendices), sweeping values of \(\alpha=\{0.01,\,0.1\,0.5,\,1.0\}\). We also used PEP to empirically justify Assumption \ref{ass:assumption-2} by running the SDP with \(c_{t+1}=1/(t+1),\,\eta_t=\eta_0(t+1),\,\eta_0=1/L\) for \(n=100\) steps. \\ \\ 
Listing 2 contains the code used to solve each of these SDPs and Listing 3 contains the code for generating the Figures presented throughout the paper.

\subsection{PEPit SDP Solver Code}
\begin{lstlisting}[language=Python, caption={PEPit SDP Solver Code}]
# ------------------------------------------------------------------
# Environment guards
# ------------------------------------------------------------------
import os, multiprocessing as mp
os.environ["OMP_NUM_THREADS"]       = "1"
os.environ["MKL_NUM_THREADS"]       = "1"
os.environ["OPENBLAS_NUM_THREADS"]  = "1"

# -----------------------------------------------------------------
# Imports
# -----------------------------------------------------------------
import itertools, pickle, sys, logging
import numpy as np
from concurrent.futures import ProcessPoolExecutor, as_completed
from PEPit import PEP
from PEPit.functions import SmoothFunction

# ------------------------------------------------------------------
# Logging
# ------------------------------------------------------------------
logging.basicConfig(
    level=logging.INFO,
    format="[%(asctime)s] pid%(process)d %(message)s",
    datefmt="%Y-%m-%d %H:%M:%S",
)
log = logging.getLogger(__name__)

# ------------------------------------------------------------------
# Global parameters
# ------------------------------------------------------------------
L, gamma, beta, D = 1.0, 1.0, 1.0, 1.0
ALPHAS = (0.01, 0.1, 0.5, 1.0)
N_STEPS = np.arange(1, 101)
OUTFILE = "pep_results_combined.pkl"

MAX_WORKERS = min(32, os.cpu_count())
CTX = mp.get_context("spawn")

# ------------------------------------------------------------------
# Worker functions
# ------------------------------------------------------------------
def dec_worker(alpha_n):
    try:
        alpha, n = alpha_n
        problem = PEP()
        f = problem.declare_function(SmoothFunction, L=L)
        x0 = problem.set_initial_point()
        x = z = x0
        _, f0 = f.oracle(x0)

        for k in range(n):
            y = (1 - beta) * z + beta * x
            gy, _ = f.oracle(y)
            z -= gamma * gy
            c_k = 1 / (k + 1) ** alpha
            x = (1 - c_k) * x + c_k * z
            gx, _ = f.oracle(x)
            problem.set_performance_metric(gx ** 2)

        problem.set_initial_condition((f0 - f.oracle(x)[1]) <= D)
        tau = problem.solve(wrapper="cvxpy", verbose=0)
        return ("dec", alpha, n, tau)

    except Exception:
        import traceback
        traceback.print_exc()
        sys.stdout.flush(); sys.stderr.flush()
        raise


def inc_worker(alpha_n):
    try:
        alpha, n = alpha_n
        problem = PEP()
        f = problem.declare_function(SmoothFunction, L=L)
        x0 = problem.set_initial_point()
        x = z = x0
        _, f0 = f.oracle(x0)

        for k in range(n):
            y = (1 - beta) * z + beta * x
            gy, _ = f.oracle(y)
            z -= gamma * gy
            c_k = (k / (k + 1)) ** alpha
            x = (1 - c_k) * x + c_k * z
            gx, _ = f.oracle(x)
            problem.set_performance_metric(gx ** 2)

        problem.set_initial_condition((f0 - f.oracle(x)[1]) <= D)
        tau = problem.solve(wrapper="cvxpy", verbose=0)
        return ("inc", alpha, n, tau)

    except Exception:
        import traceback
        traceback.print_exc()
        sys.stdout.flush(); sys.stderr.flush()
        raise


def dist_worker(n):
    try:
        problem = PEP()
        f = problem.declare_function(SmoothFunction, L=L)
        x0 = problem.set_initial_point()
        x = z = x0
        _, f0 = f.oracle(x0)

        for k in range(n):
            y = (1 - beta) * z + beta * x
            gy, _ = f.oracle(y)
            z -= gamma * (k + 1) * gy
            x = (1 - 1 / (k + 1)) * x + (1 / (k + 1)) * z
            gx, _ = f.oracle(x)
            problem.set_performance_metric((x - z) ** 2)

        problem.set_initial_condition((f0 - f.oracle(x)[1]) <= D)
        tau = problem.solve(wrapper="cvxpy", verbose=0)
        return ("dist", None, n, tau)

    except Exception:
        import traceback
        traceback.print_exc()
        sys.stdout.flush(); sys.stderr.flush()
        raise


def lin_step_worker(n):
    try:
        problem = PEP()
        f = problem.declare_function(SmoothFunction, L=L)
        x0 = problem.set_initial_point()
        x = z = x0
        _, f0 = f.oracle(x0)

        for k in range(n):
            y = (1 - beta) * z + beta * x
            gy, _ = f.oracle(y)
            z -= gamma * (k + 1) * gy
            x = (1 - 1/(k+1)) * x + (1/(k+1)) * z
            gx, _ = f.oracle(x)
            problem.set_performance_metric(gx ** 2)

        problem.set_initial_condition((f0 - f.oracle(x)[1]) <= D)
        tau = problem.solve(wrapper="cvxpy", verbose=0)
        return ("lin_step", None, n, tau)

    except Exception:
        import traceback
        traceback.print_exc()
        sys.stdout.flush(); sys.stderr.flush()
        raise

# ------------------------------------------------------------------
# Driver
# ------------------------------------------------------------------
def main():
    if os.path.exists(OUTFILE):
        log.error("%s already exists - aborting.", OUTFILE)
        return

    # Initialize results structure
    results = {
        "decreasing": {a: np.empty_like(N_STEPS, dtype=float) for a in ALPHAS},
        "increasing": {a: np.empty_like(N_STEPS, dtype=float) for a in ALPHAS},
        "distance": np.empty_like(N_STEPS, dtype=float),
        "linear_grad": np.empty_like(N_STEPS, dtype=float),
        "n_steps": N_STEPS
    }

    # Prepare task lists
    tasks_dec = list(itertools.product(ALPHAS, N_STEPS))
    tasks_inc = list(itertools.product(ALPHAS, N_STEPS))
    tasks_dist = list(N_STEPS)
    tasks_lingrad = list(N_STEPS)
    
    TOTAL = len(tasks_dec) + len(tasks_inc) + len(tasks_dist) + len(tasks_lingrad)
    done = 0

    log.info("Launching %d total tasks with %d worker processes...", TOTAL, MAX_WORKERS)

    with ProcessPoolExecutor(max_workers=MAX_WORKERS,
                           mp_context=CTX) as pool:

        # Submit all tasks
        futures = (
            [pool.submit(dec_worker, t) for t in tasks_dec] +
            [pool.submit(inc_worker, t) for t in tasks_inc] +
            [pool.submit(dist_worker, n) for n in tasks_dist] +
            [pool.submit(lin_step_worker, n) for n in tasks_lingrad]
        )

        # Process results as they complete
        for fut in as_completed(futures):
            kind, alpha, n, tau = fut.result()
            idx = n - 1  # convert to 0-based index
            
            if kind == "dec":
                results["decreasing"][alpha][idx] = tau
            elif kind == "inc":
                results["increasing"][alpha][idx] = tau
            elif kind == "dist":
                results["distance"][idx] = tau
            elif kind == "lin_step":
                results["linear_grad"][idx] = tau

            done += 1
            log.info("%4d/%d finished (%s alpha=%s, n=%d)", done, TOTAL, kind, alpha, n)

    # Save results
    with open(OUTFILE, "wb") as fh:
        pickle.dump(results, fh)
    log.info("All %d tasks done - results saved to %s", TOTAL, OUTFILE)

# ------------------------------------------------------------------
if __name__ == "__main__":
    main()
\end{lstlisting}

\subsection{Data Visualization}
\begin{lstlisting}[language=Python, caption={Data Visualization}]
# -----------------------------------------------------------------
# Imports
# -----------------------------------------------------------------
import pickle, numpy as np, pandas as pd, seaborn as sns, matplotlib.pyplot as plt
from matplotlib import rc

# -----------------------------------------------------------------
# Global plotting style
# -----------------------------------------------------------------
rc('font', family='serif', serif='Times')
rc('text', usetex=False)
plt.style.use("seaborn-v0_8-whitegrid")

FIGSIZE = (6, 4)
DPI     = 300
SAVE_KW = dict(bbox_inches='tight', dpi=DPI)

# -----------------------------------------------------------------
# Load results from PEP output
# -----------------------------------------------------------------
PEP_RESULTS_PATH = "pep_results.pkl" 

# Helper formulas
def dec_formula(t, alpha):
    return (t)**(1 - alpha)

def inc_formula(t, alpha):
    return t - 2 - alpha * np.log(t)

def lin_step_formula(t):
    return (t + 1) / np.log(t + 1)

# Load the results
with open(PEP_RESULTS_PATH, "rb") as f:
    results = pickle.load(f)

n_steps = results['n_steps']

# -----------------------------------------------------------------
# Create tidy dataframe for seaborn
# -----------------------------------------------------------------
frames = []
for case in ['decreasing', 'increasing']:
    for alpha in results[case].keys():
        for t in n_steps:
            if t < 3 or (alpha == 1 and case == 'increasing'):
                continue
            tau = results[case][alpha][t-1]
            weight = dec_formula(t, alpha) if case == 'decreasing'  \ else inc_formula(t, alpha)
            frames.append(dict(
                Case   = case.title(),
                Alpha  = rf'$\alpha={alpha}$',
                Step   = t,
                Weighted = tau * weight
            ))
df = pd.DataFrame(frames)

# -----------------------------------------------------------------
# 4.  Generate all plots
# -----------------------------------------------------------------
# Assumption 2: Distance between x_t and z_t
df_dist = pd.DataFrame({'Step': n_steps,
                       'Value': results['distance']})

plt.figure(figsize=FIGSIZE, dpi=DPI)
sns.lineplot(data=df_dist, x='Step', y='Value', marker='o')
plt.title(r"$\eta_t=(t+1)/L,\,c_{t+1}=1/(t+1))$")
plt.xlabel(r"Iteration $t$")
plt.ylabel(r"Upper Bound of $\|x_t - z_t\|^2$")
plt.savefig("distance.pdf", **SAVE_KW)
plt.show()

# Thm 4: Linear stepsize
weighted_grad = [g * lin_step_formula(t)
                for t, g in zip(n_steps, results['linear_grad'])]
df_wgrad = pd.DataFrame({'Step': n_steps,
                        'Value': weighted_grad})

plt.figure(figsize=FIGSIZE, dpi=DPI)
sns.lineplot(data=df_wgrad, x='Step', y='Value', marker='s')
plt.title(r"$\eta_t=(t+1)/L,\,c_{t+1}=1/(t+1))$")
plt.xlabel(r"Iteration $t$")
plt.ylabel(r"Upper Bound of $\|\nabla f(x_t)\|^2\,\frac{t}{\log t}$")
plt.savefig("linear_steps.pdf", **SAVE_KW)
plt.show()

# Thm 5: Decreasing stepsizes
plt.figure(figsize=FIGSIZE, dpi=DPI)
ax = sns.lineplot(data=df[df.Case == 'Decreasing'],
             x='Step', y='Weighted', hue='Alpha',
             style='Alpha', markers=True, dashes=False)
ax.legend(loc='upper right')
ax.set_ylim(bottom=0.0, top=2.0)
plt.title(r"$\eta_t\text{ constant},\,c_{t+1}=1/(t+1)^\alpha$")
plt.xlabel(r"Iteration $t$")
plt.ylabel(r"Upper Bound of $\|\nabla f(x_t)\|^2\,t^{1-\alpha}$")
plt.tight_layout()
plt.savefig("decreasing_steps.pdf", **SAVE_KW)
plt.show()

# Thm 5: Increasing stepsizes
plt.figure(figsize=FIGSIZE, dpi=DPI)
sns.lineplot(data=df[df.Case == 'Increasing'],
             x='Step', y='Weighted', hue='Alpha',
             style='Alpha', markers=True, dashes=False)
plt.title(r"$\eta_t\text{ constant},\,c_{t+1}=t^\alpha/(t+1)^\alpha$")
plt.xlabel(r"Iteration $t$")
plt.ylabel(r"Upper Bound of $\|\nabla f(x_t)\|^2\,(t-2-\alpha\log t)$")
plt.savefig("increasing_steps.pdf", **SAVE_KW)
plt.show()
\end{lstlisting}
\end{document}